\documentclass[journal]{IEEEtran}

\usepackage{amsmath,amsfonts,amssymb,mathtools}
\usepackage{amsthm}
\usepackage{algorithm}
\usepackage{algorithmic}
\usepackage{array}
\usepackage[caption=false,font=normalsize,labelfont=sf,textfont=sf]{subfig}
\usepackage{booktabs}
\usepackage{multirow}
\usepackage[table]{xcolor}
\usepackage{colortbl}
\usepackage{makecell}
\usepackage{tabularx}
\usepackage{adjustbox}
\usepackage{graphicx}
\usepackage[percent]{overpic}
\usepackage{pifont}
\usepackage{enumitem}
\usepackage{textcomp}
\usepackage{stfloats}
\usepackage{float}
\usepackage{url}
\usepackage{verbatim}
\usepackage{xspace}
\usepackage{xfp}
\usepackage{siunitx}
\usepackage{tables/tab_style}
\usepackage{microtype}
\usepackage[numbers,sort&compress]{natbib}
\usepackage{bibunits}
\defaultbibliographystyle{IEEEtranN}
\usepackage[hidelinks]{hyperref}
\usepackage{balance}

\newcommand{\method}{\mbox{\normalfont\scshape SACM}\xspace}

\theoremstyle{plain}
\newtheorem{theorem}{Theorem}[section]

\theoremstyle{definition}

\theoremstyle{remark}

\begin{document}

\title{Towards Robust Time Series Learning via Capacity-Centric Modulation}

\author{Siru~Zhong,
        Senzhang~Wang,
        James~T.~Kwok,~\IEEEmembership{Fellow,~IEEE},
        and~Yuxuan~Liang%
\thanks{Siru Zhong and Yuxuan Liang are with Data Science \& Analytics Thrust, The Hong Kong University of Science and Technology (GZ), Guangzhou, China. Senzhang Wang is with School of Computer Science \& Engineering, Central South University, Changsha, China. James T. Kwok is with Dept. of Computer Science \& Engineering, The Hong Kong University of Science and Technology, Hong Kong. Corresponding author: Yuxuan Liang (yuxliang@outlook.com).}}

\makeatletter
\def\leftmark{Preprint}
\def\rightmark{Preprint}
\makeatother

\maketitle

\begin{abstract}
Sample-level reliability heterogeneity is common in deep time series learning. Standard training pipelines apply a uniform regularization setting to all samples, which can under-regularize corrupted samples and over-restrict clean samples. Common robustness approaches filter observations in data space or impose priors on latent representations. We propose Capacity-Centric Modulation (\textbf{CCM}) as a complementary, sample-adaptive regularization principle. Under this principle, we introduce \textbf{\method} (Sample-Adaptive Capacity Modulation), a task-agnostic framework that exploits spectral sparsity to assign sample-wise dropout probabilities along internal activation paths. \method integrates into existing backbones without architectural redesign and preserves the deterministic inference pipeline. Across 301 real-world dataset--backbone pairs covering 9 forecasting, 32 classification, and 4 anomaly-detection datasets, \method reduces forecasting MSE by 6.7\% on average and improves classification accuracy and point-adjusted F1 by 3.04\% and 17.05\%, respectively, relative to unmodified backbones, with zero test-time overhead.
\end{abstract}

\begin{IEEEkeywords}
Time series modeling, robust learning, capacity-centric modulation, spectral analysis.
\end{IEEEkeywords}

\bstctlcite{pageLimitBibStyle}
\section{Introduction}
\label{sec:introduction}

\begin{figure*}[!b]
    \vspace{-1em}
    \centering
    \includegraphics[width=\linewidth]{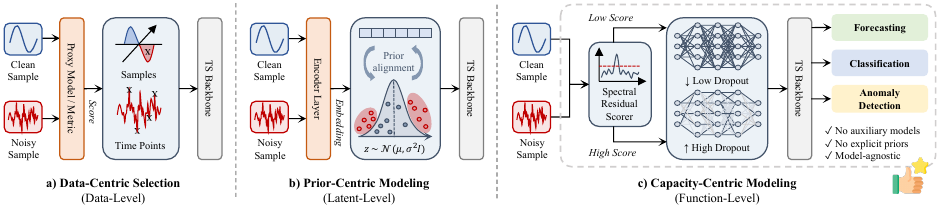}
    \caption{Overview of robustness paradigms and intervention levels. \textbf{a) Data-Centric Selection} removes or reweights observations at the data level; \textbf{b) Prior-Centric Modeling} constrains latent representations; \textbf{c) Capacity-Centric Modulation} modulates effective capacity along internal activation paths for predictive (forecasting), discriminative (classification), and reconstructive (anomaly detection) tasks.}
    \label{fig:motivation}
    \vspace{-1em}
\end{figure*}

\IEEEPARstart{T}{ime} series modeling underpins applications in weather, finance, energy demand, and traffic \cite{li2023longterm,cirstea2022traffic}. Temporal learning spans three primary tasks: \textit{predictive} (forecasting), \textit{discriminative} (classification), and \textit{reconstructive} (anomaly detection). The growing volume and complexity of real-world temporal data have driven rapid innovations in deep temporal architectures, ranging from Recurrent Neural Networks (RNNs) \cite{hochreiter1997long,lai2018modeling} and Convolutional Networks (CNNs) \cite{wu2023timesnet} to Multi-Layer Perceptrons (MLPs) \cite{ekambaram2023tsmixer,wang2024timemixer}, Transformers \cite{zhou2021informer,nie2023patchtst,liu2023itransformer}, and diffusion-based generative models \cite{rasul2021timegrad,tashiro2021csdi}.

Despite these architectural advances, deep time series models face \textit{sample-level reliability heterogeneity} \cite{yamanishi2002unifying,blazquez2021review,yang2025not}. In real-world environments, time series observations often combine diverse signal regimes with unpredictable corruptions. Some samples exhibit clean, stationary periodicity and coherent trends, while others suffer from heavy-tailed measurement noise, transient sensor failures, missing values, or abrupt distributional shifts. Under standard training practices, the model uses a fixed dropout or regularization configuration for every training sample, resulting in the same nominal capacity allocation \cite{hinton2014dropout,wager2013drop,gal2017concrete}. This creates a learning dilemma: a capacity budget expressive enough to capture intricate temporal dynamics in clean samples overfits spurious artifacts in noise-dominated samples, learning incidental fluctuations rather than transferable temporal patterns \cite{cheng2024robusttsf,fu2025selective}.

To alleviate noise overfitting, existing robust time series methods predominantly follow two conventional paradigms:
\begin{itemize}[leftmargin=*]
    \item \textbf{Data-Centric Selection:} Approaches such as RobustTSF \cite{cheng2024robusttsf} and Selective Learning \cite{fu2025selective} frame robustness as a screening problem. They employ auxiliary statistical metrics or proxy networks to identify and exclude corrupted samples or time steps. While effective at filtering outliers, this binary ``keep-or-drop'' screening can cause \textit{information loss}, frequently discarding critical tail events or subtle transitions that mimic noise signatures. Furthermore, such screening mechanisms are typically coupled with task-specific loss heuristics, limiting cross-task applicability.
    \item \textbf{Prior-Centric Modeling:} Approaches such as BayesTSF \cite{Pan2024Bayes} and RSTIB \cite{cheninformation} treat robustness as a representation-disentanglement problem. By incorporating Variational Autoencoders (VAEs) \cite{kingma2013auto} or Bayesian Neural Networks (BNNs), they constrain latent representations to satisfy prior distributions (e.g., Gaussianity). However, real-world temporal dynamics are inherently non-stationary, making \textit{rigid distributional assumptions} brittle while introducing additional inference overhead and optimization complexity.
\end{itemize}

Both paradigms leave sample-adaptive capacity allocation inside the backbone unaddressed. Applying a uniform regularization strength across heterogeneous samples inevitably leads to a capacity mismatch: it under-regularizes corrupted samples (encouraging spurious noise memorization) while over-restricting the expressive fidelity available to clean ones.

\begin{table*}[!t]
    \centering
    \caption{Comparison of robustness paradigms in time series modeling. \method instantiates Capacity-Centric Modulation on the active hypothesis space ($\mathcal{H}$), operating complementarily to conventional data- and prior-centric approaches.}
    \label{tab:robustness_paradigm_comparison}
    \renewcommand{\arraystretch}{1.12}
    \resizebox{\textwidth}{!}{
    \begin{tabular}{l l l l l}
        \toprule
        \rowcolor{TableHeader}
        \textbf{Paradigm} & \textbf{Intervention Space} & \textbf{Core Philosophy} & \textbf{Representative Methods} & \textbf{Key Characteristics \& Limitations} \\
        \midrule

        \textbf{Data-Centric Selection} &
        Data Space ($\mathcal{X}$) &
        \textit{``Filter Noise''} &
        RobustTSF \cite{cheng2024robusttsf} &
        $\bullet$ Discrete keep-or-drop screening \\
        (e.g., Hard Selection) &
        &
        Exclude unreliable observations &
        Selective Learning \cite{fu2025selective} &
        $\bullet$ Risks discarding valid rare events; task-coupled \\
        \midrule

        \textbf{Prior-Centric Modeling} &
        Latent Space ($\mathcal{Z}$) &
        \textit{``Disentangle Noise''} &
        BayesTSF \cite{Pan2024Bayes} &
        $\bullet$ Imposes explicit priors (e.g., Gaussian VAEs/BNNs) \\
        (e.g., Complex Priors) &
        &
        Constrain latent distributions &
        RSTIB \cite{cheninformation} &
        $\bullet$ Sensitive to non-stationarity; heavy inference cost \\
        \midrule

        \rowcolor{TableOurs}
        \textbf{Capacity-Centric Modulation} &
        \textbf{Hypothesis Space ($\mathcal{H}$)} &
        \textbf{\textit{``Coexist with Noise''}} &
        \textbf{\method (Ours)} &
        $\bullet$ \textbf{Sample-adaptive active capacity allocation} \\
        \rowcolor{TableOurs}
        \textbf{(Sample-Adaptive Principle)} &
        &
        \textbf{Match capacity to reliability} &
        &
        $\bullet$ \textbf{Task-agnostic, zero inference overhead, complementary} \\
        \bottomrule
    \end{tabular}}
    \vspace{-1em}
\end{table*}

To address this, we propose \underline{C}apacity-\underline{C}entric \underline{M}odulation (\textbf{CCM}) as a complementary principle for robust time series learning (Figure~\ref{fig:motivation} and Table~\ref{tab:robustness_paradigm_comparison}). Instead of screening observations in the \textit{data space} ($\mathcal{X}$) or enforcing rigid probabilistic priors in the \textit{latent space} ($\mathcal{Z}$), CCM operates on the model's active \textit{hypothesis space} ($\mathcal{H}$) through sample-adaptive regularization.
Along internal activation pathways, CCM scales stochastic regularization conditioned on sample reliability: unreliable samples face stronger pathway dropping to suppress noise memorization, whereas clean samples retain more active capacity to capture intricate temporal dynamics.

Under the CCM principle, we introduce \textbf{\method} (\underline{S}ample-\underline{A}daptive \underline{C}apacity \underline{M}odulation), a unified, task- and model-agnostic framework for robust time series learning. To realize sample-adaptive capacity allocation without expensive noise labels or teacher networks, \method leverages \textit{spectral sparsity} as a label-free inductive bias. In the frequency domain, structured temporal signals often concentrate energy into dominant spectral modes, whereas corruptions and perturbations tend to produce less concentrated or more diffuse spectral residuals \cite{donoho2006compressed,ren2019time}. \method features a differentiable spectral residual scorer that detrends the input, applies a Spectral Flatness Measure (SFM)-anchored filter, and computes the reconstruction residual as an unreliability proxy. A learnable rate mapper translates this residual into sample-specific stochastic retention rates along the network's trainable paths. Via the Straight-Through Estimator (STE), \method optimizes capacity allocation jointly with the backbone end-to-end. During evaluation, capacity modulation is bypassed, preserving the deterministic inference pipeline with zero test-time overhead.

\noindent\textbf{Relationship to Prior Conference Work.} Compared to our preliminary conference paper \textsc{DropoutTS}~\cite{zhong2026dropoutts}, which introduced an empirical spectral dropout heuristic solely for time series forecasting, this journal article systematically advances the formulation into a unified, theoretically motivated framework with three principal extensions:
\begin{enumerate}[leftmargin=*]
    \item \textbf{Capacity-Centric Perspective:} We reframe sample-level dropout from a heuristic into \textit{Capacity-Centric Modulation (CCM)}, establishing it as a complementary principle alongside data- and prior-centric modeling (\S\ref{sec:introduction}, \S\ref{sec:motivation}).
    \item \textbf{Principled Framework \& End-to-End Design:} Moving beyond heuristic dropping, we develop a systematic, label-free pipeline featuring an SFM-anchored spectral scorer, a learnable rate mapper, and STE-based joint optimization, supported by a surrogate risk analysis (\S\ref{sec:methodology}, \S\ref{sec:theory}).
    \item \textbf{Cross-Task \& In-Depth Empirical Generalization:} We extend the framework beyond forecasting to classification across 32 benchmarks and anomaly detection across 4 benchmarks, covering 220+ matched dataset--backbone configurations (\S\ref{sec:rq4}), together with zero-shot transfer and controlled ablations (\S\ref{sec:rq3}, \S\ref{sec:rq5}).
\end{enumerate}

\noindent\textbf{Experimental Roadmap.} We organize the empirical study around five questions: robustness to controlled corruption (RQ1), real-world forecasting and sample-wise capacity allocation (RQ2), data regimes and zero-shot transfer (RQ3), cross-task generality in classification and anomaly detection (RQ4), and ablations, efficiency, and complementarity with data selection (RQ5). Section~\ref{sec:experiments} gives the full protocol and reports each question in a dedicated subsection.

In summary, our contributions are summarized as follows:
\begin{itemize}[leftmargin=*]
    \item \textbf{Capacity-Centric Regularization Principle:} We formulate CCM to address the capacity mismatch in uniform regularization via sample-adaptive capacity allocation.
    \item \textbf{Unified Task-Agnostic Framework:} We develop \method, a label-free, spectral-guided instantiation of CCM that adaptively modulates active capacity per sample with training-only overhead and zero inference overhead.
    \item \textbf{Cross-Task Evaluation:} Across 301 dataset--backbone configurations covering forecasting, classification, and anomaly detection, \method yields widespread empirical gains, faster convergence, and complementarity with data-centric methods.
\end{itemize}

\section{Related Work}
\label{sec:related_work}

\noindent\textbf{Deep Architectures for Time Series Modeling.}
Deep learning has advanced time series modeling across multiple tasks by capturing nonlinear temporal dependencies and learning transferable representations \cite{ma2024survey}. Early sequential architectures based on RNNs \cite{lai2018modeling} and LSTMs \cite{hochreiter1997long} suffered from gradient degradation and limited parallelism. Transformer-based architectures overcame these bottlenecks via self-attention and structural decomposition, including Informer \cite{zhou2021informer}, Autoformer \cite{wu2021autoformer}, FEDformer \cite{zhou2022fedformer}, Crossformer \cite{zhang2022crossformer}, PatchTST \cite{nie2023patchtst}, and iTransformer \cite{liu2023itransformer}. In parallel, linear and MLP-based architectures such as DLinear \cite{zeng2023transformers}, TSMixer \cite{ekambaram2023tsmixer}, and TimeMixer \cite{wang2024timemixer} emerged as competitive lightweight alternatives, while diffusion-based and other specialized models have been developed for forecasting and imputation \cite{rasul2021timegrad,tashiro2021csdi,li2023longterm,cirstea2022traffic}. Beyond forecasting, representation learning underpins time series classification \cite{fawaz2020inceptiontime,dempster2021minirocket,zhang2024logora} and unsupervised anomaly detection \cite{hundman2018detecting,su2019robust,xu2021anomaly,tuli2022tranad,yang2023dcdetector,kieu2022robust,wu2023benchmarks,zhou2024labelfree,blazquez2021review}. Despite these innovations, standard training pipelines apply a fixed regularization configuration across all samples, leaving models vulnerable to sample-level reliability heterogeneity.

\noindent\textbf{Robustness Paradigms for Time Series.}
Existing methods addressing time series corruptions predominantly operate in two spaces: \textit{1) Data-Centric Selection:} Operating in data space $\mathcal{X}$, RobustTSF \cite{cheng2024robusttsf} identifies outlier samples (each a complete input window), while Selective Learning \cite{fu2025selective} excludes high-uncertainty time steps, and data-centric studies reweight or clean training data \cite{wu2025dataopt,liang2024svlearner}. Hard screening suppresses noise but can discard valid rare events and tail dynamics \cite{yamanishi2002unifying}. \textit{2) Prior-Centric Modeling:} Operating in latent space $\mathcal{Z}$, methods such as BayesTSF \cite{Pan2024Bayes} and RSTIB \cite{cheninformation} leverage Variational Autoencoders \cite{kingma2013auto} or Bayesian Neural Networks to constrain representations. Rigid distributional priors (e.g., Gaussianity) struggle with non-stationary real-world distributions and incur heavy inference overhead. In parallel, frequency-domain filtering techniques such as FiLM \cite{zhou2022film}, TimeFilter \cite{hutimefilter}, and multi-resolution time-frequency analysis \cite{yan2024multiresolution} design specialized network layers, but they do not regulate effective capacity conditioned on sample-level temporal reliability.{\parfillskip=0pt\par}

\noindent\textbf{Adaptive Regularization and Capacity Control.}
Dropout \cite{hinton2014dropout} is a standard stochastic regularizer, and Bayesian extensions such as Variational Dropout \cite{Kingma2015VariationalDA} and Concrete Dropout \cite{gal2017concrete} enable learnable dropout rates across layers or parameters. These formulations optimize static, dataset-level parameters and remain sample-agnostic, sampling stochastic masks from fixed retention distributions regardless of input unreliability. \method bridges this gap by formalizing Capacity-Centric Modulation, providing an end-to-end regularizer that adapts the model's active capacity to each sample's reliability across diverse time series tasks. Compared to our preliminary conference version \textsc{DropoutTS} \cite{zhong2026dropoutts}, which introduced a forecasting-specific heuristic, this article formalizes CCM into a unified framework spanning forecasting, classification, and anomaly detection.

\section{Spectral Sparsity as Inductive Bias}
\label{sec:motivation}

\subsection{Problem Formulation and Capacity Allocation Dilemma}
Let $\mathcal{D}=\{(\mathbf{x}_i,\mathbf{y}_i)\}_{i=1}^N$ be a time series dataset, where each sample is an input window $\mathbf{x}_i \in \mathbb{R}^{L \times C}$ and a task-dependent target $\mathbf{y}_i$. $N$ denotes the number of training samples, $L$ is the context length, $C$ is the number of channels, and $\mathcal{L}(\cdot, \cdot)$ is the task-specific objective loss function; for forecasting, $H$ denotes the prediction horizon. The target $\mathbf{y}_i$ covers diverse downstream tasks: future horizons $\mathbf{y}_i \in \mathbb{R}^{H \times C}$ in forecasting, categorical class labels $\mathbf{y}_i \in \{1,\dots,K\}$ in classification, or self-reconstruction targets
$\mathbf{y}_i = \mathbf{x}_i$ in anomaly detection. Contrastive detectors such as DCdetector~\cite{yang2023dcdetector} instead optimize agreement between input-view representations without external target labels. We write the backbone and task head as $f_\theta(\mathbf x;\mathbf m)$, where $\theta$ contains trainable parameters and frozen ones are held fixed. The argument $\mathbf m$ collects dimensionless multipliers applied to internal activations. We abbreviate $f_\theta(\mathbf x;\mathbf 1)$ as $f_\theta(\mathbf x)$.

Conceptually, observed temporal sequences combine dominant structural dynamics with residual variations: $\mathbf{x}_i = \mathbf{x}_i^\star + \mathbf{r}_i$, where $\mathbf{x}_i^\star$ represents structured dynamics (e.g., physical trends, cyclic seasonality, and phase-coherent transitions), and $\mathbf{r}_i$ captures residual variations such as measurement noise, local spikes, sensor glitches, missing values, and irregular fluctuations (Figure~\ref{fig:signal_noise_def}). Let $\sigma_i=\|\mathbf r_i\|$ denote the underlying residual magnitude. Since $\mathbf r_i$ is unobserved, Section~\ref{sec:noise_scoring} constructs a spectral residual score $s_i$ as an empirical proxy. Standard empirical risk minimization with fixed regularization solves
\begin{equation}
\min_{\theta}\sum_{i=1}^{N}\mathcal{L}(f_\theta(\mathbf{x}_i),\mathbf{y}_i)+\mathcal{R}(\theta),
\end{equation}
where $\mathcal{R}(\theta)$ denotes an explicit regularizer (such as an $\ell_2$ penalty), typically paired with fixed stochastic regularization (such as uniform dropout) applied identically across all training samples.
When $\sigma_i$ varies substantially across samples, this static configuration creates an inherent dilemma:

\begin{figure}[!t]
    \centering
    \includegraphics[width=\columnwidth]{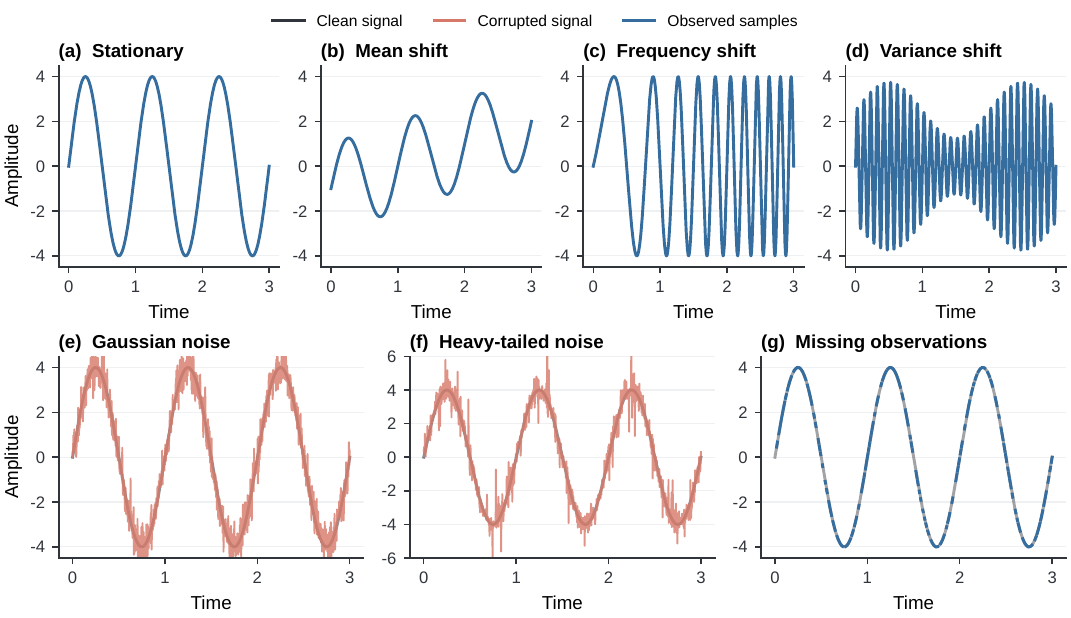}
    \caption{\textbf{Schematic illustration of representative synthetic temporal components and corruption profiles.} \textbf{Top:} Structured dynamics $\mathbf{x}^\star$ from stationary periodicity to non-stationarity in mean (Trend), frequency (Chirp), and amplitude (AM). \textbf{Bottom:} Residual variations $\mathbf{r}$ representing Gaussian noise, heavy-tailed spikes, and randomly scattered missing observations.}
    \label{fig:signal_noise_def}
    \vspace{-1em}
\end{figure}

\begin{itemize}[leftmargin=*]
    \item
    \textbf{Insufficient Model Regularization for Unreliable Samples:} When residual variation $\mathbf{r}_i$ is substantial, an insufficiently regularized model tends to fit transient, non-transferable fluctuations, degrading performance on unseen test samples.
    \item \textbf{Excessive Model Regularization for Reliable Samples:} When a sample exhibits clean, coherent dynamics ($\mathbf{r}_i \approx \mathbf{0}$), aggressive static regularization unnecessarily constrains the model's expressive fidelity, impeding intricate pattern capture.
\end{itemize}

Resolving this conflict requires \textit{modulating the model's effective capacity during training for each sample}. In real-world scenarios, the ground-truth decomposition $\mathbf{r}_i$ and unreliability labels are unavailable, so a label-free proxy is needed to guide sample-level regularization.

\begin{figure}[!t]
    \centering
    \includegraphics[width=\columnwidth]{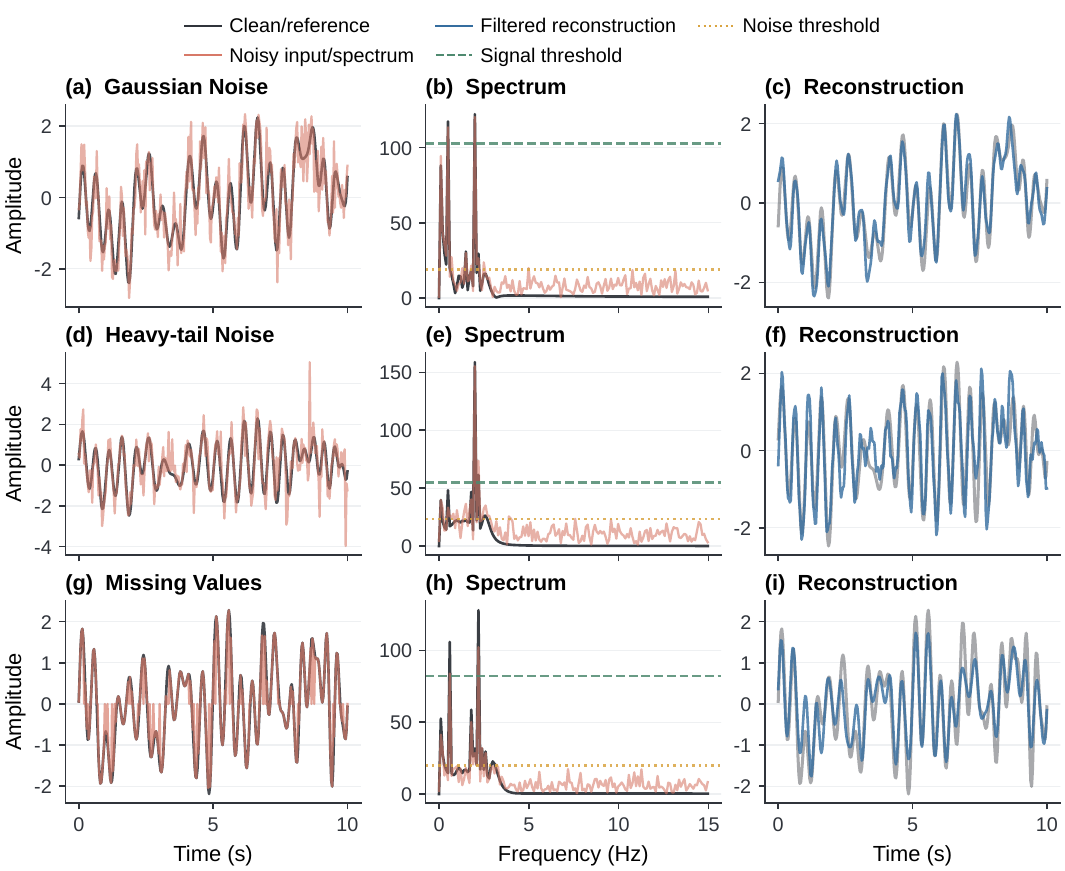}
    \caption{\textbf{Illustration of Spectral Sparsity.} Composite (Periodic + Trend + Chirp + AM) under corruption. \textbf{Left:} Corrupted inputs. \textbf{Middle:} Frequency spectra: structured dynamics concentrate in discrete peaks while corruptions often spread energy across more frequencies. \textbf{Right:} Reconstruction from dominant modes faithfully recovers the underlying ground-truth signal.}
    \label{fig:spectral_analysis}
    \vspace{-1em}
\end{figure}

\subsection{Empirical Verification of Spectral Sparsity}
\label{sec:empirical_verification}

To construct a label-free unreliability proxy, \method exploits the empirical principle of \textit{spectral sparsity} \cite{donoho2006compressed,candes2006nearoptimal}. This inductive bias suits structured temporal dynamics whose energy concentrates in the dominant spectral modes. In the tested settings, stochastic perturbations, localized spikes, and missing-value patterns introduce residual components less concentrated than the dominant structure, motivating our spectral scorer.

By Fourier linearity, the spectrum of an observed sequence satisfies $\mathcal{F}(\mathbf{x}) = \mathcal{F}(\mathbf{x}^\star) + \mathcal{F}(\mathbf{r})$. When the energy of $\mathcal{F}(\mathbf{x}^\star)$ is sparse, retaining the dominant spectral modes while filtering out non-dominant frequencies reconstructs the underlying structure $\mathbf{x}^\star$ while capturing nuisance components in the residual. We empirically validate this across varied temporal regimes:
\begin{itemize}[leftmargin=*]
    \item \noindent\textbf{Evidence across Synthetic and Real-World Data.} As demonstrated in Figure~\ref{fig:spectral_analysis}, retaining only the top \textbf{1\%} dominant Fourier coefficients with the highest energy ($\tau$ at the $99^{\mathrm{th}}$ percentile) reconstructs composite temporal signals with high fidelity across diverse synthetic corruptions; similarly, top-10\% spectral thresholding across extensive real-world benchmarks consistently preserves underlying macro dynamics (detailed in the Supplementary Material).
    \item \noindent\textbf{Spectral Flatness Measure (SFM) as a Concentration Anchor.} To quantify spectral dispersion, we use the SFM, defined as the ratio of the geometric to arithmetic mean of the power spectrum. SFM approaches 1 for flat spectra and decreases as spectral energy becomes concentrated. Figure~\ref{fig:snr_analysis} shows a strong negative correlation ($r = -0.846$, $p < 0.05$)
    between SFM and a proxy SNR (defined as the ratio of spectral energy in the top 10\% highest-amplitude Fourier coefficients to that in the remaining coefficients), supporting SFM as an anchor for automated threshold calibration.
\end{itemize}

\begin{figure}[!b]
    \vspace{-1em}
    \centering
    \includegraphics[width=\columnwidth]{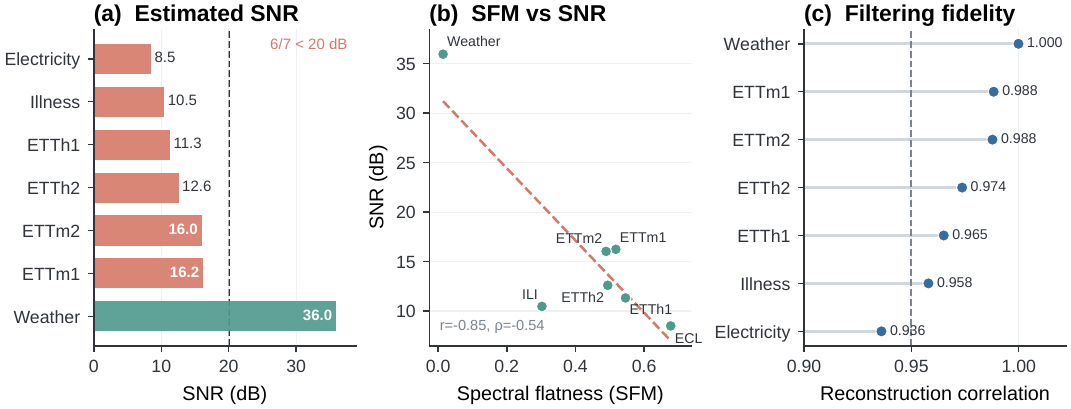}
    \caption{\textbf{Spectral proxy analysis across real-world benchmarks.} \textbf{(a)} Proxy SNR, defined as the ratio of spectral energy in the top 10\% highest-amplitude Fourier coefficients to that in the remaining coefficients; \textbf{(b)} strong negative correlation between SFM and proxy SNR ($r=-0.846$, $p<0.05$); \textbf{(c)} dataset rankings illustrating heterogeneous spectral concentration.}
    \label{fig:snr_analysis}
\end{figure}

\noindent\textbf{Label-Free Spectral Residual Scorer.}
Formally, under the working assumption that temporal structure is spectrally sparse, the dominant component $\hat{\mathbf{x}}$ is approximated by spectral filtering:
\begin{equation}
    \hat{\mathbf{x}} = \mathcal{F}^{-1}(\mathbf{M} \odot \mathcal{F}(\mathbf{x})), \quad \mathbf{M}_k = \mathbb{I}(|\mathcal{F}(\mathbf{x})|_k > \tau),
\end{equation}
where $\mathcal{F}$ and $\mathcal{F}^{-1}$ denote the Discrete Fourier Transform (DFT) and its inverse, $\odot$ the Hadamard product, and $\mathbf{M}_k$ selects modes above $\tau$. The residual $\|\mathbf{x}-\hat{\mathbf{x}}\|_2$ measures the energy outside the retained components, motivating the differentiable soft-mask scorer and mean absolute residual in Section~\ref{sec:noise_scoring}.

\begin{figure*}[!t]
    \centering
    \includegraphics[width=\textwidth]{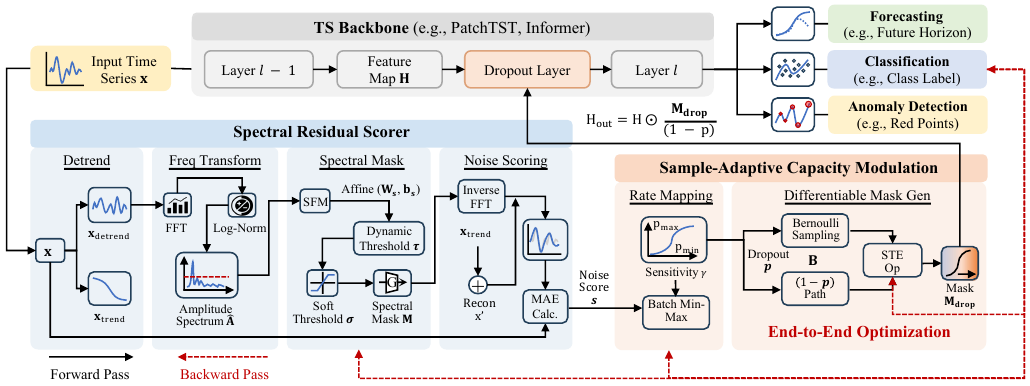}
    \caption{\textbf{Unified cross-task architecture of \method} across \textbf{forecasting, classification, and anomaly detection}. The \textbf{spectral scorer} extracts spectral residual scores via detrending, log-scale normalization, and SFM-anchored soft filtering; the \textbf{rate mapper} maps scores to bounded dropout rates and differentiable activation masks. As a \textbf{task-agnostic framework}, \method applies sample-adaptive regularization during training while leaving the \textbf{inference pipeline} unchanged.}
    \label{fig:framework}
    \vspace{-1em}
\end{figure*}

\section{The \method Framework}
\label{sec:methodology}

Figure~\ref{fig:framework} illustrates \method's dual-path architecture. The upper path represents the backbone and task head (such as forecasting projection, classification head, or reconstruction decoder). The lower path executes capacity modulation:
\begin{enumerate}[leftmargin=*]
    \item \textbf{Spectral Residual Scorer:} Detrends the input, maps it to the frequency domain, applies an SFM-anchored soft spectral filter, and reconstructs the dominant component. The reconstruction error between the original and reconstructed sequences yields a scalar spectral residual score $s_i$.
    \item \textbf{Sample-Adaptive Capacity Mapper:} Normalizes residual scores across the mini-batch and maps them to bounded dropout probabilities $p_i$. Differentiable binary activation masks are generated via the Straight-Through Estimator (STE) \cite{bengio2013estimating} to modulate active feature dimensions along trainable paths during forward passes.
\end{enumerate}
Gradients from the primary task loss $\mathcal{L}$ flow end-to-end through the STE path to optimize the scorer and mapper parameters jointly with the backbone. During evaluation, the capacity modulation path is deactivated, ensuring deterministic inference with no additional scorer or mapper computation or latency.

\subsection{Spectral Residual Scorer}
\label{sec:noise_scoring}
The differentiable spectral residual scorer computes a sample-level unreliability proxy through a four-stage pipeline:

\noindent\textbf{1. Global Linear Detrending.}
Non-stationary trends in finite temporal windows introduce sharp boundary discontinuities when computing the DFT,
causing severe \textit{spectral leakage} that smears energy across frequency bins. To suppress boundary artifacts, for an input sample $\mathbf{x} \in \mathbb{R}^{L \times C}$, we remove linear trends via Ordinary Least Squares (OLS):
\begin{equation}
    \mathbf{x}_{\mathrm{detrend}} = \mathbf{x} - \mathbf{x}_{\mathrm{trend}}, \quad \text{where } \mathbf{x}_{\mathrm{trend}} = \mathbf{T}\mathbf{w}^* + \mathbf{b}^*,
\end{equation}
where $\mathbf{T} = [0, 1, \dots, L-1]^\top \in \mathbb{R}^{L \times 1}$ is the temporal index vector. The slope $\mathbf{w}^* \in \mathbb{R}^{1 \times C}$ and intercept $\mathbf{b}^* \in \mathbb{R}^{1 \times C}$ are obtained independently for each channel by minimizing $\sum_{t=1}^{L}(x_{t,c}-(t-1)w_c-b_c)^2$. This detrending isolates baseline trends and reduces spectral leakage.

\noindent\textbf{2. Log-Scale Spectral Normalisation.}
We compute the DFT of $\mathbf{x}_{\mathrm{detrend}}$ via a real-input fast Fourier transform (FFT), implemented with rFFT. For a real-valued input, conjugate symmetry makes the negative-frequency half redundant, so rFFT returns $K_f = \lfloor L/2 \rfloor + 1$ coefficients $\mathbf{Z} \in \mathbb{C}^{K_f \times C}$ and the amplitude spectrum $\mathbf{A} = |\mathbf{Z}|$. Because spectral amplitudes often span multiple orders of magnitude, we compute the log-amplitude $\mathbf{L} = \log(1 + \mathbf{A})$ and perform min-max normalisation across frequency bins for each channel:
\begin{equation}
    \hat{A}_{k,c} = \frac{L_{k,c} - \min_j L_{j,c}}{\max\left(\max_j L_{j,c} - \min_j L_{j,c}, \, \epsilon\right)},
\end{equation}
where $k \in \{1,\dots,K_f\}$ indexes frequency bins and $c \in \{1,\dots,C\}$ channels, and $\epsilon = 10^{-8}$ prevents division by zero.

\noindent\textbf{3. Learnable SFM-Anchored Spectral Filter.}
To separate dominant spectral modes from diffuse residual energy without relying on hard, hand-tuned frequency cutoffs, we design a learnable soft filter anchored by SFM. For channel $c$, we define the spectral power proxy $P_{k,c} = L_{k,c}^2 + \epsilon$ and compute:
\begin{equation}
    \mathrm{SFM}_{c} = \frac{\exp\left(\frac{1}{K_f} \sum_{k=1}^{K_f} \ln P_{k,c}\right)}{\frac{1}{K_f} \sum_{k=1}^{K_f} P_{k,c} + \epsilon}.
\end{equation}
An $\mathrm{SFM}_c$ value close to 1 signifies a flat, noise-like power spectrum, whereas smaller values indicate sharp spectral concentration. We parameterize a channel-adaptive threshold $\tau_c$ as a learnable affine transformation of $\mathrm{SFM}_c$:
\begin{equation}
    \tau_c = \operatorname{sigmoid}\left(w_{s,c} \cdot \mathrm{SFM}_c + b_{s,c}\right),
\end{equation}
where $w_{s,c}$ and $b_{s,c}$ are trainable parameters. The continuous spectral retention mask $\mathbf{M} \in [0, 1]^{K_f \times C}$ is then computed as:
\begin{equation}
    M_{k,c} = \operatorname{sigmoid}\left( \operatorname{softplus}(\alpha) \cdot (\hat{A}_{k,c} - \tau_c) \right),
\end{equation}
where the learnable scalar $\alpha$ controls transition steepness via a strictly positive $\operatorname{softplus}$ transform. This soft mask smoothly attenuates non-dominant frequency components while preserving dominant structural harmonics.

\noindent\textbf{4. Residual Scoring.}
We reconstruct the dominant temporal signal by performing the inverse real-input FFT ($\mathcal{F}^{-1}$) on the filtered spectrum and restoring the previously extracted trend:
\begin{equation}
    \mathbf{x}' = \mathcal{F}^{-1}(\mathbf{Z} \odot \mathbf{M}) + \mathbf{x}_{\mathrm{trend}}.
\end{equation}
We first compute a channel-level residual score and then aggregate it into one sample-level score:
\begin{equation}
    s_c = \frac{1}{L} \sum_{t=1}^L |x_{t,c} - x'_{t,c}|,\qquad
    s = \frac{1}{C}\sum_{c=1}^{C}s_c.
\end{equation}
The mean $s$ aggregates channel-wise residuals into a label-free proxy for sample unreliability. It yields a shared dropout rate across adaptive layers, allowing sample-level modulation of joint multivariate representations without requiring a correspondence between input channels and hidden features.

\subsection{Sample-Adaptive Capacity Modulation}
\label{sec:adaptive_dropout}

Given sample residual scores $\{s_i\}_{i=1}^{|\mathcal{B}|}$ in mini-batch $\mathcal{B}$, the second module dynamically maps $s_i$ to a sample-specific dropout rate $p_i$, modulating the backbone's active capacity.

\noindent\textbf{Batch-Aware Relative Mapping.}
Because raw residual magnitudes vary across datasets and mini-batches,
we normalize scores relative to the current mini-batch distribution:
\begin{equation}
    \hat{s}_i = \begin{cases}
    \dfrac{s_i - \min_{j} s_j}{\max_{j} s_j - \min_{j} s_j}, & \text{if } \max_{j} s_j - \min_{j} s_j > \epsilon, \\[6pt]
    0.5, & \text{otherwise}.
    \end{cases}
\end{equation}
If score dispersion within a batch falls below $\epsilon$, assigning $\hat{s}_i = 0.5$ provides a neutral default. We then map score $\hat{s}_i \in [0, 1]$ to a dropout probability $p_i \in [p_{\min}, p_{\max}]$ through a learnable nonlinear sensitivity curve:
\begin{equation}
    \tilde{s}_i = \tanh\left(\hat{s}_i \cdot \operatorname{softplus}(\gamma)\right),
\end{equation}
\begin{equation}
    \label{eq:rate_mapper}
    p_i = p_{\min} + (p_{\max} - p_{\min}) \cdot \tilde{s}_i,
\end{equation}
where the shared trainable scalar $\gamma$ governs sensitivity to relative unreliability via $a = \operatorname{softplus}(\gamma) > 0$. Lower-score samples ($\hat{s}_i \to 0$) receive a base dropout rate $p_{\min}$ to retain more active capacity, while higher-score samples ($\hat{s}_i \to 1$) receive elevated rates up to $p_{\min} + (p_{\max}-p_{\min})\tanh(a) \le p_{\max}$ to restrict capacity and suppress noise memorization.

\noindent\textbf{Differentiable Mask Generation via STE.}
Stochastic dropout mask sampling is discrete and non-differentiable. To enable end-to-end backpropagation from the task loss $\mathcal{L}$ to the scorer parameters $(\alpha, \mathbf{w}_s, \mathbf{b}_s, \gamma)$, we employ STE~\cite{bengio2013estimating}. For sample $i$, the forward pass samples a binary mask
$\mathbf{B}_i \sim \operatorname{Bernoulli}(1 - p_i)$. A larger $p_i$ zeros out more dimensions, while surviving activations are scaled by inverted-dropout normalization. The differentiable activation mask $\mathbf{M}_{\mathrm{drop},i}$ is defined as:
\begin{equation}
    \mathbf{M}_{\mathrm{drop},i} = \mathbf{B}_i + q_i - \operatorname{sg}(q_i), \qquad q_i=1-p_i,
\end{equation}
where $\operatorname{sg}(\cdot)$ denotes the stop-gradient operator. In the forward pass, $\mathbf{M}_{\mathrm{drop},i} = \mathbf{B}_i$, while in the backward pass, its surrogate gradient satisfies $\partial \mathbf{M}_{\mathrm{drop},i} / \partial p_i = -1$. For an activation tensor $\mathbf{H}_i$, inverted dropout scaling produces:
\begin{equation}
    \mathbf{H}_{i,\mathrm{out}} = \mathbf{H}_i \odot \mathbf{m}_i, \qquad \mathbf{m}_i = \frac{\mathbf{M}_{\mathrm{drop},i}}{q_i}.
\end{equation}
This operation directly modulates sample-wise active capacity during training while preserving the unbiased conditional expectation of feature representations ($\mathbb{E}[\mathbf{H}_{i,\mathrm{out}} \mid \mathbf{H}_i] = \mathbf{H}_i$).

\noindent\textbf{Task-Agnostic End-to-End Optimization.}
The parameter count added by \method is minimal ($2C + 2$ scalar parameters for $C$ channels: $w_{s,c}, b_{s,c}, \alpha, \gamma$). During training, the scorer and mapper parameters $\phi=\{\alpha,\mathbf{w}_s,\mathbf{b}_s,\gamma\}$ are optimized jointly with the trainable backbone parameters $\theta$:
\begin{equation}
    \min_{\theta,\phi}\;\mathbb{E}_{\mathcal B}\!\left[
    \frac{1}{|\mathcal B|}\sum_{i\in\mathcal B}
    \mathbb{E}_{\mathbf B_i}\!\left[
    \mathcal L\bigl(f_\theta(\mathbf x_i;\mathbf m_i),\mathbf y_i\bigr)
    \right]\right].
\end{equation}
Here, $\mathbf m_i$ collects the activation multipliers across all adaptive dropout sites for sample $i$, with rates shared across sites and Bernoulli masks drawn independently. The outer expectation reflects mini-batch sampling, as relative unreliability scores are evaluated across $\mathcal B$. Gradients backpropagate end-to-end through the STE to update scorer parameters $\phi$ alongside backbone parameters $\theta$. Because \method acts strictly on internal activation paths without requiring auxiliary reliability labels, it seamlessly integrates with any differentiable task loss $\mathcal{L}$.

\subsection{Inference Procedure}
\label{sec:inference}
During inference, the capacity modulation path is completely bypassed: no spectral decomposition, residual scoring, or rate mapping is executed. The backbone and task heads operate deterministically with all dropout operations disabled ($\mathbf{m}_i \equiv \mathbf{1}$). Consequently, \method preserves the deterministic inference graph of the underlying backbone, introducing \textbf{zero} additional parameters, memory overhead, or latency at test time.

\begin{table*}[!b]
    \caption{Definitions of clean and corrupted signals. Overview of the four clean regimes and three corruption profiles ($\tilde{x}_t$).}
    \label{tab:synthetic_signal_corruption_definitions}
    \centering
    \small
    \renewcommand{\arraystretch}{1.08}

    \resizebox{\textwidth}{!}{%
        \begin{tabular}{clcll}
            \toprule
            \rowcolor{TableHeader}
            \textbf{Type} &
            \textbf{Category} &
            \textbf{Mathematical Formulation} &
            \textbf{Physical Interpretation} &
            \textbf{Example} \\
            \midrule

            \multirow{4}{*}{\rotatebox{90}{\textbf{Signal}}}
            & Stationary (Periodic)
            & $x_t = \sum_{k} A_k \sin(2\pi f_k t + \phi_k)$
            & Stable equilibrium; constant freq. \& amp.
            & Power grid voltage \\
            \addlinespace[0.4em]

            & Non-stat. (Mean)
            & $x_t = \alpha t + \beta + \sum_{k} A_k \sin(2\pi f_k t)$
            & Trend \& Seasonality; drifts with fluctuations
            & Macroeconomic growth (GDP) \\
            \addlinespace[0.4em]

            & Non-stat. (Freq.)
            & $x_t = A \sin(2\pi (f_0 t + \frac{1}{2}kt^2))$
            & Spectral Drift; time-varying frequency
            & Doppler effects (Radar/Sonar) \\
            \addlinespace[0.4em]

            & Non-stat. (Var.)
            & $x_t = (1 + \mu \sin(2\pi f_m t)) \sin(2\pi f_c t)$
            & Time-varying amplitude envelope
            & Vibration or communication signals \\

            \midrule

            \multirow{3}{*}{\rotatebox{90}{\textbf{Noise}}}

            & Gaussian Noise
            & $\tilde{x}_t = x_t + \epsilon_t, \enspace \epsilon_t \sim \mathcal{N}(0, \sigma^2)$
            & Random measurement noise
            & Sensor thermal noise \\
            \addlinespace[0.4em]

            & Heavy-tail (Student-t)
            & $\tilde{x}_t = x_t + \epsilon_t, \enspace \epsilon_t \sim t_\nu \,(\nu=2.5)$
            & Heavy-tailed random corruption
            & Impulsive sensor disturbances \\
            \addlinespace[0.4em]

            & Missing Values
            & $\tilde{x}_t = x_t \odot m_t, \enspace m_t \sim \mathcal{B}(1-p)$
            & Observation Failures; random data loss
            & Wireless packet loss \\

            \bottomrule
        \end{tabular}%
    }
    \vspace{-1em}
\end{table*}

\section{Theoretical Capacity Analysis}
\label{sec:theory}

We relate sample-adaptive dropout to local regularization, then analyze how residual heterogeneity affects capacity allocation under a bias--variance surrogate (Appendix~E).

\subsection{From Dynamic Dropout to Sample-Adaptive Regularization}
\label{sec:theory_regularization}
Fix the backbone parameters, sample $i$, and its mini-batch. Analyze one adaptive layer with all other activation multipliers set to one. Let $\mathbf h_i\in\mathbb R^d$ be its pre-mask activations, vectorised here, where $d$ is the number of coordinates and $j\in\{1,\ldots,d\}$ indexes them. Conditional on the sample and mini-batch, the retention rate is $q_i=1-p_i$, and independent mask entries satisfy $B_{ij}\sim\operatorname{Bernoulli}(q_i)$. Following Sections~\ref{sec:motivation} and~\ref{sec:adaptive_dropout}, $\mathbf m_i$ collects the multipliers $\mathbf B_i/q_i$ at this layer and ones elsewhere. Thus $f_\theta(\mathbf x_i;\mathbf m_i)$ is the stochastic forward mapping, whereas $f_\theta(\mathbf x_i)=f_\theta(\mathbf x_i;\mathbf1)$ is deterministic inference.

The inverted multiplier has mean one and variance
\begin{equation}
    \label{eq:multiplier_variance}
    \mathbb E[B_{ij}/q_i]=1,\qquad
    \operatorname{Var}(B_{ij}/q_i)=\frac{p_i}{1-p_i}=: \lambda(p_i).
\end{equation}
Consequently, the activation perturbation $\boldsymbol\delta_i=\mathbf h_i\odot(\mathbf B_i/q_i-\mathbf1)$ has zero mean and covariance $\lambda(p_i)\operatorname{diag}(h_{i1}^2,\ldots,h_{id}^2)$. Let $\ell_i(\mathbf h)$ be the downstream task loss with the target fixed, so $\ell_i(\mathbf h_i)=\mathcal L(f_\theta(\mathbf x_i),\mathbf y_i)$. All mask expectations and variances in this subsection are conditional on the fixed input and mini-batch. A second-order Taylor expansion of $\ell_i$ around $\mathbf h_i$ gives~\cite{wager2013drop}
\begin{align}
    \mathbb{E}_{\mathbf B_i}\!\left[\mathcal L\bigl(f_\theta(\mathbf x_i;\mathbf m_i),\mathbf y_i\bigr)\right]
      &\approx \mathcal L\bigl(f_\theta(\mathbf x_i),\mathbf y_i\bigr)
       +\tfrac{1}{2}\lambda(p_i)\,\Omega(\mathbf x_i;\theta),\notag\\[-2pt]
    \Omega(\mathbf x_i;\theta)
      &=\sum_{j=1}^{d} h_{ij}^2\,\frac{\partial^{2}\ell_i}{\partial h_j^{2}}(\mathbf h_i).
\end{align}
The zero-mean perturbation eliminates the first-order term. $\Omega$ is the activation-weighted loss curvature. When $\Omega\ge0$, the correction penalizes sensitivity to activation perturbations with coefficient $\lambda(p_i)/2$, which increases with $p_i$ because $\mathrm d\lambda/\mathrm dp=(1-p)^{-2}>0$. The approximation requires a smooth loss and a small Taylor remainder (Appendix~E). For fixed mapper parameters, Equation~\eqref{eq:rate_mapper} assigns larger $p_i$ and hence larger $\lambda(p_i)$ to higher normalized residual scores $\hat s_i$ within a mini-batch. Define the layer's expected active width as
\begin{equation}
    \mathcal C_i^{\mathrm{train}}
       :=\mathbb E\!\left[\sum_{j=1}^d B_{ij}\right]=(1-p_i)d.
\end{equation}
This counts retained activation coordinates before inverted scaling, with no change in parameter count. Equivalently, $\mathcal C_i^{\mathrm{train}}=d/(1+\lambda(p_i))$, so a larger coefficient yields a smaller active width. At inference, all activation multipliers are one and the scorer and mapper are bypassed.

\subsection{A Bias--Variance Surrogate for Capacity Allocation}
\label{sec:theory_excess_risk}
For dropout rate $p$, write $\lambda=\lambda(p)=p/(1-p)$ for the corresponding regularization strength. The sample-specific value is $\lambda(p_i)$. The bounds $0\le p_{\min}<p_{\max}<1$ define $\Lambda=[\lambda_{\min},\lambda_{\max}]$, with $\lambda_{\min}=p_{\min}/(1-p_{\min})$ and $\lambda_{\max}=p_{\max}/(1-p_{\max})$. Let $\sigma(\mathbf x)\ge0$ denote the residual magnitude, with $\sigma(\mathbf x_i)=\sigma_i=\|\mathbf r_i\|$ as defined in Section~\ref{sec:motivation}. The learned mapper may realize a narrower range within $\Lambda$.

We make two assumptions on a scalar prediction error. First, a linear bias response $b\lambda$ models regularization-induced systematic error, with zero baseline bias and shared slope $b\ne0$. This first-order model gives squared bias $C_1\lambda^2$, where $C_1=b^2>0$. Second, model the fitted residual response by a scalar quadratic penalty. Let $\varepsilon$ be a random perturbation with $\mathbb E[\varepsilon\mid\sigma]=0$ and $\operatorname{Var}(\varepsilon\mid\sigma)=1$. For residual input $\sigma\varepsilon$, define the fitted scalar response $v_\lambda$ by
\begin{equation}
    v_\lambda=\arg\min_v\left\{\tfrac12(v-\sigma\varepsilon)^2
        +\tfrac{\lambda}{2}v^2\right\}
        =\frac{\sigma\varepsilon}{1+\lambda}
\end{equation}
The stationarity condition $(v-\sigma\varepsilon)+\lambda v=0$ gives the displayed solution. With a fixed output scale $\sqrt{C_2}$, where $C_2>0$, its conditional variance is $C_2\operatorname{Var}(v_\lambda\mid\sigma)=C_2\sigma^2/(1+\lambda)^2$. This assumed fitted-response variance decreases with $\lambda$, whereas the forward mask variance in Equation~\eqref{eq:multiplier_variance} increases. The two describe distinct effects.

Combining the assumed bias and residual response gives $e_\lambda=b\lambda+\sqrt{C_2}v_\lambda$. Its conditional mean squared error is
\begin{equation}
\label{eq:surrogate_risk}
\mathcal E(\lambda,\sigma)
  :=\mathbb E[e_\lambda^2\mid\sigma]
  =C_1\lambda^2+C_2\frac{\sigma^2}{(1+\lambda)^2}.
\end{equation}
The zero conditional mean of $v_\lambda$ removes the cross term, leaving squared bias and variance. Shared constants $C_1,C_2$ isolate residual magnitude as the source of heterogeneity. 

The surrogate is strictly convex for $\lambda\ge0$, since $\partial_\lambda^2\mathcal E=2C_1+6C_2\sigma^2/(1+\lambda)^4>0$. Its minimizer $\lambda^\star(\sigma)$ over $[0,\infty)$ satisfies $C_1\lambda^\star(1+\lambda^\star)^3=C_2\sigma^2$. The left side increases strictly with $\lambda^\star$, so the minimizer increases with $\sigma$. The constrained oracle allocation is $g(\sigma)=\lambda^\star_\Lambda(\sigma)=\Pi_\Lambda(\lambda^\star(\sigma))$, where $\Pi_\Lambda(u)=\min\{\lambda_{\max},\max\{\lambda_{\min},u\}\}$ clips a scalar to $\Lambda$.

\begin{theorem}[Excess Surrogate Risk of Uniform Capacity Allocation]
\label{thm:fixed_capacity_excess_risk}
Let $\mathbf X$ be a random input window from the population under study. Under Equation~\eqref{eq:surrogate_risk}, assume $\mathbb E[\sigma(\mathbf X)^2]<\infty$ and $\operatorname{Var}[g(\sigma(\mathbf X))]>0$. Every sample-independent strength $\lambda_{\mathrm{fix}}\in\Lambda$ incurs strictly positive expected excess surrogate risk relative to the oracle allocation:
\begin{equation}
\begin{aligned}
 &\mathbb E_{\mathbf X}\!\left[\mathcal E(\lambda_{\mathrm{fix}},\sigma(\mathbf X))
       -\mathcal E(g(\sigma(\mathbf X)),\sigma(\mathbf X))\right]\\
 &\qquad\ge C_1\operatorname{Var}[g(\sigma(\mathbf X))]>0.
\end{aligned}
\end{equation}
\end{theorem}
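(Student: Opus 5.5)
The approach is a pointwise strong-convexity argument in $\lambda$ for each fixed $\sigma$, followed by taking expectations over $\mathbf X$ and using the fact that the variance is the smallest mean squared deviation from a constant.

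\emph{Step 1 (strong convexity).} Fix $\sigma\ge0$. From Equation~\eqref{eq:surrogate_risk}, $\partial_\lambda^2\mathcal E(\lambda,\sigma)=2C_1+6C_2\sigma^2/(1+\lambda)^4\ge 2C_1$ on $[0,\infty)\supseteq\Lambda$. Hence $\mathcal E(\cdot,\sigma)$ is $2C_1$-strongly convex there. For any $\lambda,\mu\in\Lambda$ this gives
\[
\mathcal E(\lambda,\sigma)\ge \mathcal E(\mu,\sigma)+\partial_\lambda\mathcal E(\mu,\sigma)(\lambda-\mu)+C_1(\lambda-\mu)^2 .
\]

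\emph{Step 2 (optimality of the clipped minimizer).} I will check that $g(\sigma)=\Pi_\Lambda(\lambda^\star(\sigma))$ minimizes $\mathcal E(\cdot,\sigma)$ over $\Lambda$. It then satisfies the variational inequality $\partial_\lambda\mathcal E(g(\sigma),\sigma)(\lambda-g(\sigma))\ge0$ for all $\lambda\in\Lambda$.

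Since $\mathcal E(\cdot,\sigma)$ is strictly convex with unique minimizer $\lambda^\star(\sigma)$ on $[0,\infty)$, its derivative is negative below $\lambda^\star$ and positive above it. There are three cases.
\begin{itemize}
\item If $\lambda^\star(\sigma)\in\Lambda$, the derivative at $g(\sigma)$ vanishes.
\item If $\lambda^\star(\sigma)<\lambda_{\min}$, then $g(\sigma)=\lambda_{\min}$, the derivative there is positive, and $\lambda-\lambda_{\min}\ge0$.
\item If $\lambda^\star(\sigma)>\lambda_{\max}$, then $g(\sigma)=\lambda_{\max}$, the derivative there is negative, and $\lambda-\lambda_{\max}\le0$.
\end{itemize}
In every case the product is nonnegative. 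This one-dimensional case analysis is the only place needing care, and I expect it to be the main (though mild) obstacle.

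Setting $\mu=g(\sigma)$ and $\lambda=\lambda_{\mathrm{fix}}$ in Step 1 then yields the pointwise bound
\[
\mathcal E(\lambda_{\mathrm{fix}},\sigma)-\mathcal E(g(\sigma),\sigma)\ge C_1\bigl(\lambda_{\mathrm{fix}}-g(\sigma)\bigr)^2 .
\]

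\emph{Step 3 (integrability).} The map $\sigma\mapsto\lambda^\star(\sigma)$ is continuous and increasing, since it is defined implicitly by the strictly increasing relation $C_1\lambda(1+\lambda)^3=C_2\sigma^2$. Therefore $g$ is continuous, bounded in $\Lambda$, and $g(\sigma(\mathbf X))$ is a bounded random variable. Both risk terms are bounded by $C_1\lambda_{\max}^2+C_2\sigma(\mathbf X)^2$. This bound is integrable because $\mathbb E[\sigma(\mathbf X)^2]<\infty$, so all expectations are finite and linearity applies.

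\emph{Step 4 (expectation).} Taking $\mathbb E_{\mathbf X}$ of the pointwise bound gives an expected excess risk of at least $C_1\,\mathbb E[(\lambda_{\mathrm{fix}}-g(\sigma(\mathbf X)))^2]$. By the bias--variance identity,
\[
\mathbb E[(c-Y)^2]=\operatorname{Var}(Y)+(c-\mathbb E Y)^2\ge\operatorname{Var}(Y)
\]
for any constant $c$. Applying this with $c=\lambda_{\mathrm{fix}}$ and $Y=g(\sigma(\mathbf X))$ gives the lower bound $C_1\operatorname{Var}[g(\sigma(\mathbf X))]$. This is strictly positive because $C_1=b^2>0$ and $\operatorname{Var}[g(\sigma(\mathbf X))]>0$ by assumption.
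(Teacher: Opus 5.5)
Your proposal is correct and follows essentially the same route as the paper's proof. Both use the uniform bound $\partial_\lambda^2\mathcal E\ge 2C_1$ and the first-order variational inequality at the clipped minimizer $g(\sigma)$ to get the pointwise gap $C_1(\lambda_{\mathrm{fix}}-g(\sigma))^2$, then finish with $\mathbb E[(\lambda_{\mathrm{fix}}-g)^2]=\operatorname{Var}(g)+(\lambda_{\mathrm{fix}}-\mathbb E g)^2\ge\operatorname{Var}(g)$; your three-case check of the variational inequality and your integrable envelope $C_1\lambda_{\max}^2+C_2\sigma(\mathbf X)^2$ just spell out steps the paper states briefly.
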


\noindent\textit{Proof sketch.} The bound $\partial_\lambda^2\mathcal E\ge2C_1$ gives strong convexity. At the minimizer $g$, the first-order optimality condition yields
\[
\mathcal E(\lambda,\sigma)-\mathcal E(g(\sigma),\sigma)
\ge C_1(\lambda-g(\sigma))^2.
\]
Taking expectations with $\mathbb E[(\lambda_{\mathrm{fix}}-g)^2]\ge\operatorname{Var}(g)$ and $g=g(\sigma(\mathbf X))$ proves it. Appendix~E covers boundary optima.

Theorem~\ref{thm:fixed_capacity_excess_risk} assumes distinct oracle rates after clipping, excluding constant allocation at a shared bound. Under this model, larger residual magnitudes favor stronger regularization and smaller expected active widths. This surrogate result motivates \method's monotone mapper, which uses spectral residual scores to guide sample-wise regularization.

\section{Experiments}
\label{sec:experiments}

To validate Capacity-Centric Modulation and \method, our experiments address five research questions (RQs):
\begin{itemize}[leftmargin=*]
    \item \textbf{RQ1 (Corruption Robustness):} Does \method reduce forecasting error across corruption scales? (\S\ref{sec:rq1})
    \item \textbf{RQ2 (Real-World Forecasting):} Does \method improve forecasting accuracy on public benchmarks? (\S\ref{sec:rq2})
    \item \textbf{RQ3 (Generalization):} Does sample-adaptive capacity improve transfer across data regimes and domains? (\S\ref{sec:rq3})
    \item \textbf{RQ4 (Cross-Task Generality):} Does \method generalize to both classification and anomaly detection tasks? (\S\ref{sec:rq4})
    \item \textbf{RQ5 (Ablations \& Efficiency):} Do modules help, is \method efficient, and does it complement data selection? (\S\ref{sec:rq5})
\end{itemize}

\begin{table*}[!t]
    \small
    \centering
    \caption{Forecasting results on Synth-12 ($L=96$, averaged across $H \in \{96, 192, 336, 720\}$). +\method denotes backbones augmented with \method; bold marks the lower error per pair. Avg. Gain macro-averages relative improvements across noise levels $\sigma \in [0.1, 0.9]$.}
    \label{tab:synthetic_forecasting_summary}

    \setlength{\tabcolsep}{2.8pt}
    \renewcommand{\arraystretch}{1}

    \providecommand{\gc}{\tableourscell}
    \providecommand{\win}[1]{\textbf{#1}}
    \providecommand{\secbest}[1]{\underline{#1}}
    \providecommand{\dtwin}[2]{\gc\textbf{#1}\tableforecastchange{#2}{#1}}
    \providecommand{\dtsecond}[2]{\gc\underline{#1}\tableforecastchange{#2}{#1}}
    \providecommand{\dttie}[2]{\gc\textbf{#1}\tableforecastchange{#2}{#1}}

    \resizebox{\textwidth}{!}{%
    \begin{tabular}{@{}c c *{18}{c}@{}}
    \toprule
    \rowcolor{TableHeader}
    \multirow{2}{*}{\textbf{$\sigma$}} & \multirow{2}{*}{\textbf{Metric}} &
    \multicolumn{2}{c}{Informer (\citeyear{zhou2021informer})} &
    \multicolumn{2}{c}{Crossformer (\citeyear{zhang2022crossformer})} &
    \multicolumn{2}{c}{PatchTST (\citeyear{nie2023patchtst})} &
    \multicolumn{2}{c}{TimesNet (\citeyear{wu2023timesnet})} &
    \multicolumn{2}{c}{iTransformer (\citeyear{liu2023itransformer})} &
    \multicolumn{2}{c}{TimeMixer (\citeyear{wang2024timemixer})} &
    \multicolumn{2}{c}{WPMixer (\citeyear{murad2025wpmixer})} &
    \multicolumn{2}{c}{TimeFilter (\citeyear{hutimefilter})} &
    \multicolumn{2}{c}{MultiPatchFormer (\citeyear{naghashi2025multiscale})} \\
    \cmidrule(lr){3-4} \cmidrule(lr){5-6} \cmidrule(lr){7-8} \cmidrule(lr){9-10} \cmidrule(lr){11-12} \cmidrule(lr){13-14} \cmidrule(lr){15-16} \cmidrule(lr){17-18} \cmidrule(lr){19-20}
    \rowcolor{TableSubheader}
    & & Raw & \tableoursheader +\method & Raw & \tableoursheader +\method & Raw & \tableoursheader +\method & Raw & \tableoursheader +\method & Raw & \tableoursheader +\method & Raw & \tableoursheader +\method & Raw & \tableoursheader +\method & Raw & \tableoursheader +\method & Raw & \tableoursheader +\method \\
    \midrule

    \multirow{2}{*}{0.1} & MSE & \secbest{0.966} & \dtwin{0.514}{0.966} & \secbest{0.450} & \dtwin{0.386}{0.450} & \secbest{0.542} & \dtwin{0.529}{0.542} & \secbest{0.902} & \dtwin{0.848}{0.902} & \secbest{0.523} & \dtwin{0.519}{0.523} & \secbest{0.545} & \dtwin{0.540}{0.545} & \secbest{0.539} & \dtwin{0.519}{0.539} & \secbest{0.543} & \dtwin{0.530}{0.543} & \secbest{0.547} & \dtwin{0.523}{0.547} \\
     & MAE & \secbest{0.775} & \dtwin{0.581}{0.775} & \secbest{0.502} & \dtwin{0.471}{0.502} & \secbest{0.561} & \dtwin{0.554}{0.561} & \secbest{0.763} & \dtwin{0.739}{0.763} & \secbest{0.544} & \dtwin{0.541}{0.544} & \secbest{0.555} & \dtwin{0.554}{0.555} & \secbest{0.552} & \dtwin{0.539}{0.552} & \secbest{0.558} & \dtwin{0.550}{0.558} & \secbest{0.558} & \dtwin{0.546}{0.558} \\
    \addlinespace[0.25em]

    \multirow{2}{*}{0.3} & MSE & \secbest{0.978} & \dtwin{0.507}{0.978} & \secbest{0.439} & \dtwin{0.378}{0.439} & \secbest{0.571} & \dtwin{0.543}{0.571} & \secbest{0.847} & \dtwin{0.821}{0.847} & \secbest{0.554} & \dtwin{0.548}{0.554} & \secbest{0.553} & \dtwin{0.551}{0.553} & \secbest{0.556} & \dtwin{0.543}{0.556} & \secbest{0.577} & \dtwin{0.572}{0.577} & \secbest{0.559} & \dtwin{0.550}{0.559} \\
     & MAE & \secbest{0.779} & \dtwin{0.577}{0.779} & \secbest{0.495} & \dtwin{0.464}{0.495} & \secbest{0.580} & \dtwin{0.564}{0.580} & \secbest{0.743} & \dtwin{0.730}{0.743} & \secbest{0.566} & \dtwin{0.560}{0.566} & \secbest{0.565} & \dtwin{0.561}{0.565} & \secbest{0.563} & \dtwin{0.554}{0.563} & \secbest{0.583} & \dtwin{0.581}{0.583} & \secbest{0.568} & \dtwin{0.562}{0.568} \\
    \addlinespace[0.25em]

    \multirow{2}{*}{0.5} & MSE & \secbest{0.936} & \dtwin{0.494}{0.936} & \secbest{0.431} & \dtwin{0.391}{0.431} & \secbest{0.573} & \dtwin{0.556}{0.573} & \secbest{0.816} & \dtwin{0.794}{0.816} & \secbest{0.561} & \dtwin{0.554}{0.561} & \secbest{0.558} & \dtwin{0.552}{0.558} & \secbest{0.556} & \dtwin{0.550}{0.556} & \secbest{0.585} & \dtwin{0.578}{0.585} & \secbest{0.581} & \dtwin{0.550}{0.581} \\
     & MAE & \secbest{0.762} & \dtwin{0.571}{0.762} & \secbest{0.492} & \dtwin{0.477}{0.492} & \secbest{0.584} & \dtwin{0.575}{0.584} & \secbest{0.729} & \dtwin{0.721}{0.729} & \secbest{0.575} & \dtwin{0.570}{0.575} & \secbest{0.570} & \dtwin{0.566}{0.570} & \secbest{0.563} & \dtwin{0.560}{0.563} & \secbest{0.594} & \dtwin{0.589}{0.594} & \secbest{0.588} & \dtwin{0.569}{0.588} \\
    \addlinespace[0.25em]

    \multirow{2}{*}{0.7} & MSE & \secbest{0.841} & \dtwin{0.470}{0.841} & \secbest{0.411} & \dtwin{0.370}{0.411} & \secbest{0.571} & \dtwin{0.561}{0.571} & \secbest{0.785} & \dtwin{0.763}{0.785} & \secbest{0.556} & \dtwin{0.549}{0.556} & \secbest{0.552} & \dtwin{0.548}{0.552} & \secbest{0.542} & \dtwin{0.537}{0.542} & \secbest{0.576} & \dtwin{0.571}{0.576} & \secbest{0.573} & \dtwin{0.550}{0.573} \\
     & MAE & \secbest{0.726} & \dtwin{0.558}{0.726} & \secbest{0.481} & \dtwin{0.435}{0.481} & \secbest{0.584} & \dtwin{0.577}{0.584} & \secbest{0.721} & \dtwin{0.707}{0.721} & \secbest{0.578} & \dtwin{0.572}{0.578} & \secbest{0.567} & \dtwin{0.564}{0.567} & \secbest{0.559} & \dtwin{0.556}{0.559} & \secbest{0.591} & \dtwin{0.588}{0.591} & \secbest{0.585} & \dtwin{0.573}{0.585} \\
    \addlinespace[0.25em]

    \multirow{2}{*}{0.9} & MSE & \secbest{0.828} & \dtwin{0.464}{0.828} & \secbest{0.409} & \dtwin{0.360}{0.409} & \secbest{0.552} & \dtwin{0.539}{0.552} & \secbest{0.742} & \dtwin{0.717}{0.742} & \secbest{0.535} & \dtwin{0.532}{0.535} & \secbest{0.536} & \dtwin{0.531}{0.536} & \secbest{0.518} & \dtwin{0.516}{0.518} & \secbest{0.555} & \dtwin{0.550}{0.555} & \secbest{0.555} & \dtwin{0.541}{0.555} \\
     & MAE & \secbest{0.716} & \dtwin{0.550}{0.716} & \secbest{0.479} & \dtwin{0.457}{0.479} & \secbest{0.575} & \dtwin{0.570}{0.575} & \secbest{0.701} & \dtwin{0.689}{0.701} & \secbest{0.570} & \dtwin{0.568}{0.570} & \secbest{0.560} & \dtwin{0.557}{0.560} & \secbest{0.547} & \dtwin{0.546}{0.547} & \secbest{0.581} & \dtwin{0.577}{0.581} & \secbest{0.580} & \dtwin{0.569}{0.580} \\

    \midrule
    \rowcolor{TableSummary}
    \multicolumn{2}{c}{\textbf{Avg. Gain}} &
    \multicolumn{2}{c}{\textbf{46.0\% / 24.5\%}} &
    \multicolumn{2}{c}{\textbf{11.9\% / 5.9\%}} &
    \multicolumn{2}{c}{\textbf{2.9\% / 1.5\%}} &
    \multicolumn{2}{c}{\textbf{3.6\% / 1.9\%}} &
    \multicolumn{2}{c}{\textbf{1.0\% / 0.9\%}} &
    \multicolumn{2}{c}{\textbf{0.8\% / 0.5\%}} &
    \multicolumn{2}{c}{\textbf{1.7\% / 1.0\%}} &
    \multicolumn{2}{c}{\textbf{1.2\% / 0.8\%}} &
    \multicolumn{2}{c}{\textbf{3.6\% / 2.1\%}} \\

    \bottomrule
    \end{tabular}%
    }
    \vspace{-1em}
\end{table*}

\subsection{Experimental Settings}
\label{sec:settings}

\noindent\textbf{Datasets \& Benchmark Coverage.}
Our benchmarks span multiple time series regimes. For \textit{predictive modeling (forecasting)}, we incorporate nine real-world benchmarks: \textit{ETTh1/ETTh2}, \textit{ETTm1/ETTm2} \cite{zhou2021informer}, \textit{Electricity}, \textit{Exchange Rate} \cite{lai2018modeling}, \textit{Weather}, \textit{ILI} \cite{wu2021autoformer}, and \textit{ECG} waveforms from PhysioNet \cite{goldberger2000physiobank}. To evaluate robustness under controlled reliability heterogeneity, we introduce \textit{Synth-12}, which combines structured temporal signals with layered corruptions at five scales (Tables~\ref{tab:synthetic_signal_corruption_definitions} and~\ref{tab:synthetic_corruption_statistics}). For \textit{discriminative modeling (classification)}, we evaluate 30 UEA datasets \cite{bagnall2018uea}, \textit{HAR} \cite{anguita2013public}, and \textit{Sleep-EDF} \cite{kemp2000analysis}. For \textit{reconstructive modeling (anomaly detection)}, we evaluate \textit{SMD} \cite{su2019robust}, \textit{MSL}/\textit{SMAP} \cite{hundman2018detecting}, and \textit{PSM} \cite{xu2021anomaly}. Benchmark statistics and protocols are provided in the Supplementary Material.

\begin{table}[H]
    \centering
    \caption{Synth-12 corruption gradient statistics across noise levels $\sigma \in \{0.1, \dots, 0.9\}$ ($N = 33,600$, clean SFM $= 0.003$).}
    \label{tab:synthetic_corruption_statistics}
    \renewcommand{\arraystretch}{0.92}
    \setlength{\arrayrulewidth}{0.3pt}
    \begin{tabular*}{\columnwidth}{@{\extracolsep{\fill}} c | c | c | c | c | l @{}}
    \toprule
    \rowcolor{TableHeader}
    $\boldsymbol{\sigma}$ & \textbf{SNR (dB)} & \textbf{SFM} & $\boldsymbol{\Delta}$\textbf{SFM} & \textbf{MSE} & \textbf{Stress Regime} \\
    \midrule
    0.1 & 23.77 & 0.008 & 0.005 & 0.004 & Mild \\
    0.3 & 16.57 & 0.023 & 0.020 & 0.022 & Moderate \\
    0.5 & 12.39 & 0.046 & 0.043 & 0.058 & High \\
    0.7 & 9.54  & 0.076 & 0.073 & 0.111 & Severe \\
    0.9 & 7.39  & 0.109 & 0.106 & 0.182 & Extreme \\
    \bottomrule
    \end{tabular*}
\end{table}

\noindent\textbf{Backbone Architectures.}
We evaluate \method across diverse deep temporal architectures. Forecasting backbones include Transformer-based (Informer \cite{zhou2021informer}, Crossformer \cite{zhang2022crossformer}, PatchTST \cite{nie2023patchtst}, iTransformer \cite{liu2023itransformer}, MultiPatchFormer \cite{naghashi2025multiscale}), MLP-based (TimeMixer \cite{wang2024timemixer}, WPMixer \cite{murad2025wpmixer}), CNN-based (TimesNet \cite{wu2023timesnet}), and graph-based (TimeFilter \cite{hutimefilter}). Classification backbones include iTransformer, PatchTST, NSFormer \cite{liu2022non}, TimesNet, InceptionTime \cite{fawaz2020inceptiontime}, and MiniROCKET \cite{dempster2021minirocket}. Anomaly detection backbones include PatchTST, iTransformer, NSFormer, AnomTrans \cite{xu2021anomaly}, TranAD \cite{tuli2022tranad}, DCdetector \cite{yang2023dcdetector}, and TimesNet.

\begin{table*}[!t]
    \small
    \centering
    \caption{Real-world forecasting averaged across four prediction horizons per dataset (settings in Sec.~\ref{sec:settings}). +\method denotes backbones augmented with \method; bold marks the lower error per pair. Avg. Gain macro-averages relative improvements across datasets.}
    \label{tab:real_world_forecasting_summary}

    \setlength{\tabcolsep}{2.8pt}
    \renewcommand{\arraystretch}{1}

    \providecommand{\gc}{\tableourscell}
    \providecommand{\win}[1]{\textbf{#1}}
    \providecommand{\secbest}[1]{\underline{#1}}
    \providecommand{\dtwin}[2]{\gc\textbf{#1}\tableforecastchange{#2}{#1}}
    \providecommand{\dtsecond}[2]{\gc\underline{#1}\tableforecastchange{#2}{#1}}
    \providecommand{\dttie}[2]{\gc\textbf{#1}\tableforecastchange{#2}{#1}}

    \resizebox{\textwidth}{!}{%
    \begin{tabular}{@{}c c *{18}{c}@{}}
    \toprule
    \rowcolor{TableHeader}
    \multirow{2}{*}{\textbf{Dataset}} & \multirow{2}{*}{\textbf{Metric}} &
    \multicolumn{2}{c}{Informer (\citeyear{zhou2021informer})} &
    \multicolumn{2}{c}{Crossformer (\citeyear{zhang2022crossformer})} &
    \multicolumn{2}{c}{PatchTST (\citeyear{nie2023patchtst})} &
    \multicolumn{2}{c}{TimesNet (\citeyear{wu2023timesnet})} &
    \multicolumn{2}{c}{iTransformer (\citeyear{liu2023itransformer})} &
    \multicolumn{2}{c}{TimeMixer (\citeyear{wang2024timemixer})} &
    \multicolumn{2}{c}{WPMixer (\citeyear{murad2025wpmixer})} &
    \multicolumn{2}{c}{TimeFilter (\citeyear{hutimefilter})} &
    \multicolumn{2}{c}{MultiPatchFormer (\citeyear{naghashi2025multiscale})} \\
    \cmidrule(lr){3-4} \cmidrule(lr){5-6} \cmidrule(lr){7-8} \cmidrule(lr){9-10} \cmidrule(lr){11-12} \cmidrule(lr){13-14} \cmidrule(lr){15-16} \cmidrule(lr){17-18} \cmidrule(lr){19-20}
    \rowcolor{TableSubheader}
    & & Raw & \tableoursheader +\method & Raw & \tableoursheader +\method & Raw & \tableoursheader +\method & Raw & \tableoursheader +\method & Raw & \tableoursheader +\method & Raw & \tableoursheader +\method & Raw & \tableoursheader +\method & Raw & \tableoursheader +\method & Raw & \tableoursheader +\method \\
    \midrule

    \multirow{2}{*}{\textit{ETTh1}} & MSE & \secbest{1.337} & \dtwin{1.077}{1.337} & \secbest{0.449} & \dtwin{0.441}{0.449} & \secbest{0.459} & \dtwin{0.446}{0.459} & \secbest{0.534} & \dtwin{0.520}{0.534} & \secbest{0.448} & \dtwin{0.445}{0.448} & \secbest{0.458} & \dtwin{0.453}{0.458} & \win{0.431} & \dttie{0.431}{0.431} & \secbest{0.428} & \dtwin{0.427}{0.428} & \win{0.429} & \dttie{0.429}{0.429} \\
     & MAE & \secbest{0.823} & \dtwin{0.743}{0.823} & \secbest{0.445} & \dtwin{0.439}{0.445} & \secbest{0.432} & \dtwin{0.429}{0.432} & \secbest{0.492} & \dtwin{0.483}{0.492} & \win{0.431} & \dttie{0.431}{0.431} & \secbest{0.429} & \dtwin{0.427}{0.429} & \secbest{0.455} & \dtwin{0.454}{0.455} & \secbest{0.452} & \dtwin{0.449}{0.452} & \secbest{0.441} & \dtwin{0.439}{0.441} \\
    \addlinespace[0.5em]

    \multirow{2}{*}{\textit{ETTh2}} & MSE & \secbest{2.657} & \dtwin{1.392}{2.657} & \secbest{0.795} & \dtwin{0.626}{0.795} & \secbest{0.384} & \dtwin{0.378}{0.384} & \secbest{0.480} & \dtwin{0.456}{0.480} & \secbest{0.384} & \dtwin{0.381}{0.384} & \secbest{0.386} & \dtwin{0.380}{0.386} & \secbest{0.377} & \dtwin{0.373}{0.377} & \secbest{0.381} & \dtwin{0.377}{0.381} & \secbest{0.395} & \dtwin{0.384}{0.395} \\
     & MAE & \secbest{1.120} & \dtwin{0.827}{1.120} & \secbest{0.599} & \dtwin{0.521}{0.599} & \secbest{0.403} & \dtwin{0.398}{0.403} & \secbest{0.460} & \dtwin{0.447}{0.460} & \secbest{0.400} & \dtwin{0.399}{0.400} & \secbest{0.402} & \dtwin{0.399}{0.402} & \secbest{0.398} & \dtwin{0.396}{0.398} & \secbest{0.401} & \dtwin{0.399}{0.401} & \secbest{0.411} & \dtwin{0.405}{0.411} \\
    \addlinespace[0.5em]

    \multirow{2}{*}{\textit{ETTm1}} & MSE & \secbest{1.480} & \dtwin{1.194}{1.480} & \secbest{0.413} & \dtwin{0.394}{0.413} & \secbest{0.396} & \dtwin{0.393}{0.396} & \secbest{0.519} & \dtwin{0.512}{0.519} & \win{0.399} & \dttie{0.399}{0.399} & \secbest{0.393} & \dtwin{0.392}{0.393} & \secbest{0.391} & \dtwin{0.388}{0.391} & \secbest{0.394} & \dtwin{0.393}{0.394} & \secbest{0.402} & \dtwin{0.399}{0.402} \\
     & MAE & \secbest{0.855} & \dtwin{0.760}{0.855} & \secbest{0.404} & \dtwin{0.392}{0.404} & \secbest{0.387} & \dtwin{0.385}{0.387} & \secbest{0.474} & \dtwin{0.470}{0.474} & \win{0.390} & \dttie{0.390}{0.390} & \secbest{0.385} & \dtwin{0.383}{0.385} & \secbest{0.385} & \dtwin{0.384}{0.385} & \secbest{0.386} & \dtwin{0.385}{0.386} & \secbest{0.397} & \dtwin{0.395}{0.397} \\
    \addlinespace[0.5em]

    \multirow{2}{*}{\textit{ETTm2}} & MSE & \secbest{3.357} & \dtwin{1.634}{3.357} & \secbest{0.379} & \dtwin{0.351}{0.379} & \win{0.279} & \dttie{0.279}{0.279} & \secbest{0.348} & \dtwin{0.341}{0.348} & \win{0.284} & \dttie{0.284}{0.284} & \win{0.277} & \dttie{0.277}{0.277} & \secbest{0.275} & \dtwin{0.274}{0.275} & \win{0.283} & \dttie{0.283}{0.283} & \secbest{0.284} & \dtwin{0.282}{0.284} \\
     & MAE & \secbest{1.128} & \dtwin{0.738}{1.128} & \secbest{0.395} & \dtwin{0.378}{0.395} & \win{0.319} & \dttie{0.319}{0.319} & \secbest{0.364} & \dtwin{0.360}{0.364} & \win{0.321} & \dttie{0.321}{0.321} & \win{0.317} & \dttie{0.317}{0.317} & \secbest{0.317} & \dtwin{0.316}{0.317} & \win{0.321} & \dttie{0.321}{0.321} & \secbest{0.325} & \dtwin{0.324}{0.325} \\
    \addlinespace[0.5em]

    \multirow{2}{*}{\textit{Weather}} & MSE & \secbest{0.484} & \dtwin{0.381}{0.484} & \secbest{0.241} & \dtwin{0.239}{0.241} & \secbest{0.251} & \dtwin{0.247}{0.251} & \secbest{0.289} & \dtwin{0.284}{0.289} & \secbest{0.268} & \dtwin{0.265}{0.268} & \secbest{0.282} & \dtwin{0.244}{0.282} & \secbest{0.243} & \dtwin{0.242}{0.243} & \secbest{0.246} & \dtwin{0.244}{0.246} & \secbest{0.252} & \dtwin{0.250}{0.252} \\
     & MAE & \secbest{0.432} & \dtwin{0.373}{0.432} & \secbest{0.268} & \dtwin{0.266}{0.268} & \secbest{0.270} & \dtwin{0.266}{0.270} & \secbest{0.305} & \dtwin{0.302}{0.305} & \secbest{0.280} & \dtwin{0.276}{0.280} & \secbest{0.301} & \dtwin{0.264}{0.301} & \secbest{0.264} & \dtwin{0.263}{0.264} & \secbest{0.267} & \dtwin{0.264}{0.267} & \secbest{0.270} & \dtwin{0.268}{0.270} \\
    \addlinespace[0.5em]

    \multirow{2}{*}{\textit{Electricity}} & MSE & \secbest{1.528} & \dtwin{0.489}{1.528} & \secbest{0.213} & \dtwin{0.211}{0.213} & \secbest{0.215} & \dtwin{0.210}{0.215} & \secbest{0.206} & \dtwin{0.203}{0.206} & \secbest{0.215} & \dtwin{0.213}{0.215} & \win{0.211} & \dtsecond{0.212}{0.211} & \secbest{0.208} & \dtwin{0.198}{0.208} & \win{0.220} & \dttie{0.220}{0.220} & \secbest{0.320} & \dtwin{0.297}{0.320} \\
     & MAE & \secbest{0.989} & \dtwin{0.459}{0.989} & \secbest{0.290} & \dtwin{0.287}{0.290} & \secbest{0.290} & \dtwin{0.286}{0.290} & \secbest{0.304} & \dtwin{0.301}{0.304} & \secbest{0.290} & \dtwin{0.273}{0.290} & \win{0.282} & \dttie{0.282}{0.282} & \secbest{0.284} & \dtwin{0.278}{0.284} & \win{0.294} & \dttie{0.294}{0.294} & \secbest{0.377} & \dtwin{0.364}{0.377} \\
    \addlinespace[0.5em]

    \multirow{2}{*}{\textit{ILI}} & MSE & \secbest{7.130} & \dtwin{6.235}{7.130} & \secbest{5.035} & \dtwin{4.321}{5.035} & \secbest{3.321} & \dtwin{2.942}{3.321} & \secbest{6.045} & \dtwin{4.872}{6.045} & \secbest{3.163} & \dtwin{2.994}{3.163} & \secbest{3.182} & \dtwin{3.151}{3.182} & \secbest{3.081} & \dtwin{2.977}{3.081} & \secbest{2.407} & \dtwin{2.223}{2.407} & \secbest{2.804} & \dtwin{2.550}{2.804} \\
     & MAE & \secbest{1.901} & \dtwin{1.772}{1.901} & \secbest{1.542} & \dtwin{1.406}{1.542} & \secbest{1.110} & \dtwin{1.064}{1.110} & \secbest{1.303} & \dtwin{1.223}{1.303} & \secbest{1.069} & \dtwin{1.048}{1.069} & \secbest{1.148} & \dtwin{1.142}{1.148} & \secbest{1.065} & \dtwin{1.038}{1.065} & \secbest{0.965} & \dtwin{0.931}{0.965} & \win{0.991} & \dtsecond{1.001}{0.991} \\
    \addlinespace[0.5em]

    \multirow{2}{*}{\textit{Exchange Rate}} & MSE & \secbest{3.533} & \dtwin{1.434}{3.533} & \secbest{0.933} & \dtwin{0.851}{0.933} & \secbest{0.458} & \dtwin{0.444}{0.458} & \secbest{0.478} & \dtwin{0.477}{0.478} & \secbest{0.451} & \dtwin{0.438}{0.451} & \secbest{0.445} & \dtwin{0.441}{0.445} & \secbest{0.445} & \dtwin{0.441}{0.445} & \secbest{0.446} & \dtwin{0.444}{0.446} & \secbest{0.526} & \dtwin{0.469}{0.526} \\
     & MAE & \secbest{1.410} & \dtwin{0.875}{1.410} & \secbest{0.657} & \dtwin{0.626}{0.657} & \secbest{0.457} & \dtwin{0.452}{0.457} & \win{0.478} & \dtsecond{0.479}{0.478} & \secbest{0.455} & \dtwin{0.448}{0.455} & \secbest{0.446} & \dtwin{0.445}{0.446} & \secbest{0.449} & \dtwin{0.445}{0.449} & \secbest{0.452} & \dtwin{0.448}{0.452} & \secbest{0.486} & \dtwin{0.463}{0.486} \\
    \addlinespace[0.5em]

    \multirow{2}{*}{\textit{ECG}} & MSE & \secbest{1.306} & \dtwin{1.052}{1.306} & \secbest{0.711} & \dtwin{0.682}{0.711} & \secbest{0.750} & \dtwin{0.722}{0.750} & \secbest{0.745} & \dtwin{0.740}{0.745} & \secbest{0.762} & \dtwin{0.740}{0.762} & \secbest{0.715} & \dtwin{0.710}{0.715} & \secbest{0.725} & \dtwin{0.694}{0.725} & \secbest{0.737} & \dtwin{0.730}{0.737} & \secbest{0.750} & \dtwin{0.733}{0.750} \\
     & MAE & \secbest{0.826} & \dtwin{0.693}{0.826} & \secbest{0.518} & \dtwin{0.505}{0.518} & \secbest{0.533} & \dtwin{0.517}{0.533} & \secbest{0.532} & \dtwin{0.529}{0.532} & \secbest{0.537} & \dtwin{0.528}{0.537} & \secbest{0.515} & \dtwin{0.512}{0.515} & \secbest{0.518} & \dtwin{0.502}{0.518} & \secbest{0.524} & \dtwin{0.519}{0.524} & \secbest{0.532} & \dtwin{0.524}{0.532} \\

    \midrule
    \rowcolor{TableSummary}
    \multicolumn{2}{c}{\textbf{Avg. Gain}} &
    \multicolumn{2}{c}{\textbf{35.4\% / 23.3\%}} &
    \multicolumn{2}{c}{\textbf{7.1\% / 4.4\%}} &
    \multicolumn{2}{c}{\textbf{3.0\% / 1.5\%}} &
    \multicolumn{2}{c}{\textbf{3.8\% / 1.7\%}} &
    \multicolumn{2}{c}{\textbf{1.6\% / 1.4\%}} &
    \multicolumn{2}{c}{\textbf{2.0\% / 1.7\%}} &
    \multicolumn{2}{c}{\textbf{1.8\% / 1.1\%}} &
    \multicolumn{2}{c}{\textbf{1.3\% / 0.9\%}} &
    \multicolumn{2}{c}{\textbf{3.8\% / 1.3\%}} \\

    \bottomrule
    \end{tabular}%
    }
\end{table*}

\noindent\textbf{Implementation Details.}
All models retain their published backbone architecture, default task head, and default optimization hyperparameters. We implement the paired Raw and +\method variants in PyTorch, optimize with Adam~\cite{Adam}, and run experiments on NVIDIA A800 GPUs. Unless stated otherwise, \method fixes $p_{\min}=0.05$ and $p_{\max}=0.50$, and initializes the learnable parameters at $\alpha=10$, $\mathbf{b}_s=0$, and $\gamma=1$. The scorer and mapper are active only during training; evaluation disables them and all stochastic dropout, so inference follows the deterministic backbone graph. Dataset-specific splits, window construction, corruption generation, decoder usage, controlled baselines, and random-seed protocols are stated with the corresponding RQ below and tabulated in Supplementary Appendices~A--B. Evaluation reports MSE/MAE for forecasting, top-1 accuracy for classification, and point-adjusted precision, recall, and F1 for anomaly detection.

\subsection{RQ1: Robustness under Reliability Heterogeneity}
\label{sec:rq1}

\noindent\textbf{Performance under Controlled Corruptions.}
Table~\ref{tab:synthetic_forecasting_summary} summarizes the forecasting performance on \textit{Synth-12} under the layered Gaussian, heavy-tailed, and point-wise missing corruptions of Table~\ref{tab:synthetic_signal_corruption_definitions}. Each corrupted window is paired with its clean target, so the model trains on the corrupted observation but is evaluated against the clean future. The benchmark contains 33,600 windows with $\sigma\in\{0.1,0.3,0.5,0.7,0.9\}$ and horizons $H\in\{96,192,336,720\}$. Across all backbones and corruption scales, \method lowers horizon-averaged MSE; Informer \cite{zhou2021informer} shows the largest reduction, 46.0\% lower average MSE and up to 48.2\% at $\sigma=0.3$. Crossformer \cite{zhang2022crossformer} achieves an 11.9\% gain, while TimesNet \cite{wu2023timesnet} improves by 3.6\%. PatchTST, iTransformer, and TimeMixer achieve gains of 0.8\%--2.9\%. Newer backbones follow: WPMixer \cite{murad2025wpmixer}, TimeFilter \cite{hutimefilter}, and MultiPatchFormer \cite{naghashi2025multiscale} reduce average MSE by 1.7\%, 1.2\%, and 3.6\%, improving at all scales. These gains across backbone designs support a separate capacity-modulation path: sample-adaptive regularization complements temporal feature modeling without changing the backbone's inference architecture.

\noindent\textbf{Non-Monotonic Corruption Response.}
Figure~\ref{fig:paradox_comparison} plots the horizon-averaged MSE of Informer and Crossformer against the corruption scale $\sigma \in [0.1, 0.9]$. Error does not grow monotonically with noise for either baseline: Informer rises to a mid-noise maximum and then \emph{decreases} at larger $\sigma$, while Crossformer decreases throughout. These different baseline trajectories motivate comparing Raw and +\method at each matched corruption scale. \method (blue) remains below the corresponding baseline at every plotted level, flattening the Informer mid-noise peak and reducing Crossformer error throughout. The matched gains across these different trajectories provide a more consistent robustness indicator than the slope of either error curve alone.

\begin{figure}[!ht]
    \centering
    \includegraphics[width=\linewidth]{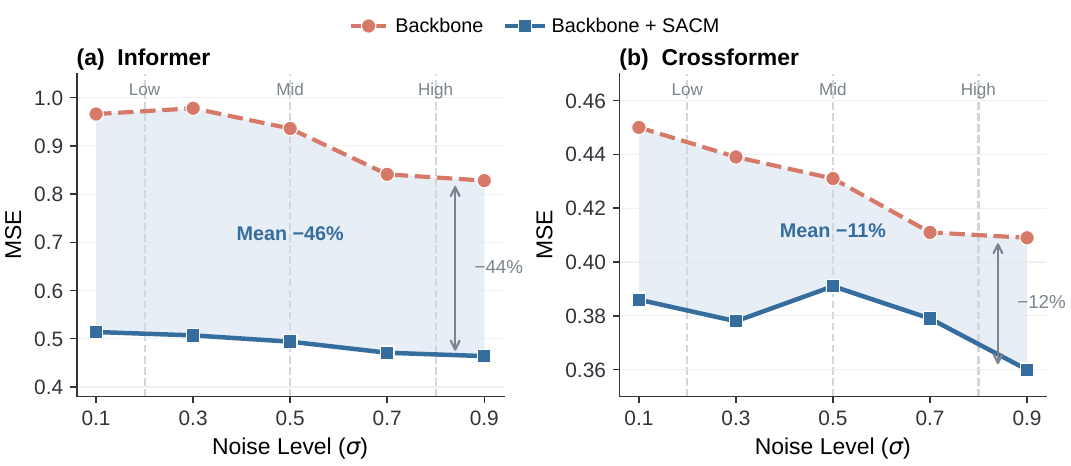}
    \caption{\textbf{Error trajectories under increasing corruption.} (a) Informer exhibits a mid-noise error peak under fixed capacity (red), which is absent with \method (blue). (b) Crossformer shows decreasing baseline error across the plotted corruption levels; \method remains lower and smoother.}
    \label{fig:paradox_comparison}
\end{figure}

\subsection{RQ2: Real-World Forecasting \& Capacity Allocation}
\label{sec:rq2}

\noindent\textbf{Long-Term Forecasting Performance.}
Table~\ref{tab:real_world_forecasting_summary} compares nine backbones on nine real-world benchmarks using chronological train/validation/test splits and the forecasting settings of Supplementary Appendix~A. \method achieves gains across many backbones and horizons, with smaller or tied changes on ETTm2. Informer reduces average MSE by 68.0\% on \textit{Electricity}, 59.4\% on \textit{Exchange Rate}, and 47.6\% on \textit{ETTh2}. TimeMixer improves by 13.7\% on \textit{Weather}, while MultiPatchFormer gains 7.2\% on \textit{Electricity}. On \textit{ILI}, WPMixer, TimeFilter, and MultiPatchFormer reduce MSE by 3.4\%, 7.6\%, and 9.1\%, respectively. On \textit{ECG} waveforms, Informer and PatchTST improve MSE by 19.4\% and 3.8\%. Larger gains on Informer and smaller gains on newer backbones indicate different regularization sensitivity, while improvements across domains support its use with diverse temporal representations.

\noindent\textbf{Sample-Wise Capacity Allocation Analysis.}
Table~\ref{tab:dropout_rate_distribution} reports observed sample-wise dropout rates $p_i$ from the final convergence epochs on the training windows used for forecasting. \textit{ILI} has a lower mean dropout rate than \textit{ETTm2} ($0.267$ vs. $0.329$), but its variance is fourfold larger ($0.0194$ vs. $0.0049$). Thus, \method retains more activations on average on \textit{ILI} and varies retention more across its windows. The distributions illustrate both regularization strength and sample allocation.

\begin{table}[H]
    \centering
    \small
    \setlength{\tabcolsep}{2.5pt}
    \caption{Distribution of sample-adaptive dropout rates during final epochs. Statistics after batch-relative rate mapping.}
    \label{tab:dropout_rate_distribution}
    \renewcommand{\arraystretch}{1.12}
    \begin{tabular*}{\columnwidth}{@{\extracolsep{\fill}} l c c c l @{}}
        \toprule
        \rowcolor{TableHeader}
        \textbf{Dataset} & \textbf{Mean $\pm$ Std.} & \textbf{Variance} & \textbf{Rel. Var.} & \textbf{Allocation} \\
        \midrule
        \textit{ETTm2} & $0.329 \pm 0.070$ & 0.0049 & $1.0\times$ & Lower variation \\
        \textit{ILI}   & $0.267 \pm 0.139$ & 0.0194 & $4.0\times$ & Higher variation \\
        \bottomrule
    \end{tabular*}
\end{table}

\subsection{RQ3: Data-Regime and Cross-Domain Generalization}
\label{sec:rq3}

{\parfillskip=0pt\relax
\noindent\textbf{Varying Training-Data Fractions.}
Table~\ref{tab:generalization_results} evaluates PatchTST on \textit{ILI} using 10\%, 25\%, 50\%, 75\%, and 100\% of the chronological training split, with validation and test windows unchanged. \method lowers both errors at every fraction and improves monotonically as data grows (MAE $1.295 \to 1.064$, MSE $4.074 \to 2.942$), whereas the unregularized model stalls and then regresses: its MSE holds at $3.169$ from 50\% to 75\% and rises to $3.321$ at 100\%. \method's relative gain therefore peaks at full data ($4.14\%$ MAE, $11.41\%$ MSE), so more training data strengthens its advantage; the MSE-dominant reduction indicates suppression of the large deviations that dominate squared error. Notably, \method at 50\% data already beats the raw model at 100\% (MSE $3.143$ vs.\ $3.321$).\par}

{\parfillskip=0pt\relax
\noindent\textbf{Zero-Shot Cross-Domain Transfer.}
We test whether source-domain residual regularization aids cross-domain generalization. We apply a source-trained PatchTST to the target. Table~\ref{tab:generalization_results} lists eight ETT transfer directions. No target-domain fine-tuning or label selection is used. \method lowers both errors in all eight directions and never degrades a metric (average MAE $0.402 \to 0.395$, MSE $0.407 \to 0.404$), with its largest gain (MAE $0.519 \to 0.501$) on \textit{ETTh2}~$\rightarrow$~\textit{ETTh1}, the highest-error direction. The scorer and dropout are disabled at evaluation, so these gains add no inference cost. Because the target pass is identical in both settings, the gains stem from source training alone.\par}

\begin{table}[H]
    \centering
    \footnotesize
    \caption{PatchTST performance across training-data fractions (ILI, top) and zero-shot OOD transfer (ETT, bottom). Shaded cells denote \method; arrows indicate relative changes.}
    \label{tab:generalization_results}
    \setlength{\tabcolsep}{3.2pt}
    \renewcommand{\arraystretch}{1.00}
    \setlength{\aboverulesep}{1pt}
    \setlength{\belowrulesep}{1pt}
    \providecommand{\avgimp}[2]{\tableforecastchange{#1}{#2}}

    \resizebox{0.94\columnwidth}{!}{%
        \begin{tabular}{@{}l cc cc@{}}
            \toprule
            \rowcolor{TableHeader}
            \multicolumn{5}{c}{\textbf{Training-Data Fractions on \textit{ILI}}} \\
            \cmidrule(lr){1-5}
            \rowcolor{TableHeader}
            \textbf{Scenario}
            & \multicolumn{2}{c}{\textbf{Raw}}
            & \multicolumn{2}{c}{\tableoursheader\textbf{+\method}} \\
            \cmidrule(lr){2-3}\cmidrule(lr){4-5}
            \rowcolor{TableSubheader}
            & \textbf{MAE $\downarrow$} & \textbf{MSE $\downarrow$}
            & \tableoursheader\textbf{MAE $\downarrow$} & \tableoursheader\textbf{MSE $\downarrow$} \\
            \midrule
            10\% Data
            & \underline{1.333} & \underline{4.185}
            & \tableourscell\textbf{1.295}\avgimp{1.333}{1.295} & \tableourscell\textbf{4.074}\avgimp{4.185}{4.074} \\
            25\% Data
            & \underline{1.156} & \underline{3.201}
            & \tableourscell\textbf{1.144}\avgimp{1.156}{1.144} & \tableourscell\textbf{3.195}\avgimp{3.201}{3.195} \\
            50\% Data
            & \underline{1.116} & \underline{3.169}
            & \tableourscell\textbf{1.106}\avgimp{1.116}{1.106} & \tableourscell\textbf{3.143}\avgimp{3.169}{3.143} \\
            75\% Data
            & \underline{1.111} & \underline{3.169}
            & \tableourscell\textbf{1.096}\avgimp{1.111}{1.096} & \tableourscell\textbf{3.128}\avgimp{3.169}{3.128} \\
            100\% Data
            & \underline{1.110} & \underline{3.321}
            & \tableourscell\textbf{1.064}\avgimp{1.110}{1.064} & \tableourscell\textbf{2.942}\avgimp{3.321}{2.942} \\
            \midrule
            \textbf{Average}
            & \underline{1.165} & \underline{3.409}
            & \textbf{1.141}\avgimp{1.165}{1.141} & \textbf{3.296}\avgimp{3.409}{3.296} \\
            \bottomrule
            \noalign{\vskip 4pt}
            \toprule
            \rowcolor{TableHeader}
            \multicolumn{5}{c}{\textbf{Zero-Shot Generalization on \textit{ETT}}} \\
            \cmidrule(lr){1-5}
            \rowcolor{TableHeader}
            \textbf{Scenario}
            & \multicolumn{2}{c}{\textbf{Raw}}
            & \multicolumn{2}{c}{\tableoursheader\textbf{+\method}} \\
            \cmidrule(lr){2-3}\cmidrule(lr){4-5}
            \rowcolor{TableSubheader}
            & \textbf{MAE $\downarrow$} & \textbf{MSE $\downarrow$}
            & \tableoursheader\textbf{MAE $\downarrow$} & \tableoursheader\textbf{MSE $\downarrow$} \\
            \midrule
            \textit{ETTh1} $\rightarrow$ \textit{ETTh2}
            & \underline{0.391} & \underline{0.406}
            & \tableourscell\textbf{0.387}\avgimp{0.391}{0.387} & \tableourscell\textbf{0.404}\avgimp{0.406}{0.404} \\
            \textit{ETTh1} $\rightarrow$ \textit{ETTm2}
            & \underline{0.319} & \underline{0.359}
            & \tableourscell\textbf{0.318}\avgimp{0.319}{0.318} & \tableourscell\textbf{0.358}\avgimp{0.359}{0.358} \\
            \textit{ETTh2} $\rightarrow$ \textit{ETTh1}
            & \underline{0.519} & \underline{0.480}
            & \tableourscell\textbf{0.501}\avgimp{0.519}{0.501} & \tableourscell\textbf{0.473}\avgimp{0.480}{0.473} \\
            \textit{ETTh2} $\rightarrow$ \textit{ETTm2}
            & \underline{0.325} & \underline{0.364}
            & \tableourscell\textbf{0.317}\avgimp{0.325}{0.317} & \tableourscell\textbf{0.358}\avgimp{0.364}{0.358} \\
            \textit{ETTm1} $\rightarrow$ \textit{ETTh2}
            & \underline{0.457} & \underline{0.442}
            & \tableourscell\textbf{0.448}\avgimp{0.457}{0.448} & \tableourscell\textbf{0.439}\avgimp{0.442}{0.439} \\
            \textit{ETTm1} $\rightarrow$ \textit{ETTm2}
            & \underline{0.305} & \underline{0.331}
            & \tableourscell\textbf{0.302}\avgimp{0.305}{0.302} & \tableourscell\textbf{0.330}\avgimp{0.331}{0.330} \\
            \textit{ETTm2} $\rightarrow$ \textit{ETTh2}
            & \underline{0.424} & \underline{0.428}
            & \tableourscell\textbf{0.420}\avgimp{0.424}{0.420} & \tableourscell\textbf{0.427}\avgimp{0.428}{0.427} \\
            \textit{ETTm2} $\rightarrow$ \textit{ETTm1}
            & \underline{0.476} & \underline{0.444}
            & \tableourscell\textbf{0.470}\avgimp{0.476}{0.470} & \tableourscell\textbf{0.442}\avgimp{0.444}{0.442} \\
            \midrule
            \textbf{Average}
            & \underline{0.402} & \underline{0.407}
            & \textbf{0.395}\avgimp{0.402}{0.395} & \textbf{0.404}\avgimp{0.407}{0.404} \\
            \bottomrule
        \end{tabular}%
    }
\end{table}

\begin{table*}[!t]
    \centering
    \scriptsize
    \caption{Classification accuracy (\%) on 32 datasets and six backbones. Shaded +\method cells denote \method and show changes from Raw in percentage points (pp); bold marks the higher accuracy within each pair.}
    \label{tab:classification_results}
    \setlength{\tabcolsep}{3.8pt}
    \renewcommand{\arraystretch}{1.00}
    \providecommand{\gc}{\tableourscell}
    \providecommand{\clsraw}[2]{%
        \ifdim#1pt>#2pt \textbf{#1}\else
            \ifdim#1pt<#2pt \underline{#1}\else \textbf{#1}\fi
        \fi
    }
    \providecommand{\clsdt}[2]{%
        \gc
        \ifdim#2pt>#1pt \textbf{#2}\else
            \ifdim#2pt<#1pt \underline{#2}\else \textbf{#2}\fi
        \fi
        \tableppchange{#1}{#2}%
    }
    \resizebox{\textwidth}{!}{%
    \begin{tabular}{@{}l *{12}{c}@{}}
        \toprule
        \rowcolor{TableHeader}
        \multirow{2}{*}{\textbf{Dataset}} & \multicolumn{2}{c}{\textbf{iTransformer}} & \multicolumn{2}{c}{\textbf{PatchTST}} & \multicolumn{2}{c}{\textbf{NSFormer}} & \multicolumn{2}{c}{\textbf{TimesNet}} & \multicolumn{2}{c}{\textbf{InceptionTime}} & \multicolumn{2}{c}{\textbf{MiniROCKET}} \\
        \cmidrule(lr){2-3} \cmidrule(lr){4-5} \cmidrule(lr){6-7} \cmidrule(lr){8-9} \cmidrule(lr){10-11} \cmidrule(lr){12-13}
        \rowcolor{TableSubheader}
        & Raw & \tableoursheader +\method & Raw & \tableoursheader +\method & Raw & \tableoursheader +\method & Raw & \tableoursheader +\method & Raw & \tableoursheader +\method & Raw & \tableoursheader +\method \\
        \midrule
        \textit{ArticularyWordRecognition} & \clsraw{97.67}{98.33} & \clsdt{97.67}{98.33} & \clsraw{97.33}{97.67} & \clsdt{97.33}{97.67} & \clsraw{97.00}{98.00} & \clsdt{97.00}{98.00} & \clsraw{97.33}{97.67} & \clsdt{97.33}{97.67} & \clsraw{99.00}{99.00} & \clsdt{99.00}{99.00} & \clsraw{96.67}{97.67} & \clsdt{96.67}{97.67} \\
        \textit{AtrialFibrillation} & \clsraw{33.33}{40.00} & \clsdt{33.33}{40.00} & \clsraw{46.67}{53.33} & \clsdt{46.67}{53.33} & \clsraw{33.33}{40.00} & \clsdt{33.33}{40.00} & \clsraw{33.33}{33.33} & \clsdt{33.33}{33.33} & \clsraw{33.33}{33.33} & \clsdt{33.33}{33.33} & \clsraw{40.00}{40.00} & \clsdt{40.00}{40.00} \\
        \textit{BasicMotions} & \clsraw{92.50}{95.00} & \clsdt{92.50}{95.00} & \clsraw{67.50}{70.00} & \clsdt{67.50}{70.00} & \clsraw{85.00}{87.50} & \clsdt{85.00}{87.50} & \clsraw{90.00}{90.00} & \clsdt{90.00}{90.00} & \clsraw{25.00}{25.00} & \clsdt{25.00}{25.00} & \clsraw{85.00}{92.50} & \clsdt{85.00}{92.50} \\
        \textit{CharacterTrajectories} & \clsraw{98.40}{98.61} & \clsdt{98.40}{98.61} & \clsraw{97.84}{97.98} & \clsdt{97.84}{97.98} & \clsraw{98.33}{98.33} & \clsdt{98.33}{98.33} & \clsraw{98.33}{98.54} & \clsdt{98.33}{98.54} & \clsraw{99.79}{99.79} & \clsdt{99.79}{99.79} & \clsraw{96.45}{97.49} & \clsdt{96.45}{97.49} \\
        \textit{Cricket} & \clsraw{87.50}{88.89} & \clsdt{87.50}{88.89} & \clsraw{84.72}{90.28} & \clsdt{84.72}{90.28} & \clsraw{90.28}{98.61} & \clsdt{90.28}{98.61} & \clsraw{79.17}{84.72} & \clsdt{79.17}{84.72} & \clsraw{98.61}{98.61} & \clsdt{98.61}{98.61} & \clsraw{93.06}{93.06} & \clsdt{93.06}{93.06} \\
        \textit{DuckDuckGeese} & \clsraw{38.00}{42.00} & \clsdt{38.00}{42.00} & \clsraw{24.00}{26.00} & \clsdt{24.00}{26.00} & \clsraw{24.00}{28.00} & \clsdt{24.00}{28.00} & \clsraw{56.00}{56.00} & \clsdt{56.00}{56.00} & \clsraw{54.00}{60.00} & \clsdt{54.00}{60.00} & \clsraw{24.00}{30.00} & \clsdt{24.00}{30.00} \\
        \textit{EigenWorms} & \clsraw{47.33}{49.62} & \clsdt{47.33}{49.62} & \clsraw{50.38}{55.73} & \clsdt{50.38}{55.73} & \clsraw{52.67}{56.49} & \clsdt{52.67}{56.49} & \clsraw{38.17}{45.04} & \clsdt{38.17}{45.04} & \clsraw{41.98}{61.83} & \clsdt{41.98}{61.83} & \clsraw{73.28}{73.28} & \clsdt{73.28}{73.28} \\
        \textit{Epilepsy} & \clsraw{76.81}{78.99} & \clsdt{76.81}{78.99} & \clsraw{83.33}{89.13} & \clsdt{83.33}{89.13} & \clsraw{49.28}{56.52} & \clsdt{49.28}{56.52} & \clsraw{69.57}{73.91} & \clsdt{69.57}{73.91} & \clsraw{95.65}{96.38} & \clsdt{95.65}{96.38} & \clsraw{92.03}{92.03} & \clsdt{92.03}{92.03} \\
        \textit{EthanolConcentration} & \clsraw{31.56}{34.22} & \clsdt{31.56}{34.22} & \clsraw{28.14}{29.66} & \clsdt{28.14}{29.66} & \clsraw{32.32}{32.32} & \clsdt{32.32}{32.32} & \clsraw{29.66}{31.94} & \clsdt{29.66}{31.94} & \clsraw{33.08}{35.36} & \clsdt{33.08}{35.36} & \clsraw{27.38}{30.42} & \clsdt{27.38}{30.42} \\
        \textit{ERing} & \clsraw{93.70}{94.07} & \clsdt{93.70}{94.07} & \clsraw{94.07}{94.07} & \clsdt{94.07}{94.07} & \clsraw{93.70}{94.44} & \clsdt{93.70}{94.44} & \clsraw{92.96}{93.33} & \clsdt{92.96}{93.33} & \clsraw{16.67}{33.33} & \clsdt{16.67}{33.33} & \clsraw{82.22}{85.19} & \clsdt{82.22}{85.19} \\
        \textit{FaceDetection} & \clsraw{67.51}{68.10} & \clsdt{67.51}{68.10} & \clsraw{65.21}{66.06} & \clsdt{65.21}{66.06} & \clsraw{68.67}{68.70} & \clsdt{68.67}{68.70} & \clsraw{66.97}{68.59} & \clsdt{66.97}{68.59} & \clsraw{65.72}{68.08} & \clsdt{65.72}{68.08} & \clsraw{50.28}{51.53} & \clsdt{50.28}{51.53} \\
        \textit{FingerMovements} & \clsraw{57.00}{62.00} & \clsdt{57.00}{62.00} & \clsraw{55.00}{57.00} & \clsdt{55.00}{57.00} & \clsraw{56.00}{58.00} & \clsdt{56.00}{58.00} & \clsraw{55.00}{61.00} & \clsdt{55.00}{61.00} & \clsraw{60.00}{63.00} & \clsdt{60.00}{63.00} & \clsraw{54.00}{58.00} & \clsdt{54.00}{58.00} \\
        \textit{HandMovementDirection} & \clsraw{47.30}{52.70} & \clsdt{47.30}{52.70} & \clsraw{56.76}{56.76} & \clsdt{56.76}{56.76} & \clsraw{58.11}{62.16} & \clsdt{58.11}{62.16} & \clsraw{58.11}{59.46} & \clsdt{58.11}{59.46} & \clsraw{43.24}{48.65} & \clsdt{43.24}{48.65} & \clsraw{37.84}{43.24} & \clsdt{37.84}{43.24} \\
        \textit{Handwriting} & \clsraw{26.59}{27.88} & \clsdt{26.59}{27.88} & \clsraw{25.41}{28.35} & \clsdt{25.41}{28.35} & \clsraw{33.53}{34.71} & \clsdt{33.53}{34.71} & \clsraw{23.29}{24.82} & \clsdt{23.29}{24.82} & \clsraw{64.94}{65.65} & \clsdt{64.94}{65.65} & \clsraw{30.59}{30.59} & \clsdt{30.59}{30.59} \\
        \textit{Heartbeat} & \clsraw{75.12}{76.10} & \clsdt{75.12}{76.10} & \clsraw{72.20}{72.68} & \clsdt{72.20}{72.68} & \clsraw{72.20}{72.68} & \clsdt{72.20}{72.68} & \clsraw{72.20}{72.68} & \clsdt{72.20}{72.68} & \clsraw{77.07}{78.05} & \clsdt{77.07}{78.05} & \clsraw{72.20}{72.20} & \clsdt{72.20}{72.20} \\
        \textit{InsectWingbeat} & \clsraw{70.09}{70.92} & \clsdt{70.09}{70.92} & \clsraw{53.07}{53.56} & \clsdt{53.07}{53.56} & \clsraw{56.11}{58.26} & \clsdt{56.11}{58.26} & \clsraw{63.10}{65.31} & \clsdt{63.10}{65.31} & \clsraw{70.51}{70.54} & \clsdt{70.51}{70.54} & \clsraw{48.13}{48.58} & \clsdt{48.13}{48.58} \\
        \midrule
        \textit{JapaneseVowels} & \clsraw{98.38}{98.38} & \clsdt{98.38}{98.38} & \clsraw{91.89}{91.89} & \clsdt{91.89}{91.89} & \clsraw{94.86}{94.86} & \clsdt{94.86}{94.86} & \clsraw{97.03}{97.30} & \clsdt{97.03}{97.30} & \clsraw{97.03}{97.57} & \clsdt{97.03}{97.57} & \clsraw{97.03}{97.03} & \clsdt{97.03}{97.03} \\
        \textit{Libras} & \clsraw{88.33}{88.33} & \clsdt{88.33}{88.33} & \clsraw{76.67}{78.89} & \clsdt{76.67}{78.89} & \clsraw{77.22}{77.78} & \clsdt{77.22}{77.78} & \clsraw{82.22}{85.56} & \clsdt{82.22}{85.56} & \clsraw{76.67}{78.33} & \clsdt{76.67}{78.33} & \clsraw{75.00}{75.56} & \clsdt{75.00}{75.56} \\
        \textit{LSST} & \clsraw{58.80}{60.02} & \clsdt{58.80}{60.02} & \clsraw{55.56}{56.49} & \clsdt{55.56}{56.49} & \clsraw{59.41}{61.11} & \clsdt{59.41}{61.11} & \clsraw{56.20}{57.30} & \clsdt{56.20}{57.30} & \clsraw{54.34}{54.34} & \clsdt{54.34}{54.34} & \clsraw{56.20}{56.49} & \clsdt{56.20}{56.49} \\
        \textit{MotorImagery} & \clsraw{61.00}{67.00} & \clsdt{61.00}{67.00} & \clsraw{61.00}{63.00} & \clsdt{61.00}{63.00} & \clsraw{59.00}{65.00} & \clsdt{59.00}{65.00} & \clsraw{62.00}{62.00} & \clsdt{62.00}{62.00} & \clsraw{66.00}{66.00} & \clsdt{66.00}{66.00} & \clsraw{52.00}{53.00} & \clsdt{52.00}{53.00} \\
        \textit{NATOPS} & \clsraw{90.00}{91.67} & \clsdt{90.00}{91.67} & \clsraw{74.44}{77.22} & \clsdt{74.44}{77.22} & \clsraw{82.78}{83.33} & \clsdt{82.78}{83.33} & \clsraw{91.67}{93.33} & \clsdt{91.67}{93.33} & \clsraw{95.56}{95.56} & \clsdt{95.56}{95.56} & \clsraw{82.22}{87.78} & \clsdt{82.22}{87.78} \\
        \textit{PenDigits} & \clsraw{97.97}{98.17} & \clsdt{97.97}{98.17} & \clsraw{97.14}{97.60} & \clsdt{97.14}{97.60} & \clsraw{98.60}{98.77} & \clsdt{98.60}{98.77} & \clsraw{98.08}{98.17} & \clsdt{98.08}{98.17} & \clsraw{98.54}{99.03} & \clsdt{98.54}{99.03} & \clsraw{98.14}{98.17} & \clsdt{98.14}{98.17} \\
        \textit{PEMS-SF} & \clsraw{87.86}{89.60} & \clsdt{87.86}{89.60} & \clsraw{84.97}{88.44} & \clsdt{84.97}{88.44} & \clsraw{87.28}{90.75} & \clsdt{87.28}{90.75} & \clsraw{80.35}{83.82} & \clsdt{80.35}{83.82} & \clsraw{74.45}{75.72} & \clsdt{74.45}{75.72} & \clsraw{64.16}{68.79} & \clsdt{64.16}{68.79} \\
        \textit{PhonemeSpectra} & \clsraw{11.03}{11.27} & \clsdt{11.03}{11.27} & \clsraw{11.15}{11.84} & \clsdt{11.15}{11.84} & \clsraw{13.42}{13.81} & \clsdt{13.42}{13.81} & \clsraw{10.83}{11.90} & \clsdt{10.83}{11.90} & \clsraw{30.96}{32.12} & \clsdt{30.96}{32.12} & \clsraw{16.22}{19.48} & \clsdt{16.22}{19.48} \\
        \textit{RacketSports} & \clsraw{80.92}{82.24} & \clsdt{80.92}{82.24} & \clsraw{78.95}{78.95} & \clsdt{78.95}{78.95} & \clsraw{83.55}{84.21} & \clsdt{83.55}{84.21} & \clsraw{76.97}{77.63} & \clsdt{76.97}{77.63} & \clsraw{90.79}{90.79} & \clsdt{90.79}{90.79} & \clsraw{75.00}{76.32} & \clsdt{75.00}{76.32} \\
        \textit{SelfRegulationSCP1} & \clsraw{92.15}{92.83} & \clsdt{92.15}{92.83} & \clsraw{82.25}{83.96} & \clsdt{82.25}{83.96} & \clsraw{80.20}{82.59} & \clsdt{80.20}{82.59} & \clsraw{91.13}{92.83} & \clsdt{91.13}{92.83} & \clsraw{87.71}{87.71} & \clsdt{87.71}{87.71} & \clsraw{54.95}{56.66} & \clsdt{54.95}{56.66} \\
        \textit{SelfRegulationSCP2} & \clsraw{58.33}{60.56} & \clsdt{58.33}{60.56} & \clsraw{54.44}{55.00} & \clsdt{54.44}{55.00} & \clsraw{53.89}{53.89} & \clsdt{53.89}{53.89} & \clsraw{55.56}{56.11} & \clsdt{55.56}{56.11} & \clsraw{58.33}{59.44} & \clsdt{58.33}{59.44} & \clsraw{58.33}{58.89} & \clsdt{58.33}{58.89} \\
        \textit{SpokenArabicDigits} & \clsraw{98.45}{98.64} & \clsdt{98.45}{98.64} & \clsraw{97.36}{97.50} & \clsdt{97.36}{97.50} & \clsraw{99.45}{99.50} & \clsdt{99.45}{99.50} & \clsraw{98.41}{98.73} & \clsdt{98.41}{98.73} & \clsraw{99.73}{99.82} & \clsdt{99.73}{99.82} & \clsraw{98.27}{98.73} & \clsdt{98.27}{98.73} \\
        \textit{StandWalkJump} & \clsraw{40.00}{40.00} & \clsdt{40.00}{40.00} & \clsraw{46.67}{53.33} & \clsdt{46.67}{53.33} & \clsraw{40.00}{73.33} & \clsdt{40.00}{73.33} & \clsraw{46.67}{46.67} & \clsdt{46.67}{46.67} & \clsraw{40.00}{60.00} & \clsdt{40.00}{60.00} & \clsraw{46.67}{46.67} & \clsdt{46.67}{46.67} \\
        \textit{UWaveGestureLibrary} & \clsraw{87.19}{87.50} & \clsdt{87.19}{87.50} & \clsraw{85.31}{85.62} & \clsdt{85.31}{85.62} & \clsraw{84.38}{84.69} & \clsdt{84.38}{84.69} & \clsraw{84.69}{85.62} & \clsdt{84.69}{85.62} & \clsraw{90.00}{90.94} & \clsdt{90.00}{90.94} & \clsraw{80.94}{84.38} & \clsdt{80.94}{84.38} \\
        \textit{HAR} & \clsraw{94.54}{95.22} & \clsdt{94.54}{95.22} & \clsraw{83.64}{83.85} & \clsdt{83.64}{83.85} & \clsraw{86.87}{89.38} & \clsdt{86.87}{89.38} & \clsraw{93.55}{94.10} & \clsdt{93.55}{94.10} & \clsraw{94.44}{95.01} & \clsdt{94.44}{95.01} & \clsraw{93.04}{93.08} & \clsdt{93.04}{93.08} \\
        \textit{Sleep-EDF} & \clsraw{80.19}{80.43} & \clsdt{80.19}{80.43} & \clsraw{87.37}{87.54} & \clsdt{87.37}{87.54} & \clsraw{86.09}{86.40} & \clsdt{86.09}{86.40} & \clsraw{82.35}{83.17} & \clsdt{82.35}{83.17} & \clsraw{89.42}{89.58} & \clsdt{89.42}{89.58} & \clsraw{85.60}{86.21} & \clsdt{85.60}{86.21} \\
        \midrule
        \rowcolor{TableSummary}
        \multicolumn{13}{c}{\textbf{Overall (192 pairs):} Accuracy $68.82\% \rightarrow$ \tablegain{70.91\%} (\tablegain{$2.09\,\mathrm{pp}\,\uparrow$}); W/T/L = \textbf{158/34/0}} \\
        \bottomrule
    \end{tabular}%
    }
    \vspace{-1em}
\end{table*}

\begin{table*}[!t]
    \centering
    \caption{Point-adjusted anomaly-detection precision (P), recall (R), and F1 scores across four datasets (\%). Shaded +\method columns denote \method; bold marks higher values per pair. Avg. $\Delta$F1 reports macro-averaged F1 gains in percentage points (pp).}
    \label{tab:anomaly_detection_results}
    \scriptsize
    \setlength{\tabcolsep}{3.0pt}
    \renewcommand{\arraystretch}{1.05}
    \providecommand{\adfraw}[2]{%
        \ifdim#1pt>#2pt \textbf{#1}\else
            \ifdim#1pt<#2pt \underline{#1}\else \textbf{#1}\fi
        \fi
    }
    \providecommand{\adfdt}[2]{%
        \ifdim#2pt>#1pt \textbf{#2}\else
            \ifdim#2pt<#1pt \underline{#2}\else \textbf{#2}\fi
        \fi
        \tableppchange{#1}{#2}%
    }
    \resizebox{\textwidth}{!}{%
    \begin{tabular}{@{}l c *{7}{c >{\columncolor{TableOurs}}c}@{}}
        \toprule
        \rowcolor{TableHeader}
        \multirow{2}{*}{\textbf{Dataset}} &
        \multirow{2}{*}{\textbf{Metric}} &
        \multicolumn{2}{c}{\textbf{PatchTST}} &
        \multicolumn{2}{c}{\textbf{iTransformer}} &
        \multicolumn{2}{c}{\textbf{NSFormer}} &
        \multicolumn{2}{c}{\textbf{AnomTrans}} &
        \multicolumn{2}{c}{\textbf{TranAD}} &
        \multicolumn{2}{c}{\textbf{DCdetector}} &
        \multicolumn{2}{c}{\textbf{TimesNet}} \\
        \cmidrule(lr){3-4} \cmidrule(lr){5-6} \cmidrule(lr){7-8} \cmidrule(lr){9-10}
        \cmidrule(lr){11-12} \cmidrule(lr){13-14} \cmidrule(lr){15-16}
        \rowcolor{TableSubheader}
        & & Raw & \tableoursheader +\method & Raw & \tableoursheader +\method & Raw & \tableoursheader +\method &
        Raw & \tableoursheader +\method & Raw & \tableoursheader +\method & Raw & \tableoursheader +\method & Raw & \tableoursheader +\method \\
        \midrule
        \multirow{3}{*}{\textit{SMD} (\%)}
        & P  & \adfraw{44.74}{70.27} & \adfdt{44.74}{70.27} & \adfraw{53.71}{79.98} & \adfdt{53.71}{79.98} & \adfraw{63.71}{67.10} & \adfdt{63.71}{67.10} & \adfraw{66.01}{66.06} & \adfdt{66.01}{66.06} & \adfraw{60.78}{70.86} & \adfdt{60.78}{70.86} & \adfraw{66.63}{66.17} & \adfdt{66.63}{66.17} & \adfraw{44.78}{45.66} & \adfdt{44.78}{45.66} \\
        & R  & \adfraw{99.04}{96.80} & \adfdt{99.04}{96.80} & \adfraw{99.35}{86.84} & \adfdt{99.35}{86.84} & \adfraw{78.52}{79.61} & \adfdt{78.52}{79.61} & \adfraw{84.97}{85.16} & \adfdt{84.97}{85.16} & \adfraw{84.99}{83.63} & \adfdt{84.99}{83.63} & \adfraw{83.15}{84.46} & \adfdt{83.15}{84.46} & \adfraw{99.92}{99.92} & \adfdt{99.92}{99.92} \\
        & F1 & \adfraw{61.64}{81.43} & \adfdt{61.64}{81.43} & \adfraw{69.73}{83.27} & \adfdt{69.73}{83.27} & \adfraw{70.34}{72.82} & \adfdt{70.34}{72.82} & \adfraw{74.30}{74.40} & \adfdt{74.30}{74.40} & \adfraw{70.87}{76.72} & \adfdt{70.87}{76.72} & \adfraw{73.98}{74.20} & \adfdt{73.98}{74.20} & \adfraw{61.85}{62.68} & \adfdt{61.85}{62.68} \\
        \midrule
        \multirow{3}{*}{\textit{MSL} (\%)}
        & P  & \adfraw{72.93}{88.19} & \adfdt{72.93}{88.19} & \adfraw{83.01}{88.22} & \adfdt{83.01}{88.22} & \adfraw{55.96}{70.44} & \adfdt{55.96}{70.44} & \adfraw{45.14}{59.26} & \adfdt{45.14}{59.26} & \adfraw{66.08}{67.18} & \adfdt{66.08}{67.18} & \adfraw{74.01}{74.69} & \adfdt{74.01}{74.69} & \adfraw{86.48}{88.72} & \adfdt{86.48}{88.72} \\
        & R  & \adfraw{91.09}{77.61} & \adfdt{91.09}{77.61} & \adfraw{38.93}{76.18} & \adfdt{38.93}{76.18} & \adfraw{45.72}{86.64} & \adfdt{45.72}{86.64} & \adfraw{4.27}{96.77} & \adfdt{4.27}{96.77} & \adfraw{34.02}{72.33} & \adfdt{34.02}{72.33} & \adfraw{82.65}{92.64} & \adfdt{82.65}{92.64} & \adfraw{53.70}{94.94} & \adfdt{53.70}{94.94} \\
        & F1 & \adfraw{81.00}{82.56} & \adfdt{81.00}{82.56} & \adfraw{53.00}{81.76} & \adfdt{53.00}{81.76} & \adfraw{50.32}{77.70} & \adfdt{50.32}{77.70} & \adfraw{7.80}{73.50} & \adfdt{7.80}{73.50} & \adfraw{44.92}{69.66} & \adfdt{44.92}{69.66} & \adfraw{78.09}{82.70} & \adfdt{78.09}{82.70} & \adfraw{66.26}{91.73} & \adfdt{66.26}{91.73} \\
        \midrule
        \multirow{3}{*}{\textit{SMAP} (\%)}
        & P  & \adfraw{77.96}{61.93} & \adfdt{77.96}{61.93} & \adfraw{59.78}{65.70} & \adfdt{59.78}{65.70} & \adfraw{75.35}{78.72} & \adfdt{75.35}{78.72} & \adfraw{74.52}{67.77} & \adfdt{74.52}{67.77} & \adfraw{73.69}{60.67} & \adfdt{73.69}{60.67} & \adfraw{62.34}{76.64} & \adfdt{62.34}{76.64} & \adfraw{52.26}{72.53} & \adfdt{52.26}{72.53} \\
        & R  & \adfraw{60.89}{85.92} & \adfdt{60.89}{85.92} & \adfraw{57.70}{76.54} & \adfdt{57.70}{76.54} & \adfraw{44.06}{84.76} & \adfdt{44.06}{84.76} & \adfraw{81.12}{99.93} & \adfdt{81.12}{99.93} & \adfraw{62.93}{91.14} & \adfdt{62.93}{91.14} & \adfraw{92.43}{97.10} & \adfdt{92.43}{97.10} & \adfraw{94.06}{94.06} & \adfdt{94.06}{94.06} \\
        & F1 & \adfraw{68.38}{71.98} & \adfdt{68.38}{71.98} & \adfraw{58.72}{70.71} & \adfdt{58.72}{70.71} & \adfraw{55.61}{81.63} & \adfdt{55.61}{81.63} & \adfraw{77.68}{80.77} & \adfdt{77.68}{80.77} & \adfraw{67.88}{72.85} & \adfdt{67.88}{72.85} & \adfraw{74.46}{85.67} & \adfdt{74.46}{85.67} & \adfraw{67.19}{81.90} & \adfdt{67.19}{81.90} \\
        \midrule
        \multirow{3}{*}{\textit{PSM} (\%)}
        & P  & \adfraw{97.08}{97.93} & \adfdt{97.08}{97.93} & \adfraw{99.65}{98.60} & \adfdt{99.65}{98.60} & \adfraw{99.18}{99.68} & \adfdt{99.18}{99.68} & \adfraw{100.00}{99.98} & \adfdt{100.00}{99.98} & \adfraw{99.97}{99.88} & \adfdt{99.97}{99.88} & \adfraw{100.00}{100.00} & \adfdt{100.00}{100.00} & \adfraw{93.12}{94.71} & \adfdt{93.12}{94.71} \\
        & R  & \adfraw{90.92}{98.23} & \adfdt{90.92}{98.23} & \adfraw{83.24}{93.77} & \adfdt{83.24}{93.77} & \adfraw{71.54}{84.44} & \adfdt{71.54}{84.44} & \adfraw{71.55}{76.59} & \adfdt{71.55}{76.59} & \adfraw{65.61}{69.71} & \adfdt{65.61}{69.71} & \adfraw{63.63}{76.56} & \adfdt{63.63}{76.56} & \adfraw{99.94}{99.94} & \adfdt{99.94}{99.94} \\
        & F1 & \adfraw{93.90}{98.08} & \adfdt{93.90}{98.08} & \adfraw{90.71}{96.13} & \adfdt{90.71}{96.13} & \adfraw{83.12}{91.43} & \adfdt{83.12}{91.43} & \adfraw{83.42}{86.73} & \adfdt{83.42}{86.73} & \adfraw{79.23}{82.12} & \adfdt{79.23}{82.12} & \adfraw{77.77}{86.72} & \adfdt{77.77}{86.72} & \adfraw{96.41}{97.25} & \adfdt{96.41}{97.25} \\
        \midrule
        \multicolumn{2}{l}{\textbf{Avg. $\Delta$F1}} &
        \multicolumn{2}{c}{\tablegain{$7.28\,\mathrm{pp}\,\uparrow$}} &
        \multicolumn{2}{c}{\tablegain{$14.93\,\mathrm{pp}\,\uparrow$}} &
        \multicolumn{2}{c}{\tablegain{$16.05\,\mathrm{pp}\,\uparrow$}} &
        \multicolumn{2}{c}{\tablegain{$18.05\,\mathrm{pp}\,\uparrow$}} &
        \multicolumn{2}{c}{\tablegain{$9.61\,\mathrm{pp}\,\uparrow$}} &
        \multicolumn{2}{c}{\tablegain{$6.25\,\mathrm{pp}\,\uparrow$}} &
        \multicolumn{2}{c}{\tablegain{$10.46\,\mathrm{pp}\,\uparrow$}} \\
        \rowcolor{TableSummary}
        \multicolumn{16}{c}{\textbf{Overall (28 dataset--backbone pairs):} Macro F1 $69.24\% \rightarrow$ \tablegain{81.04\%};
        \tablegain{$11.80\,\mathrm{pp}\,\uparrow$}; W/T/L = \textbf{28/0/0}} \\
        \bottomrule
    \end{tabular}%
    }
    \vspace{-1em}
\end{table*}

\subsection{RQ4: Cross-Task Generality}
\label{sec:rq4}

\noindent\textbf{Classification.}
Classification requires learning temporal features that distinguish classes across samples with varying measurement quality and local irregularity. By assigning stronger dropout to windows with higher residual scores, \method aims to reduce reliance on sample-specific fluctuations while retaining more activations for lower-score windows. Table~\ref{tab:classification_results} evaluates this regularization strategy in 192 matched pairs across six backbones and 32 datasets, with each task trained independently using its train/test split. Macro accuracy increases from $68.82\%$ to $70.91\%$, a gain of $2.09$ percentage points (pp), with 158 wins, 34 ties, and no losses at the reported precision. Notable gains span architectures: InceptionTime improves from $41.98\%$ to $61.83\%$ on \textit{EigenWorms} ($+19.85$ pp), NSFormer from $90.28\%$ to $98.61\%$ on \textit{Cricket} ($+8.33$ pp), and PatchTST from $83.33\%$ to $89.13\%$ on \textit{Epilepsy} ($+5.80$ pp). On \textit{FingerMovements}, all six backbones improve, with gains of $2.00$--$6.00$ pp, showing the benefit extends across architectures on the same task. Changes are smaller on \textit{SpokenArabicDigits}, where Raw accuracy exceeds $97\%$ for every backbone.

{\parfillskip=0pt\relax
\noindent\textbf{Anomaly Detection.}
Spectral residual scores vary even among nominally normal training windows. We test whether this label-free cue improves detection through adaptive regularization. We evaluate \textit{SMD}, \textit{MSL}, \textit{SMAP}, and \textit{PSM} using matched splits and the detector's task objective and anomaly score. For reconstruction-based models, the observed input window is the reconstruction target. Adaptive dropout acts on trainable activation paths and is disabled at evaluation. Raw and +\method share the benchmark point-adjusted (PA) protocol~\cite{xu2021anomaly,yang2023dcdetector} for paired comparison; Supplementary Appendix~A defines PA. Table~\ref{tab:anomaly_detection_results} reports precision, recall, and F1 for all seven backbones on four datasets. All 28 pairs improve under PA, with mean $\Delta\mathrm{F1}=+11.80$ pp (macro F1: $69.24\%$ to $81.04\%$). AnomTrans, NSFormer, and TranAD improve by $18.05$, $16.05$, and $9.61$ pp. On \textit{MSL}, iTransformer F1 rises from $53.00\%$ to $81.76\%$, while NSFormer improves from $50.32\%$ to $77.70\%$, showing substantial gains for multiple detectors on the same dataset. Macro means and changes are computed before rounding. The gains across detection objectives support the task-independent role of the scorer: it supplies a training-time regularization signal while each detector retains its own scoring rule.\par}

\raggedbottom
\setlength{\intextsep}{6pt}
\subsection{RQ5: Ablations, Complementarity \& Efficiency}
\label{sec:rq5}

\noindent\textbf{Sample-Adaptive vs. Fixed Capacity Control.}
Table~\ref{tab:dropout_strategy_comparison} compares the original backbone, validation-selected fixed dropout, learnable global dropout, and \method. We use four backbones on \textit{ILI} ($H=48$) and \textit{Exchange Rate} ($H=96$), with matched splits, initialization, and stopping rules. Validation MSE selects the fixed rate from $\{0,0.05,\ldots,0.50\}$; the global variant learns one rate shared by all samples within $[0.05,0.50]$. \method improves MSE over fixed dropout in all eight settings and over global dropout in seven, tying TimeMixer on \textit{Exchange Rate} at $0.101$. Its MAE is lower than the learned-global variant in all eight settings. For TimeMixer on \textit{ILI}, learning a global rate raises MSE from $3.055$ to $3.217$, whereas sample-adaptive rates yield $3.064$. This contrast supports conditioning regularization on individual windows beyond adjusting its strength. Supplementary Appendix~B also reports five-seed comparisons on three datasets and four backbones.

\begin{table*}[!b]
    \centering
    \small
    \caption{Dropout strategy comparison on forecasting benchmarks ($H=96$ for Exchange Rate, $H=48$ for ILI). Entries report test MSE/MAE. $p^\star$ is validation-selected; bold marks lowest MSE.}
    \label{tab:dropout_strategy_comparison}
    \setlength{\tabcolsep}{3pt}
    \renewcommand{\arraystretch}{1.08}

    \resizebox{\textwidth}{!}{%
    \begin{tabular}{@{}l
        c c c c c
        c c c c c@{}}
        \toprule
        \rowcolor{TableHeader}
        & \multicolumn{5}{c}{\textbf{\textit{ILI}} ($H{=}48$)}
        & \multicolumn{5}{c}{\textbf{\textit{Exchange Rate}} ($H{=}96$)} \\
        \cmidrule(lr){2-6}\cmidrule(lr){7-11}
        \rowcolor{TableSubheader}
        \textbf{Backbone} & $\boldsymbol{p^\star}$
        & \textbf{Raw} & \textbf{Best Fixed} & \textbf{Learned Global}
        & \tableoursheader\textbf{\method}
        & $\boldsymbol{p^\star}$
        & \textbf{Raw} & \textbf{Best Fixed} & \textbf{Learned Global}
        & \tableoursheader\textbf{\method} \\
        \midrule
        TimeFilter
        & 0.100 & 2.466/0.974 & 2.619/1.012 & 2.469/0.980 & \tableourscell\textbf{2.461}/\textbf{0.968}
        & 0.000 & 0.108/0.232 & 0.108/0.232 & 0.106/0.230 & \tableourscell\textbf{0.105}/\textbf{0.229} \\
        PatchTST
        & 0.000 & 2.939/1.099 & 2.956/1.111 & 2.879/1.095 & \tableourscell\textbf{2.874}/\textbf{1.093}
        & 0.100 & \textbf{0.107}/0.230 & 0.108/0.231 & 0.109/0.230 & \tableourscell\textbf{0.107}/\textbf{0.229} \\
        TimeMixer
        & 0.000 & \textbf{3.055}/1.130 & 3.227/1.207 & 3.217/1.205 & \tableourscell 3.064/\textbf{1.129}
        & 0.000 & 0.102/\textbf{0.224} & 0.102/\textbf{0.224} & \textbf{0.101}/0.225 & \tableourscell\textbf{0.101}/\textbf{0.224} \\
        Informer
        & 0.450 & 7.173/1.920 & 6.569/1.807 & 6.650/1.936 & \tableourscell\textbf{6.348}/\textbf{1.784}
        & 0.050 & 4.411/1.676 & 2.754/1.169 & 2.777/1.143 & \tableourscell\textbf{2.078}/\textbf{1.050} \\
        \bottomrule
    \end{tabular}%
    }
\end{table*}

\noindent\textbf{Component Attribution Analysis.}
We use the same \textit{Synth-12} benchmark as RQ1 to examine component effects under controlled corruption. All ablation variants are evaluated across its five corruption scales using the same signal construction and forecasting horizons. Table~\ref{tab:component_ablation_summary} summarizes performance across these scales, while Supplementary Appendix~B reports the complete per-scale results. The full configuration has MSE/MAE $1.076/0.797$. Removing global linear detrending, the SFM anchor, or log-amplitude normalization raises MSE by $52.6\%$, $41.3\%$, and $21.6\%$, respectively, relative to the full configuration. All listed ablations also perform worse than fixed dropout ($p=0.1$) on average. Detrending and spectral anchoring have the largest effects, supporting their roles in separating the residual from broad temporal structure and setting the filtering threshold. Their performance below fixed dropout highlights the importance of the scoring pipeline when adapting regularization. Endpoint-based detrending also trails OLS (MSE $1.574$ vs.\ $1.076$) with other components fixed.

\begin{table}[H]
    \centering
    \footnotesize
    \caption{Component ablation on Synth-12 ($\sigma \in [0.1, 0.9]$). Gain denotes relative change vs. baseline ($p=0.1$).}
    \label{tab:component_ablation_summary}
    \setlength{\tabcolsep}{2.4pt}
    \resizebox{\linewidth}{!}{
    \begin{tabular}{@{}l ccc cc cc@{}}
        \toprule
        \rowcolor{TableHeader}
        \multirow{2}{*}{\textbf{Method}} & \multicolumn{3}{c}{\textbf{Components}} & \multicolumn{2}{c}{\textbf{MSE $\downarrow$}} & \multicolumn{2}{c}{\textbf{MAE $\downarrow$}} \\
        \cmidrule(lr){2-4} \cmidrule(lr){5-6} \cmidrule(lr){7-8}
        \rowcolor{TableSubheader}
         & \textbf{Detrend} & \textbf{Norm} & \textbf{log-SFM} & \textbf{Avg.} & \textbf{Gain (\%)} & \textbf{Avg.} & \textbf{Gain (\%)} \\
        \midrule
        Baseline & - & - & - & 1.159 & - & 0.836 & - \\
        \midrule
        Minimal Model & None & \ding{55} & \ding{55} & 1.514 & \tableloss{$\downarrow$\,30.6} & 0.960 & \tableloss{$\downarrow$\,14.8} \\
        w/o Detrend+Norm & None & \ding{55} & \ding{51} & 1.468 & \tableloss{$\downarrow$\,26.7} & 0.942 & \tableloss{$\downarrow$\,12.7} \\
        w/o Detrend & None & \ding{51} & \ding{51} & 1.642 & \tableloss{$\downarrow$\,41.7} & 1.014 & \tableloss{$\downarrow$\,21.3} \\
        Simple Detrend & Simple & \ding{51} & \ding{51} & 1.574 & \tableloss{$\downarrow$\,35.8} & 0.987 & \tableloss{$\downarrow$\,18.1} \\
        w/o Spectral Norm & OLS & \ding{55} & \ding{51} & 1.308 & \tableloss{$\downarrow$\,12.9} & 0.896 & \tableloss{$\downarrow$\,7.2} \\
        w/o log-SFM Anchor & OLS & \ding{51} & \ding{55} & 1.520 & \tableloss{$\downarrow$\,31.1} & 0.970 & \tableloss{$\downarrow$\,16.0} \\
        \midrule
        \rowcolor{TableOurs}
        \textbf{\method (Ours)} & OLS & \ding{51} & \ding{51} & \textbf{1.076} & \tablegain{$\uparrow$\,7.2} & \textbf{0.797} & \tablegain{$\uparrow$\,4.7} \\
        \bottomrule
    \end{tabular}}
    \par\vspace{3pt}
    {\scriptsize\raggedright
    OLS: global ordinary least-squares linear detrending; Simple: endpoint-based linear detrending; Norm: log-amplitude normalization; log-SFM: spectral flatness measure anchoring. The Minimal Model omits all three listed components. The baseline uses a fixed dropout rate ($p=0.1$) without adaptive scoring. ``w/o'' denotes ``without''; \ding{51}/\ding{55} indicate enabled/disabled components, respectively. Bold marks the lowest average error.\par}
\end{table}

\noindent\textbf{Hyperparameter Sensitivity.}
A larger $\gamma$ increases dropout at a fixed normalized residual score and moves the rate mapper toward saturation. Figure~\ref{fig:hyperparam_sensitivity} compares $\gamma\in\{1,5,10\}$ across five corruption scales. Panel (a) shows non-monotonic responses to corruption, including a sharp MSE reduction for $\gamma=10$ at $\sigma=0.7$. Panel (b) isolates the sensitivity effect: MSE peaks at $\gamma=5$ for $\sigma\in\{0.5,0.7,0.9\}$ but increases across settings for $\sigma\in\{0.1,0.3\}$. Thus, stronger masking affects corruption scales differently. The $\gamma=1$ setting yields the lowest plotted MSE at four scales; the remaining experiments initialize $\gamma=1$.

\begin{figure}[!b]
    \centering
    \includegraphics[width=\linewidth]{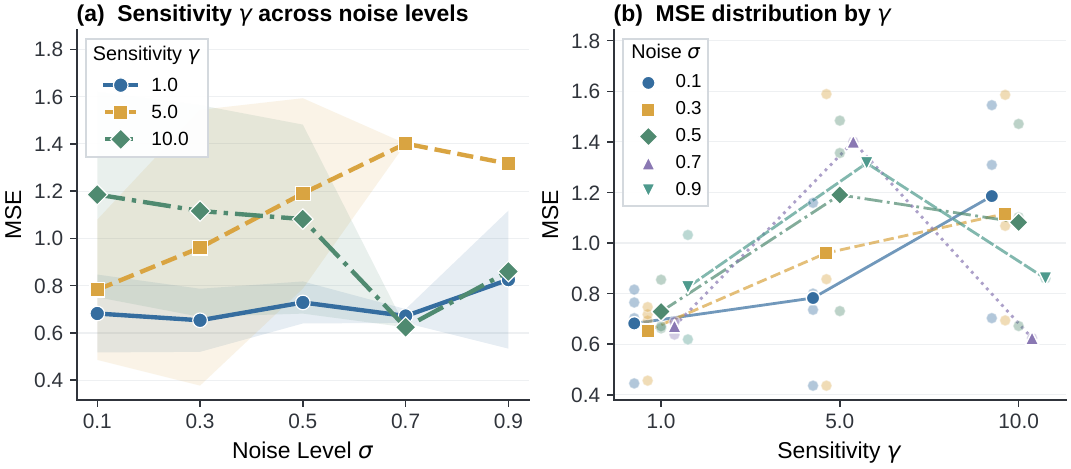}
    \caption{\textbf{Hyperparameter sensitivity analysis.} (a) Effect of sensitivity parameter $\gamma$ across corruption scales $\sigma \in [0.1, 0.9]$; (b) MSE grouped by sensitivity setting, with separate markers for the five corruption scales.}
    \label{fig:hyperparam_sensitivity}
\end{figure}

\noindent\textbf{Computational Efficiency.}
Table~\ref{tab:efficiency_analysis} reports added parameters, latency, epochs to stopping, and total training time for matched Raw and +\method runs. The scorer adds only 16 parameters in these seven-channel configurations. The reported time per epoch increases by $11.1\%$ for TimeMixer and $29.3\%$ for Informer. Both +\method runs stop after 16 epochs, compared with 20 and 30 for their Raw counterparts. In these runs, the reduction in epoch count offsets the additional per-epoch cost, lowering total training time by $11.0\%$ for TimeMixer ($1.12\times$) and $31.0\%$ for Informer ($1.45\times$). The auxiliary path incurs this cost only during training; inference retains the deterministic graph.

\begin{table}[H]
\centering
\caption{Training cost for the paired runs (Raw $\to$ +\method). Epochs denotes epochs to stopping; total time includes \method overhead.}
\label{tab:efficiency_analysis}

\renewcommand{\arraystretch}{1.25}
\setlength{\tabcolsep}{5pt}

\resizebox{\linewidth}{!}{
\begin{tabular}{l cc cc cc}
\toprule
\rowcolor{TableHeader}
 & \multicolumn{2}{c}{\textbf{Overhead (Cost)}} & \multicolumn{2}{c}{\textbf{Training Cost}} & \multicolumn{2}{c}{\textbf{Net Gain}} \\
\cmidrule(lr){2-3} \cmidrule(lr){4-5} \cmidrule(lr){6-7}

\rowcolor{TableSubheader}
\textbf{Model} & \textbf{Params} & \textbf{s/Epoch} & \textbf{Epochs} & \textbf{Total (s)} & \textbf{Saved} & \textbf{Speedup} \\
\midrule

\textbf{TimeMixer} & +16 & 102.3 $\to$ 113.7 & 20 $\to$ \tablegain{16} & 2045 $\to$ \tablegain{1819} & \tablegain{11.0\%} & \tablegain{1.12$\times$} \\

\textbf{Informer} & +16 & \phantom{0}95.8 $\to$ 123.9 & 30 $\to$ \tablegain{16} & 2873 $\to$ \tablegain{1982} & \tablegain{31.0\%} & \tablegain{1.45$\times$} \\

\bottomrule
\end{tabular}
}
\end{table}

\noindent\textbf{Complementarity with Data-Centric Selection.}
Table~\ref{tab:data_selection_compatibility_cost} combines Selective Learning (SL)~\cite{fu2025selective} with \method on \textit{ILI} using Informer, $L=24$, and $H=60$. Adding \method to SL reduces MSE from $6.461$ to $6.336$ ($1.93\%$) and MAE from $1.902$ to $1.774$ ($6.73\%$). Both variants incur the same $29.0$ s selection-pretraining cost. Backbone training takes $15.1$ s with SL and $6.5$ s with SL+\method, totaling $44.1$ s and $35.5$ s. The combined run saves $8.6$ s versus SL alone, while remaining above the Raw baseline's $20.0$ s. Accuracy gains after SL support complementary roles: selection determines which time steps enter training, while \method adjusts activation retention.

\begin{table}[!ht]
    \centering
    \footnotesize
    \caption{Compatibility with Selective Learning (SL) and computational cost on ILI ($L=24, H=60$, Informer). Rel. Gain denotes relative MSE reduction from Raw.}
    \label{tab:data_selection_compatibility_cost}
    \setlength{\tabcolsep}{2pt}
    \renewcommand{\arraystretch}{1.00}

    \resizebox{\linewidth}{!}{%
    \begin{tabular}{@{}l l c c c@{}}
        \toprule
        \rowcolor{TableSection}
        \multicolumn{5}{@{}l}{\textbf{Compatibility with SL}} \\
        \midrule
        \rowcolor{TableHeader}
        \multirow{2}{*}{\textbf{Method Strategy}} & \multirow{2}{*}{\textbf{Mechanism}} & \multicolumn{2}{c}{\textbf{Error Metric} $\downarrow$} & \textbf{Rel.} \\
        \cmidrule(lr){3-4}
        \rowcolor{TableSubheader}
         & & \textbf{MSE} & \textbf{MAE} & $\boldsymbol{\Delta_{\mathrm{rel}}}$ (\%) \\
        \midrule
        \textbf{Baseline} (Raw) & - & 7.140 & 1.916 & - \\
        \textbf{\method} & Capacity Modulation & 6.429 & 1.846 & $\uparrow$\,10.0 \\
        \textbf{SL} & Data Selection & 6.461 & 1.902 & $\uparrow$\,9.5 \\
        \midrule
        \rowcolor{TableOurs} \textbf{SL + \method} (Ours) & \textbf{Combined} & \textbf{6.336} & \textbf{1.774} & \tablegain{$\uparrow$\,11.3} \\
        \midrule

        \rowcolor{TableSection}
        \multicolumn{5}{@{}l}{\textbf{Parameter Overhead}} \\
        \midrule
        \rowcolor{TableHeader}
        \textbf{Method} & \textbf{Base Params} & \textbf{Additional} & \textbf{Total} & \textbf{Overhead (\%)} \\
        \midrule
        Raw Baseline & 11,799,047 & - & 11,799,047 & - \\
        \rowcolor{TableOurs}\textbf{\method (Ours)} & 11,799,047 & \textbf{+16} & \textbf{11,799,063} & \tablegain{$1.36\!\times\!10^{-4}$} \\
        SL & 11,799,047 & +3,000 & 11,802,047 & 0.025 \\
        \midrule

        \rowcolor{TableSection}
        \multicolumn{5}{@{}l}{\textbf{Training Time}} \\
        \midrule
        \rowcolor{TableHeader}
        \textbf{Method} & \textbf{Train Time} & \textbf{Pre-train} & \textbf{Total} & \textbf{Time Saved (\%)} \\
        \midrule
        Raw Baseline & 20.0s & 0s & 20.0s & - \\
        \rowcolor{TableOurs}\textbf{\method} & \textbf{16.8s} & \textbf{0s} & \textbf{16.8s} & \tablegain{$\uparrow$\,16.0} \\
        SL & 15.1s & 29.0s & 44.1s & \tableloss{$\downarrow$\,120.5} \\
        \method + SL & 6.5s & 29.0s & 35.5s & \tableloss{$\downarrow$\,77.5} \\
        \bottomrule
    \end{tabular}%
    }
\end{table}

\noindent\textbf{Comparison with Data-Space Hard Denoising.}
Table~\ref{tab:hard_denoising_comparison} compares spectral processing as input filtering versus a regularization cue. On \textit{ILI}, hard denoising (HD) applies low-pass filtering with matched horizons, split, and backbones. HD lowers Informer MAE from $1.901$ to $1.869$ but raises PatchTST and TimeMixer MAE to $1.162$ and $1.197$. \method instead achieves $1.772$, $1.064$, and $1.142$, improving over Raw on all three backbones. These results suggest that filtering can discard useful information, supporting separate scoring and prediction paths: spectral residuals guide training-time masking while the predictor retains the observed input.

\begin{table}[!t]
    \centering
    \small
    \caption{MAE comparison with hard denoising (HD) on ILI. $\Delta$ is the relative MAE change from Raw; bold marks the lower error and $\uparrow$ denotes improvement.}
    \label{tab:hard_denoising_comparison}
    \setlength{\tabcolsep}{7pt}
    \renewcommand{\arraystretch}{1.0}
    
    \newcommand{\gc}{\tableourscell}
    \newcommand{\win}[1]{\textbf{#1}}
    
    \adjustbox{max width=\linewidth}{%
    \begin{tabular}{l c c c c c}
        \toprule
        \rowcolor{TableHeader}
        \textbf{Backbone} & \textbf{Raw} & \textbf{HD} & \textbf{\method} & \textbf{$\Delta$ HD} & \textbf{$\Delta$ Ours} \\
        \midrule
        Informer & 1.901 & 1.869 & \gc \win{1.772} & \tablegain{$\uparrow$1.7\%} & \tablegain{$\uparrow$6.8\%} \\
        PatchTST & 1.110 & 1.162 & \gc \win{1.064} & \tableloss{$\downarrow$4.7\%} & \tablegain{$\uparrow$4.1\%} \\
        TimeMixer & 1.148 & 1.197 & \gc \win{1.142} & \tableloss{$\downarrow$4.3\%} & \tablegain{$\uparrow$0.5\%} \\
        \midrule
        \rowcolor{TableSummary}
        \textbf{Overall} & - & - & - & \tableloss{$\downarrow$2.4\%} & \tablegain{$\uparrow$3.8\%} \\
        \bottomrule
    \end{tabular}%
    }
\end{table}

\noindent\textbf{Qualitative Forecast Behavior.}
Figure~\ref{fig:qualitative_comparison} shows four test windows from the 96-to-720 SyntheticTS noise-0.1 experiment using WPMixer. The windows were selected from the middle range of baseline errors, rather than the largest improvements. The relative MAE reductions are 7.0\%, 8.8\%, 17.1\%, and 21.3\% for panels (a)--(d), respectively. Across these examples, \method reduces level and amplitude errors while following the broad temporal pattern of the clean target; residual differences remain around sharp spikes, trough depth, and fast oscillations.

\begin{figure}[H]
    \centering
    \includegraphics[width=0.94\columnwidth]{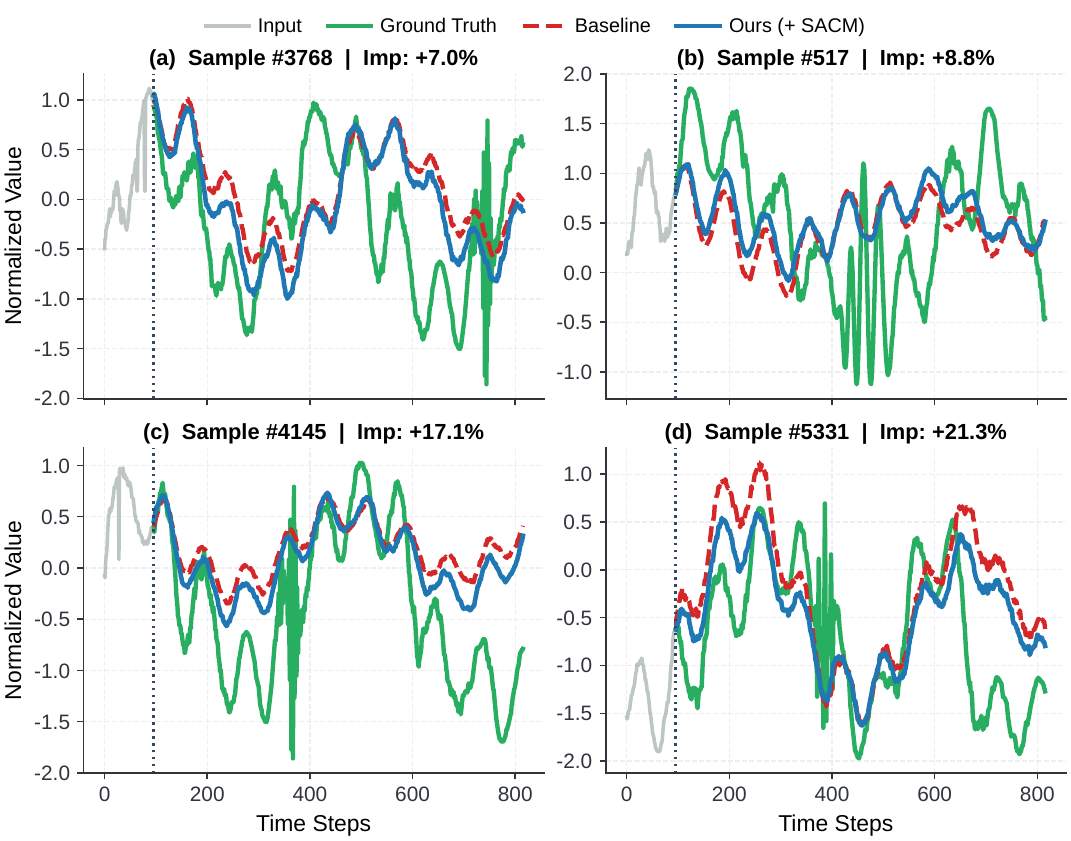}
    \caption{\textbf{Qualitative comparison of trajectories.} Input context (gray), clean ground truth (green), Raw WPMixer (dashed red), and \method (blue).}
    \label{fig:qualitative_comparison}
\end{figure}

\section{Conclusion}
\label{sec:conclusion}

We present \textbf{Capacity-Centric Modulation (CCM)}, which addresses sample-level reliability differences through adaptive regularization of activation paths. Its task-agnostic implementation, \textbf{\method}, calibrates sample-wise dropout using label-free spectral residuals. A bias--variance surrogate characterizes the excess risk of fixed allocation relative to an oracle with heterogeneous optimal rates. Across 301 dataset--backbone pairs, \method improves forecasting, classification, and anomaly detection, with shorter training times in the profiled runs and no inference overhead. The gains persist across data regimes and transfer to unseen domains without target adaptation.

\noindent\textbf{Limitations and Future Work.}
Spectral residuals may conflate informative broadband or impulsive dynamics with corruption (Appendix~C). Future work includes hybrid time--frequency filtering for such non-harmonic signals, as well as batch-stable calibration and channel-specific modulation.

\bibliographystyle{IEEEtranN}
\bibliography{main}

\makeatletter
\def\@IEEEBIOskipN{0.5\baselineskip}
\def\@IEEEBIOphotowidth{0.80in}
\def\@IEEEBIOphotodepth{1.00in}
\def\@IEEEBIOhangwidth{0.91in}
\def\@IEEEBIOhangdepth{1.00in}
\makeatother

\begin{IEEEbiography}[{\includegraphics[width=0.80in,height=1.00in,clip,keepaspectratio]{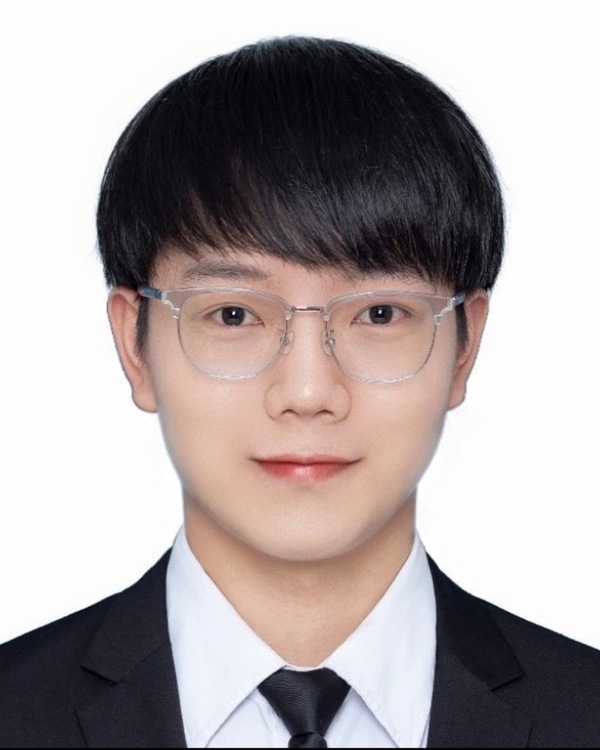}}]{Siru Zhong}
is a Ph.D.\ candidate with the Data Science and Analytics Thrust, HKUST (GZ), Guangzhou, China. His research interests include spatio-temporal intelligence and time series modeling. He has authored or coauthored more than 20 papers in ICML, NeurIPS, KDD, ACM MM, and AAAI. He gave tutorials at ACM MM 2025, ICME 2026, and AAAI 2026, and received the DSA Excellent Research Award in 2025 and 2026.
\end{IEEEbiography}

\begin{IEEEbiography}[{\includegraphics[width=0.80in,height=1.00in,clip,keepaspectratio]{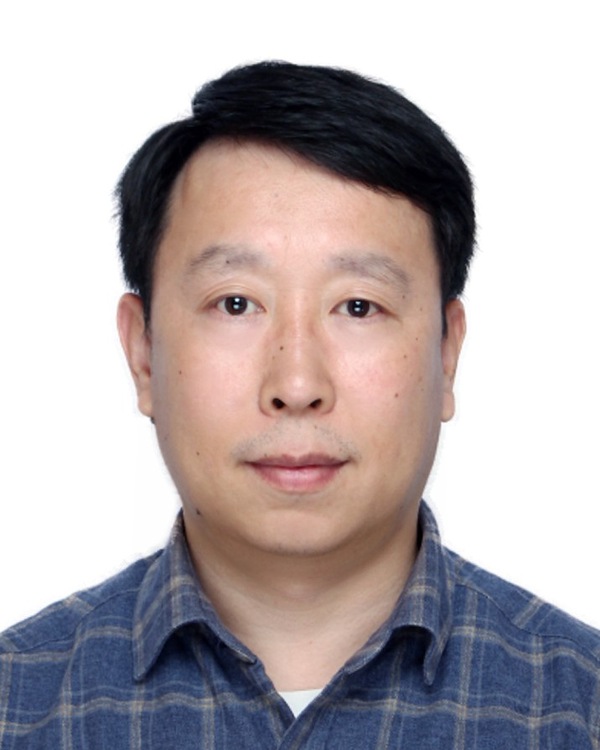}}]{Senzhang Wang}
(Member, IEEE) received the Ph.D.\ degree from Beihang University, Beijing, China, in 2015. He is currently a Professor with the School of Computer Science and Engineering, Central South University, Changsha. He has authored or coauthored more than 100 papers in top journals and conferences, such as IEEE TKDE, ACM TOIS, KDD, NeurIPS, ICLR, IJCAI, and AAAI. His research interests include graph mining, spatio-temporal data mining, and time series analysis.
\end{IEEEbiography}

\begin{IEEEbiography}[{\includegraphics[width=0.80in,height=1.00in,clip,keepaspectratio]{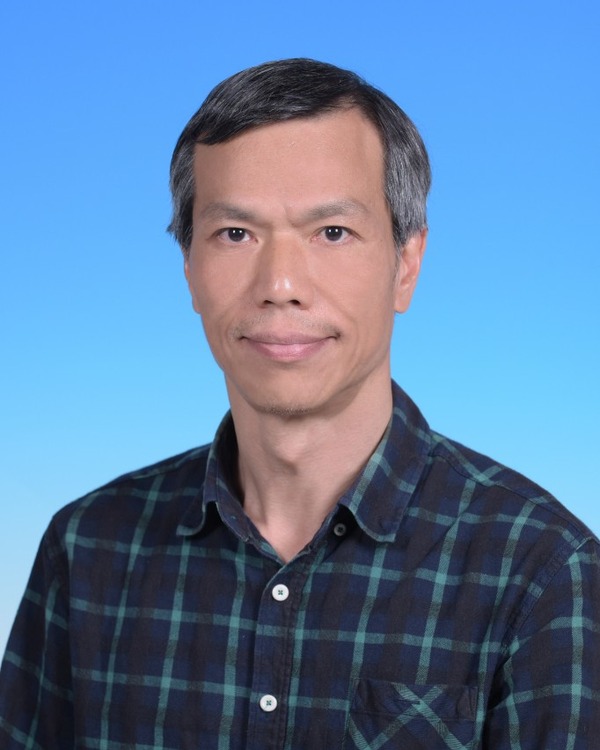}}]{James T. Kwok}
(Fellow, IEEE) is currently a Professor with the Department of Computer Science and Engineering at HKUST, Hong Kong. His research interests include machine learning and artificial intelligence. He has served as an Associate Editor of several journals including the IEEE Transactions on Neural Networks and Learning Systems, Neural Networks, Neurocomputing, and Artificial Intelligence. He has also served as a Senior Area Chair of major conferences including NeurIPS, ICML, ICLR, and IJCAI. He is a member of the IJCAI Board of Trustees, and served as the Program Chair of IJCAI 2025 and the General Chair of PAKDD 2026.
\end{IEEEbiography}

\begin{IEEEbiography}[{\includegraphics[width=0.80in,height=1.00in,clip,keepaspectratio]{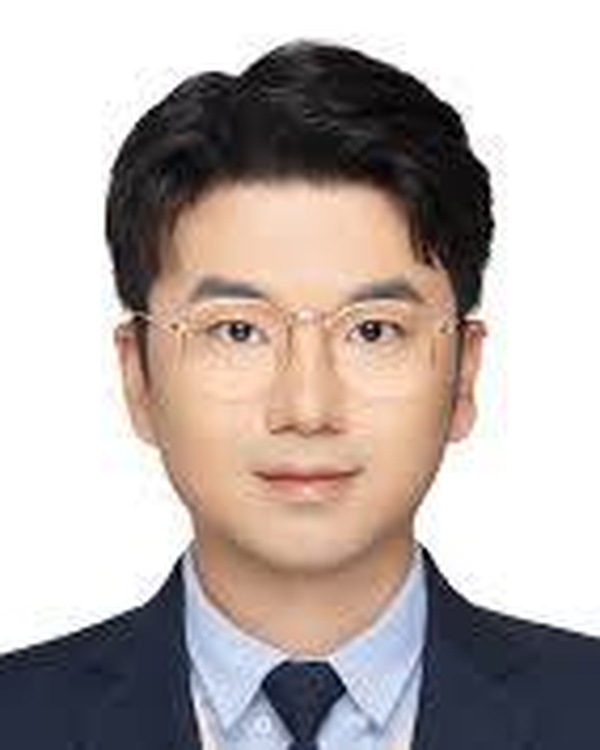}}]{Yuxuan Liang}
is an Assistant Professor at HKUST (GZ). He obtained his Ph.D. from the National University of Singapore. His research focuses on spatio-temporal AI and foundation models, and he has authored over 100 papers with 16,000 citations. He serves as the Vice Chair of the IEEE CIS NNTC Task Force on Spatio-Temporal Data and Time Series, Associate Editor of Neurocomputing, and Area Chair of major AI conferences including NeurIPS, ICML, ICLR, KDD, ACL, and MM. He received Best Paper (or Runner-up) Awards, e.g., ICDM 2025, the ACM SIGSPATIAL China Rising Star Award, and the SDSC Dissertation Research Fellowship.
\end{IEEEbiography}

\clearpage
\onecolumn
\raggedbottom

\makeatletter
\setlength{\@fptop}{0pt}
\setlength{\@fpsep}{12pt}
\setlength{\@fpbot}{0pt plus 1fil}
\makeatother
\renewcommand{\topfraction}{0.95}
\renewcommand{\bottomfraction}{0.95}
\renewcommand{\textfraction}{0.05}
\renewcommand{\floatpagefraction}{0.80}
\setlength{\textfloatsep}{14pt plus 3pt minus 3pt}
\setlength{\floatsep}{12pt plus 3pt minus 3pt}

\begin{bibunit}
\appendices

\section*{Appendices Overview}
\noindent
The appendices provide the experimental specifications, complete forecasting results, spectral diagnostics, detrending analysis, and theoretical proofs supporting the main paper. The entries below are clickable in the electronic version.

\vspace{0.4em}
\begingroup
\small
\renewcommand{\arraystretch}{1.18}
\setlength{\tabcolsep}{4pt}
\noindent\begin{tabularx}{\linewidth}{>{\bfseries}p{0.10\linewidth} >{\bfseries}p{0.35\linewidth} X r}
\toprule
\rowcolor{TableHeader}
\textbf{Part} & \textbf{Section} & \textbf{Contents} & \textbf{Page} \\
\midrule
Appendix A & \hyperref[app:experimental_details]{Experimental Details} & Benchmarks, protocols, notation, and evaluation metrics & \pageref{app:experimental_details} \\
Appendix B & \hyperref[app:complete_results]{Complete Results} & Per-horizon forecasting results, fixed-rate sweep, ablations, and batch-size sensitivity & \pageref{app:complete_results} \\
Appendix C & \hyperref[appx:visualizations]{Spectral Diagnostics and Real-world Sparsity} & Component-wise diagnostics and top-coefficient reconstructions & \pageref{appx:visualizations} \\
Appendix D & \hyperref[sec:detrend_ablation]{Detrending Analysis} & Spectral leakage, Edge-MAE, and failure modes & \pageref{sec:detrend_ablation} \\
Appendix E & \hyperref[app:theory_proofs]{Theory and Proofs} & Surrogate-risk assumptions and excess-risk theorem proof & \pageref{app:theory_proofs} \\
\bottomrule
\end{tabularx}
\endgroup

\vspace{0.8em}

\suppressfloats[t]

\section{Experimental Details}
\label{app:experimental_details}

\subsection{Benchmark Details}
\label{app:benchmark_details}

We evaluate forecasting, classification, and anomaly detection benchmarks spanning energy, meteorology, finance, healthcare, human activity, speech, and operational telemetry, covering both large-scale real-world series and synthetic stress tests.

\noindent\textbf{Forecasting benchmarks.}
The four \textit{ETT} variants contain oil temperature and six power-load variables at hourly or 15-minute resolution. \textit{Weather}, \textit{Electricity}, and \textit{Exchange Rate} represent meteorological, high-dimensional demand, and non-stationary financial series, respectively. \textit{ILI} contains weekly influenza-like-illness ratios, while \textit{ECG} concatenates both leads from MIT-BIH records 100--105 \cite{goldberger2000physiobank} into a 12-channel series, downsampled to 36 Hz, truncated to 65,000 points. Table~\ref{tab:forecasting_benchmark_details} summarizes dimensions, splits, frequencies, and horizons. Forecasting datasets use the chronological train/validation/test splits from \texttt{BasicTS} \cite{liang2022basicts}.

\begin{table}[t]
  \vspace{-1em}
  \caption{Statistics of the nine real-world forecasting benchmarks. Columns report dimensionality, time steps, chronological train/val/test split, sampling frequency, prediction horizons, and domain; ILI and ECG use task-specific shorter horizons.}
  \label{tab:forecasting_benchmark_details}
  \centering
  \small
  \setlength{\tabcolsep}{4pt}
  \begin{tabular*}{\textwidth}{@{\extracolsep{\fill}}lcrcccl@{}}
    \toprule
    \rowcolor{TableHeader}
    \textbf{Dataset} & \textbf{Dim.} & \textbf{Size} & \textbf{Split} & \textbf{Frequency} & \textbf{Prediction Length} & \textbf{Domain} \\
    \midrule
    \textit{ETTh1}       & 7   & 14,400 & 6:2:2 & 1 hour  & \{96, 192, 336, 720\} & Temperature \\
    \textit{ETTh2}       & 7   & 14,400 & 6:2:2 & 1 hour  & \{96, 192, 336, 720\} & Temperature \\
    \textit{ETTm1}       & 7   & 57,600 & 6:2:2 & 15 min  & \{96, 192, 336, 720\} & Temperature \\
    \textit{ETTm2}       & 7   & 57,600 & 6:2:2 & 15 min  & \{96, 192, 336, 720\} & Temperature \\
    \textit{Weather}     & 21  & 52,696 & 7:1:2 & 10 min  & \{96, 192, 336, 720\} & Meteorology \\
    \textit{Electricity} & 321 & 26,304 & 7:1:2 & 1 hour  & \{96, 192, 336, 720\} & Electricity \\
    \textit{Exchange Rate} & 8   & 7,588  & 7:1:2 & 1 day   & \{96, 192, 336, 720\} & Finance \\
    \midrule
    \rowcolor{TableSection}
    \textit{ILI}         & 7   & 966    & 7:1:2 & 1 week  & \{24, 36, 48, 60\}    & Health \\
    \rowcolor{TableSection}
    \textit{ECG}     & 12  & 65,000 & 7:1:2 & 36 Hz   & \{36, 72, 144, 288\}  & Health \\
    \bottomrule
  \end{tabular*}
\end{table}

\noindent\textbf{Classification benchmarks.}
We use 30 multivariate UEA-style datasets \cite{bagnall2018uea}, together with \textit{HAR} \cite{anguita2013public} and \textit{Sleep-EDF} \cite{kemp2000analysis,goldberger2000physiobank}. The index below lists each benchmark's modality and prediction target.

\noindent\textbf{Anomaly-detection benchmarks.}
We use the task-specific train/validation/test splits of four multivariate telemetry datasets: \textit{SMD} \cite{su2019robust} contains server-machine metrics and labeled abnormal intervals; \textit{MSL} and \textit{SMAP} \cite{hundman2018detecting} contain spacecraft telemetry and anomaly events; and \textit{PSM} \cite{xu2021anomaly} contains production-server metrics with service-level anomalies. Each reconstruction model follows its prescribed training split and benchmark evaluation protocol, with the input window as the reconstruction target. Each backbone retains its native reconstruction head, and \method is applied to the trainable activation paths before that head. The benchmark-specific window composition, label usage, and thresholding follow the corresponding public benchmark protocol. Table~\ref{tab:anomaly_benchmark_details} summarizes the dimensionality, split sizes, and test-split anomaly rates of the four benchmarks.

\begin{table}[t]
  \vspace{-1em}
  \caption{Statistics of the four anomaly-detection benchmarks. Columns report dimensionality, training and test time steps, anomaly ratio in the test split, and domain. Evaluation follows the point-adjusted protocol defined in Appendix~\ref{app:evaluation_metrics}.}
  \label{tab:anomaly_benchmark_details}
  \centering
  \small
  \setlength{\tabcolsep}{4pt}
  \begin{tabular*}{\textwidth}{@{\extracolsep{\fill}}lccccl@{}}
    \toprule
    \rowcolor{TableHeader}
    \textbf{Dataset} & \textbf{Dim.} & \textbf{Train} & \textbf{Test} & \textbf{Anomaly Rate} & \textbf{Domain} \\
    \midrule
    \textit{SMD}  & 38 & 708,405 & 708,420 & 4.16\%  & Server machine \\
    \textit{MSL}  & 55 & 58,317  & 73,729  & 10.72\% & Spacecraft \\
    \textit{SMAP} & 25 & 135,183 & 427,617 & 13.13\% & Spacecraft \\
    \textit{PSM}  & 25 & 132,481 & 87,841  & 27.76\% & Server machine \\
    \bottomrule
  \end{tabular*}
\end{table}

\begin{table}[t]
\centering
\begingroup
\footnotesize
\renewcommand{\arraystretch}{1.08}
\setlength{\tabcolsep}{4pt}
\rowcolors{2}{TableSubheader}{white}
\noindent\begin{tabularx}{\linewidth}{>{\itshape}p{0.205\linewidth} X >{\itshape}p{0.205\linewidth} X}
\toprule
\rowcolor{TableHeader}
\multicolumn{1}{l}{\textbf{Dataset}} & \textbf{Sequence/target} & \multicolumn{1}{l}{\textbf{Dataset}} & \textbf{Sequence/target} \\
\midrule
ArticularyWordRecognition & Articulatory trajectories; spoken words & AtrialFibrillation & ECG; cardiac rhythm \\
BasicMotions & Wearable sensors; basic motions & CharacterTrajectories & Pen-tip trajectories; characters \\
Cricket & Motion sensors; umpire gestures & DuckDuckGeese & Audio features; bird calls \\
EigenWorms & Posture sequences; locomotion & Epilepsy & Wrist acceleration; seizure-like motions \\
EthanolConcentration & Spectral sequences; concentration & ERing & Ring-sensor trajectories; gestures \\
FaceDetection & Brain signals; face detection & FingerMovements & Brain signals; finger movements \\
HandMovementDirection & Brain signals; movement direction & Handwriting & Motion trajectories; characters \\
Heartbeat & Cardiac auscultation; heartbeat class & InsectWingbeat & Wingbeat signals; insect species \\
JapaneseVowels & Speech trajectories; speakers & Libras & Movement trajectories; sign language \\
LSST & Astronomical light curves; object class & MotorImagery & Brain signals; motor imagery \\
NATOPS & Motion sequences; handling signals & PenDigits & Pen trajectories; digits \\
PEMS-SF & Traffic sensors; traffic state & PhonemeSpectra & Speech spectra; phonemes \\
RacketSports & Wearable sensors; sport actions & SelfRegulationSCP1 & Cortical potentials; self-regulation \\
SelfRegulationSCP2 & Cortical potentials; self-regulation & SpokenArabicDigits & Speech sequences; digits \\
StandWalkJump & Multichannel sensors; activities & UWaveGestureLibrary & Acceleration; gestures \\
HAR & Smartphone inertial signals; activities & Sleep-EDF & Polysomnography; sleep stages \\
\bottomrule
\end{tabularx}
\endgroup
\end{table}

\subsection{Unified Experimental Protocol}
\label{app:experimental_protocol}

\noindent\textbf{Backbones and task settings.}
The forecasting study covers Informer, Crossformer, PatchTST, iTransformer, MultiPatchFormer, TimeMixer, WPMixer, TimesNet, and TimeFilter. Classification uses iTransformer, PatchTST, NSFormer, TimesNet, InceptionTime, and MiniROCKET; anomaly detection uses PatchTST, iTransformer, NSFormer, AnomTrans, TranAD, DCdetector, and TimesNet. For forecasting, the input length is $L=24$ on \textit{ILI} and $L=96$ on all other datasets. The evaluated horizons are $H\in\{24,36,48,60\}$ for \textit{ILI}, $H\in\{36,72,144,288\}$ for \textit{ECG}, and $H\in\{96,192,336,720\}$ for the remaining forecasting datasets. On \textit{Synth-12}, forecasts are evaluated against the clean target trajectory rather than its corrupted observation.

\noindent\textbf{Optimization and integration.}
All models are implemented in PyTorch, optimized with Adam, and run on NVIDIA A800 GPUs. We retain each backbone's default hyperparameters so that paired Raw and +\method variants differ only in the capacity modulation strategy. \method recursively replaces dropout modules on trainable paths; models without them are unchanged. All adaptive layers share the sample-level probability vector, but draw independent Bernoulli activation masks. The spectral residual scorer and capacity modulation module are active strictly during training and are completely bypassed during evaluation, so \method follows the same deterministic inference path as the Raw baseline with no additional computation or latency.

\noindent\textbf{Capacity modulation settings and controlled comparisons.}
Unless varied in the sensitivity study, we fix $[p_{\min},p_{\max}]=[0.05,0.50]$ and initialize the learnable spectral-mask steepness at $\alpha=10$, threshold bias at $\mathbf{b}_s=0$, and rate-mapping sensitivity at $\gamma=1$. In the fixed-dropout comparison, candidate rates are $p\in\{0,0.05,\ldots,0.50\}$ and $p^\star$ is selected by validation MSE only. The paired-seed study uses seeds 2022--2026 with matched initialization, data order, early stopping, and hyperparameters between Raw and +\method.

\noindent\textbf{Roles of the remaining hyperparameters.}
The fixed dropout bounds define the range over which the learned mapper allocates regularization. The lower bound $p_{\min}=0.05$ maintains mild regularization even at zero normalized residual score. The upper bound $p_{\max}=0.50$ keeps the expected retention fraction $1-p_i$ at least $0.50$ and the inverted-dropout scaling $1/(1-p_i)$ at most $2$. The attainable maximum also depends on $\gamma$. At its default initialization, a normalized score of $1$ gives $p_i=0.05+0.45\tanh(\operatorname{softplus}(1))\approx0.439$. These are properties of the rate mapper, rather than measured retention statistics.

The learnable scalar $\alpha$ controls spectral-mask sharpness. With $\alpha=10$, the sigmoid moves from $0.1$ to $0.9$ over a normalized-amplitude interval of $2\ln(9)/\operatorname{softplus}(10)\approx0.44$ around $\tau_c$, giving a smooth initial separation of spectral components. A smaller slope weakens this separation, while a larger slope approaches hard thresholding and reduces mask gradients away from $\tau_c$. Initializing $\mathbf{b}_s=0$ leaves the initial threshold logits determined by $w_{s,c}\mathrm{SFM}_c$. Each channel's weight and bias, together with $\alpha$ and $\gamma$, are updated through the task loss, so the thresholds and sharpness adapt during training.

The constant $\epsilon=10^{-8}$ stabilizes spectral normalization and logarithms. It also determines when nearly identical batch scores use the default normalized score $0.5$. Learning rate, batch size, and stopping settings follow each backbone's default configuration and are matched between paired Raw and +\method runs, as described above. The sensitivity experiment varies $\gamma$; the remaining settings describe the shared default configuration.

\noindent\textbf{Result notation.}
Raw denotes the unmodified backbone and +\method denotes the same backbone augmented with \method. For an error metric $e$ (lower is better) and a score metric $m$ (higher is better), relative improvement is computed as
\begin{equation}
    \Delta_{\mathrm{rel}} e = 100\frac{e^{\mathrm{Raw}}-e^{\mathrm{\method}}}{e^{\mathrm{Raw}}},
    \qquad
    \Delta_{\mathrm{rel}} m = 100\frac{m^{\mathrm{\method}}-m^{\mathrm{Raw}}}{m^{\mathrm{Raw}}}.
\end{equation}
A positive value therefore denotes an improvement in both cases. For accuracy and F1 reported as percentages, an absolute change is stated in percentage points (pp) rather than relative percent. W/T/L counts pairwise wins, ties, and losses under the metric named in the corresponding table caption, aggregated over the reported model--dataset pairs.

\subsection{Evaluation Metrics}
\label{app:evaluation_metrics}

\noindent\textbf{Forecasting.}
For $N_{\mathrm{f}}$ test windows, prediction horizon $H$, and $C$ target channels, let $y_{i,h,c}$ and $\hat{y}_{i,h,c}$ denote the target and prediction for sample $i$, horizon step $h$, and channel $c$. We report mean squared error (MSE) and mean absolute error (MAE):
\begin{align}
    \operatorname{MSE} &= \frac{1}{N_{\mathrm{f}}HC}
    \sum_{i=1}^{N_{\mathrm{f}}}\sum_{h=1}^{H}\sum_{c=1}^{C}
    \left(y_{i,h,c}-\hat{y}_{i,h,c}\right)^2, \\
    \operatorname{MAE} &= \frac{1}{N_{\mathrm{f}}HC}
    \sum_{i=1}^{N_{\mathrm{f}}}\sum_{h=1}^{H}\sum_{c=1}^{C}
    \left|y_{i,h,c}-\hat{y}_{i,h,c}\right|.
\end{align}
Lower values indicate better forecasting performance.

\noindent\textbf{Classification.}
For $N_{\mathrm{c}}$ test samples with ground-truth class $z_i$ and predicted class $\hat{z}_i$, classification accuracy is
\begin{equation}
    \operatorname{Accuracy} = \frac{1}{N_{\mathrm{c}}}
    \sum_{i=1}^{N_{\mathrm{c}}}\mathbb{I}(z_i=\hat{z}_i),
\end{equation}
where $\mathbb{I}(\cdot)$ is the indicator function.

\noindent\textbf{Anomaly Detection.}
Let $a_t\in\{0,1\}$ be the ground-truth label at time step $t$, and let $\tilde{a}_t\in\{0,1\}$ be the binary prediction from an anomaly score using the detector's evaluation threshold. For each contiguous ground-truth anomaly interval $I_m$, point adjustment marks the entire interval as detected when at least one point in that interval is predicted anomalous:
\begin{equation}
    \hat{a}^{\mathrm{PA}}_t =
    \begin{cases}
        1, & t\in I_m \text{ and } \sum_{u\in I_m}\tilde{a}_u>0
        \text{ for some }m, \\
        \tilde{a}_t, & \text{otherwise}.
    \end{cases}
\end{equation}
Using the adjusted predictions, the point-level counts are
\begin{align}
    \mathrm{TP} &= \sum_t a_t\hat{a}^{\mathrm{PA}}_t, &
    \mathrm{FP} &= \sum_t (1-a_t)\hat{a}^{\mathrm{PA}}_t, &
    \mathrm{FN} &= \sum_t a_t(1-\hat{a}^{\mathrm{PA}}_t).
\end{align}
We then report point-adjusted precision, recall, and F1 score:
\begin{align}
    \operatorname{Precision} &= \frac{\mathrm{TP}}{\mathrm{TP}+\mathrm{FP}}, &
    \operatorname{Recall} &= \frac{\mathrm{TP}}{\mathrm{TP}+\mathrm{FN}}, &
    \operatorname{F1} &= \frac{2\,\operatorname{Precision}\,\operatorname{Recall}}
    {\operatorname{Precision}+\operatorname{Recall}}.
\end{align}
For $K$ dataset--backbone configurations, reported macro averages and average gains use unweighted arithmetic means:
\begin{align}
    \operatorname{Macro}(m) &= \frac{1}{K}\sum_{j=1}^{K}m_j, &
    \operatorname{Avg.}\,\Delta m &= \frac{1}{K}\sum_{j=1}^{K}
    \left(m_j^{\mathrm{\method}}-m_j^{\mathrm{Raw}}\right),
\end{align}
where $m_j$ is the relevant metric for configuration $j$; percentage-point gains are reported when $m_j$ is expressed as a percentage.

\section{Complete Results}
\label{app:complete_results}

This section reports the horizon-level forecasting measurements underlying the averages in the main paper. Unless stated otherwise, each Raw/+\method pair uses the same backbone configuration and data split. Up and down arrows in Tables~\ref{tab:synthetic_forecasting_complete_results} and~\ref{tab:real_world_forecasting_complete_results} follow the relative-improvement convention in Appendix~\ref{app:experimental_protocol} and summarize horizon-averaged changes.

\subsection{Results of \textit{Synth-12}}

Table~\ref{tab:synthetic_forecasting_complete_results} expands the main-paper averages into MSE/MAE for each of five corruption scales $\sigma\in\{0.1,0.3,0.5,0.7,0.9\}$ and four horizons $H\in\{96,192,336,720\}$. Evaluation uses clean targets. The horizon-level entries expose variation with forecasting length that is hidden by the main-paper averages.

\begin{table}[t!]
    \centering
    \scriptsize
    \caption{Complete per-horizon forecasting results on Synth-12 for nine backbones. Each horizon reports Raw and \method (+\method) MSE/MAE; lower errors are bold and higher errors are underlined. Up and down arrows report relative improvements and regressions, respectively, averaged over horizons.}
    \label{tab:synthetic_forecasting_complete_results}
    \setlength{\tabcolsep}{1.2pt}
    \renewcommand{\arraystretch}{1.02}
    \resizebox{\linewidth}{!}{

    }
\end{table}

\subsection{Results of Open Benchmarks}
Table~\ref{tab:real_world_forecasting_complete_results} reports the individual forecasting horizons for nine backbones on nine real-world benchmarks. These entries complement the main-paper summary by showing where gains, ties, and regressions occur at particular horizons.

\begin{table}[t!]
    \centering
    \scriptsize
    \caption{Complete per-horizon real-world forecasting results for nine backbones. Each horizon reports Raw and \method (+\method) MSE/MAE; lower errors are bold and higher errors are underlined. Up and down arrows report relative improvements and regressions, respectively, averaged over horizons.}
    \label{tab:real_world_forecasting_complete_results}
    \setlength{\tabcolsep}{1.2pt}
    \renewcommand{\arraystretch}{0.96}
    \resizebox{\linewidth}{!}{

    }
\end{table}

\clearpage

\subsection{Paired-Seed Stability}
Table~\ref{tab:paired_seed_stability} reports a representative multi-seed check on three datasets and four backbones. Each Raw/+\method pair uses five matched seeds (2022--2026) with identical initialization, data order, early stopping, and hyperparameters. Mean MSE gains are positive in all twelve settings; several configurations reach $p<0.05$ under a paired comparison, while others remain directionally consistent but not significant at this sample size.

\begin{table}[!t]
    \centering
    \small
    \caption{Paired-seed stability across five random seeds (2022--2026). MSE entries show mean $\pm$ std. W/T/L denotes wins/ties/losses.}
    \label{tab:paired_seed_stability}
    \setlength{\tabcolsep}{3.2pt}
    \renewcommand{\arraystretch}{1.08}

    \resizebox{\textwidth}{!}{%
    \begin{tabular}{@{}l
        c >{\columncolor{TableOurs}}c c c c
        c >{\columncolor{TableOurs}}c c c c
        c >{\columncolor{TableOurs}}c c c c@{}}
        \toprule
        \rowcolor{TableHeader}
        & \multicolumn{5}{c}{\textbf{\textit{ETTh1}} ($L{=}96,H{=}96$)}
        & \multicolumn{5}{c}{\textbf{\textit{ILI}} ($L{=}24,H{=}24$)}
        & \multicolumn{5}{c}{\textbf{\textit{Exchange Rate}} ($L{=}96,H{=}96$)} \\
        \cmidrule(lr){2-6}\cmidrule(lr){7-11}\cmidrule(lr){12-16}
        \rowcolor{TableHeader}
        \textbf{Backbone}
        & \textbf{Raw MSE} & \tableoursheader\textbf{+\method MSE} & \textbf{Gain (\%)} & \textbf{W/T/L} & \textbf{$p$}
        & \textbf{Raw MSE} & \tableoursheader\textbf{+\method MSE} & \textbf{Gain (\%)} & \textbf{W/T/L} & \textbf{$p$}
        & \textbf{Raw MSE} & \tableoursheader\textbf{+\method MSE} & \textbf{Gain (\%)} & \textbf{W/T/L} & \textbf{$p$} \\
        \midrule
        Informer
        & $1.2658 \pm 0.1879$ & $\mathbf{1.0658} \pm 0.0453$ & \tablegain{$\uparrow$\,15.8} & 4/0/1 & 0.076
        & $6.2605 \pm 0.7293$ & $\mathbf{5.8442} \pm 0.2124$ & \tablegain{$\uparrow$\,6.7} & 3/0/2 & 0.301
        & $4.0671 \pm 0.8449$ & $\mathbf{1.2747} \pm 0.3771$ & \tablegain{$\uparrow$\,68.7} & 5/0/0 & \textbf{0.003} \\
        PatchTST
        & $0.3943 \pm 0.0047$ & $\mathbf{0.3897} \pm 0.0039$ & \tablegain{$\uparrow$\,1.2} & 4/0/1 & 0.056
        & $3.3997 \pm 0.3373$ & $\mathbf{2.8384} \pm 0.1953$ & \tablegain{$\uparrow$\,16.5} & 5/0/0 & \textbf{0.004}
        & $0.1040 \pm 0.0018$ & $\mathbf{0.1022} \pm 0.0014$ & \tablegain{$\uparrow$\,1.7} & 5/0/0 & \textbf{0.005} \\
        TimeMixer
        & $0.3946 \pm 0.0025$ & $\mathbf{0.3920} \pm 0.0020$ & \tablegain{$\uparrow$\,0.7} & 5/0/0 & \textbf{0.007}
        & $3.3904 \pm 0.1844$ & $\mathbf{3.3582} \pm 0.3473$ & \tablegain{$\uparrow$\,1.0} & 2/0/3 & 0.764
        & $0.1024 \pm 0.0005$ & $\mathbf{0.1017} \pm 0.0007$ & \tablegain{$\uparrow$\,0.7} & 4/0/1 & 0.166 \\
        TimeFilter
        & $0.3895 \pm 0.0010$ & $\mathbf{0.3891} \pm 0.0009$ & \tablegain{$\uparrow$\,0.1} & 4/0/1 & 0.104
        & $1.8652 \pm 0.1738$ & $\mathbf{1.8227} \pm 0.1565$ & \tablegain{$\uparrow$\,2.3} & 4/0/1 & 0.550
        & $0.1044 \pm 0.0014$ & $\mathbf{0.1033} \pm 0.0014$ & \tablegain{$\uparrow$\,1.0} & 5/0/0 & 0.124 \\
        \bottomrule
    \end{tabular}%
    }
\end{table}

\subsection{Complete Fixed-Dropout Sweep}
Table~\ref{tab:fixed_dropout_grid_results} reports test results for every candidate rate from 0 to 0.50 in increments of 0.05 on the representative horizons used in the main comparison. Validation MSE selects the bold cell $p^\star$, while test values are reported for every completed fixed-rate checkpoint; selection is therefore independent of test performance.

\begin{table}[H]
    \centering
    \scriptsize
    \caption{Fixed-dropout sweep at $H=96$ for ETTh1 and Exchange Rate and $H=48$ for ILI. Cells report test MSE/MAE; $p^\star$ is selected solely by validation MSE and marked in bold. Underlined cells mark the second-lowest test MSE in each row.}
    \label{tab:fixed_dropout_grid_results}
    \setlength{\tabcolsep}{2.2pt}
    \renewcommand{\arraystretch}{1.05}
    \resizebox{\textwidth}{!}{%
    \begin{tabular}{llc*{11}{c}}
        \toprule
        \rowcolor{TableHeader}
        \textbf{Dataset} & \textbf{Backbone} & $\boldsymbol{p^\star}$ & \multicolumn{11}{c}{\textbf{Fixed dropout rate $p$ (test MSE/MAE)}} \\
        \cmidrule(lr){4-14}
        \rowcolor{TableSubheader}
        & & & 0.00 & 0.05 & 0.10 & 0.15 & 0.20 & 0.25 & 0.30 & 0.35 & 0.40 & 0.45 & 0.50 \\
        \midrule
        ETTh1 & Informer & 0.50 & 1.478/0.872 & 1.508/0.879 & 1.367/0.835 & 1.425/0.860 & 1.360/0.836 & 1.432/0.858 & 1.362/0.844 & 1.506/0.884 & 1.314/0.824 & \underline{1.284/0.815} & \tableselectedcell\textbf{0.975/0.739} \\
        \addlinespace[2pt]
        Exchange Rate & Informer & 0.05 & \underline{4.024/1.589} & \tableselectedcell\textbf{2.754/1.169} & 4.107/1.601 & 4.227/1.626 & 4.261/1.625 & 4.192/1.617 & 4.214/1.619 & 4.277/1.625 & 4.213/1.612 & 4.170/1.601 & 4.274/1.621 \\
        \addlinespace[2pt]
        ILI & Informer & 0.45 & \underline{7.662/2.060} & 9.094/2.213 & 9.310/2.254 & 8.394/2.102 & 8.853/2.181 & 7.877/2.023 & 8.567/2.122 & 8.519/2.106 & 8.573/2.119 & \tableselectedcell\textbf{6.569/1.807} & 8.792/2.146 \\
        \addlinespace[2pt]
        ETTh1 & PatchTST & 0.05 & 0.392/0.392 & \tableselectedcell\textbf{\underline{0.386/0.390}} & 0.383/0.390 & 0.391/0.390 & 0.390/0.390 & 0.388/0.390 & 0.387/0.390 & 0.387/0.391 & 0.388/0.391 & 0.391/0.393 & 0.399/0.396 \\
        \addlinespace[2pt]
        Exchange Rate & PatchTST & 0.10 & 0.107/0.230 & \underline{0.102/0.225} & \tableselectedcell\textbf{0.108/0.231} & 0.101/0.223 & 0.101/0.225 & 0.103/0.226 & 0.103/0.226 & 0.103/0.227 & 0.104/0.228 & 0.105/0.229 & 0.106/0.230 \\
        \addlinespace[2pt]
        ILI & PatchTST & 0.00 & \tableselectedcell\textbf{2.956/1.111} & \underline{2.919/1.096} & 2.992/1.110 & 2.962/1.110 & 3.130/1.171 & 3.066/1.158 & 3.172/1.197 & 3.286/1.234 & 3.454/1.273 & 3.867/1.366 & 3.928/1.380 \\
        \addlinespace[2pt]
        ETTh1 & TimeFilter & 0.00 & \tableselectedcell\textbf{\underline{0.391/0.391}} & \underline{0.391/0.391} & \underline{0.391/0.391} & \underline{0.391/0.391} & \underline{0.391/0.391} & \underline{0.391/0.391} & \underline{0.391/0.391} & 0.390/0.391 & \underline{0.391/0.391} & \underline{0.391/0.391} & 0.390/0.391 \\
        \addlinespace[2pt]
        Exchange Rate & TimeFilter & 0.00 & \tableselectedcell\textbf{0.108/0.232} & 0.106/0.229 & 0.106/0.229 & 0.106/0.229 & 0.106/0.229 & 0.106/0.229 & 0.102/0.225 & 0.106/0.229 & \underline{0.103/0.226} & \underline{0.103/0.226} & 0.106/0.229 \\
        \addlinespace[2pt]
        ILI & TimeFilter & 0.10 & 2.619/1.012 & 2.560/1.019 & \tableselectedcell\textbf{2.619/1.012} & 2.574/1.020 & 2.514/1.010 & \underline{2.532/1.011} & 2.545/1.014 & 2.551/1.015 & 2.562/1.018 & 2.590/1.022 & 2.623/1.028 \\
        \addlinespace[2pt]
        ETTh1 & TimeMixer & 0.00 & \tableselectedcell\textbf{\underline{0.390/0.392}} & 0.391/0.390 & 0.394/0.390 & 0.391/0.388 & 0.391/0.388 & 0.391/0.388 & \underline{0.390/0.388} & \underline{0.390/0.388} & \underline{0.390/0.388} & \underline{0.390/0.388} & 0.389/0.388 \\
        \addlinespace[2pt]
        Exchange Rate & TimeMixer & 0.00 & \tableselectedcell\textbf{\underline{0.102/0.224}} & 0.101/0.223 & 0.101/0.223 & 0.101/0.223 & 0.101/0.223 & 0.101/0.223 & 0.101/0.223 & \underline{0.102/0.224} & \underline{0.102/0.224} & \underline{0.102/0.224} & \underline{0.102/0.224} \\
        \addlinespace[2pt]
        ILI & TimeMixer & 0.00 & \tableselectedcell\textbf{3.227/1.207} & 3.467/1.268 & 3.201/1.200 & 3.196/1.201 & 3.201/1.206 & \underline{3.187/1.205} & 3.203/1.210 & 3.211/1.214 & 3.230/1.220 & 3.256/1.225 & 3.280/1.233 \\
        \bottomrule
    \end{tabular}%
    }
\end{table}

\subsection{Detailed Ablation Results}
\label{appx:ablation_full}

\begin{table}[H]
    \centering
    \setlength{\tabcolsep}{2pt}
    \caption{Complete component ablation on Synth-12 across five corruption scales. Rows report MSE/MAE at each $\sigma$ and on average; $\Delta$ is relative to fixed dropout. OLS denotes global ordinary least-squares detrending. The full \method configuration is labeled ``Ours,'' and bold marks the lowest error in each result column.}
    \label{tab:component_ablation_complete_results}

    \resizebox{\linewidth}{!}{%
    \begin{tabular}{l ccc cc cc cc cc cc cc cc}
        \toprule
        \rowcolor{TableHeader}
        \multirow{2.5}{*}{\textbf{Method}} &
        \multicolumn{3}{c}{\textbf{Configuration}} &
        \multicolumn{2}{c}{$\sigma=0.1$} &
        \multicolumn{2}{c}{$\sigma=0.3$} &
        \multicolumn{2}{c}{$\sigma=0.5$} &
        \multicolumn{2}{c}{$\sigma=0.7$} &
        \multicolumn{2}{c}{$\sigma=0.9$} &
        \multicolumn{2}{c}{\textbf{Average}} &
        \multicolumn{2}{c}{$\Delta\%$ \scriptsize{(vs. Base)}} \\

        \cmidrule(lr){2-4}
        \cmidrule(lr){5-6} \cmidrule(lr){7-8} \cmidrule(lr){9-10} \cmidrule(lr){11-12} \cmidrule(lr){13-14}
        \cmidrule(lr){15-16} \cmidrule(lr){17-18}

        \rowcolor{TableSubheader}
        & \small{Detrend} & \small{LogNorm} & \small{log-SFM}
        & \scriptsize MSE & \scriptsize MAE
        & \scriptsize MSE & \scriptsize MAE
        & \scriptsize MSE & \scriptsize MAE
        & \scriptsize MSE & \scriptsize MAE
        & \scriptsize MSE & \scriptsize MAE
        & \small\textbf{MSE} & \small\textbf{MAE}
        & \scriptsize MSE & \scriptsize MAE \\
        \midrule

        Baseline & - & - & - & 1.228 & 0.862 & 1.189 & 0.846 & 1.180 & 0.843 & 1.140 & 0.831 & 1.060 & 0.798 & 1.159 & 0.836 & - & - \\
        \addlinespace[0.3em]

        Minimal Model & None & \ding{55} & \ding{55} & 1.698 & 1.034 & 1.907 & 1.092 & 1.668 & 1.010 & \textbf{0.633} & \textbf{0.650} & 1.665 & 1.015 & 1.514 & 0.960 & \tableloss{30\%\,$\downarrow$} & \tableloss{14\%\,$\downarrow$} \\
        w/o Detrend+Norm & None & \ding{55} & \ding{51} & 2.041 & 1.124 & 1.159 & 0.843 & 1.114 & 0.827 & 1.954 & 1.111 & 1.072 & 0.806 & 1.468 & 0.942 & \tableloss{26\%\,$\downarrow$} & \tableloss{12\%\,$\downarrow$} \\
        w/o Detrend & None & \ding{51} & \ding{51} & 1.672 & 1.017 & 1.863 & 1.093 & 1.721 & 1.051 & 1.442 & 0.950 & 1.513 & 0.960 & 1.642 & 1.014 & \tableloss{41\%\,$\downarrow$} & \tableloss{21\%\,$\downarrow$} \\
        Simple Detrend & Simple & \ding{51} & \ding{51} & 1.416 & 0.943 & 1.757 & 1.042 & 1.674 & 1.018 & 1.486 & 0.958 & 1.535 & 0.974 & 1.574 & 0.987 & \tableloss{35\%\,$\downarrow$} & \tableloss{18\%\,$\downarrow$} \\
        w/o Spectral Norm & OLS & \ding{55} & \ding{51} & 1.336 & 0.911 & 1.251 & 0.879 & 1.278 & 0.889 & 1.588 & 0.988 & 1.090 & 0.811 & 1.308 & 0.896 & \tableloss{12\%\,$\downarrow$} & \tableloss{7\%\,$\downarrow$} \\
        w/o log-SFM Anchor & OLS & \ding{51} & \ding{55} & 1.386 & 0.933 & 1.532 & 0.968 & 1.540 & 0.977 & 1.693 & 1.029 & 1.448 & 0.944 & 1.520 & 0.970 & \tableloss{31\%\,$\downarrow$} & \tableloss{16\%\,$\downarrow$} \\
        \midrule

        \rowcolor{TableOurs}
        \textbf{\method (Ours)} & OLS & \ding{51} & \ding{51} & \textbf{1.135} & \textbf{0.841} & \textbf{1.155} & \textbf{0.746} & \textbf{1.050} & \textbf{0.808} & 1.049 & 0.803 & \textbf{0.990} & \textbf{0.785} & \textbf{1.076} & \textbf{0.797} & \tablegain{7.2\%\,$\uparrow$} & \tablegain{4.7\%\,$\uparrow$} \\
        \bottomrule
    \end{tabular}
    }
\end{table}

Table~\ref{tab:component_ablation_complete_results} isolates the three preprocessing choices that turn a residual score into a reliable sample-wise capacity modulation signal: global OLS detrending, log-amplitude normalization, and log-SFM anchoring. The Baseline row is fixed-dropout training without adaptive scoring and is the reference for the final $\Delta\%$ columns. Reading the ablations as a pipeline rather than as independent switches clarifies which stages matter most.

Detrending is the dominant factor. Removing it entirely raises average MSE from $1.076$ to $1.642$ ($52.6\%$ relative degradation vs.\ the full method) and leaves every $\sigma$ well above both Baseline and Ours. Endpoint-style Simple Detrend is only a partial remedy: average MSE remains $1.574$, and at $\sigma=0.9$ it reaches $1.535$ (worse than no detrending at $1.513$), while the full OLS configuration obtains $0.990$. A slope estimated from two endpoints can be sensitive to corruption at those endpoints, whereas OLS uses the full window. The observed error difference is consistent with this motivation for global detrending.

The other two stages refine an already detrended residual. Dropping the log-SFM anchor raises average MSE from $1.076$ to $1.520$ ($41.3\%$ relative vs.\ the full method), nearly as severe as removing detrending, consistent with the spectral reference helping calibrate the mask. Removing log-amplitude normalization is milder but still material ($1.308$ average MSE, $21.6\%$ relative vs.\ the full method), supporting normalization before computing the residual score. Configurations that strip multiple stages at once (Minimal Model; w/o Detrend+Norm) stay in the $1.47$--$1.51$ average-MSE band and underperform fixed dropout on $\Delta\%$, showing that a poorly calibrated adaptive score can hurt more than a constant $p$.

The full configuration has the lowest average MSE and the lowest MSE at four of the five values of $\sigma$, and it is the only row with positive $\Delta\%$ against Baseline ($7.2\%$ MSE / $4.7\%$ MAE). At $\sigma=0.7$, Minimal Model records $0.633/0.650$, but its average MSE of $1.514$ remains above both the full method ($1.076$) and fixed dropout ($1.159$). Overall, the complete matrix supports a joint design. Removing any one of the three components increases average error, while the isolated advantage of the Minimal Model at $\sigma=0.7$ does not persist across scales. Together, the components provide the strongest average performance among the tested configurations on Synth-12.

\subsection{Batch-Size Sensitivity}
\label{appx:batch_size_sensitivity}

The rate mapper normalizes each residual score against the current mini-batch before mapping it to a dropout probability, so the batch size $B$ determines how the relative unreliability signal is estimated. Table~\ref{tab:batch_size_sensitivity} sweeps $B\in\{8,16,32,64,128\}$ for PatchTST on \textit{ETTh1} and WPMixer on \textit{Synth-12} at $\sigma=0.5$, both at $H=96$. Both settings show a smooth interior optimum at conventional batch sizes: MSE is lowest at $B=32$ for PatchTST ($0.3833$) and at $B=64$ for WPMixer ($0.2781$), and changes little across the remaining values of $B$. MAE follows the same pattern, with its minimum at $B=128$ for PatchTST ($0.3889$) and at $B=64$ for WPMixer ($0.3880$). Over the five batch sizes, the relative MSE range is $2.42\%$ and $3.11\%$, with a coefficient of variation of $0.97\%$ and $1.24\%$, respectively. The batch-relative mapping is therefore stable across conventional batch sizes and does not require per-dataset tuning of $B$. The min--max guard also makes the mapper well defined at the boundary. When the batch is too small to define a range, the dispersion condition is not met and the normalized score defaults to the neutral prior $\hat{s}_i=0.5$, returning a moderate default dropout probability instead of an ill-defined normalized score. This behavior keeps gradients well behaved for very small $B$, and it is consistent with the interior optima above: moderate batches provide enough samples for a meaningful min--max range while retaining the benefit of adaptive rates.

\begin{table}[H]
    \centering
    \small
    \setlength{\tabcolsep}{4pt}
    \renewcommand{\arraystretch}{1.1}
    \caption{Batch-size sensitivity at $H=96$ for two representative backbones. Because the rate mapper normalizes residual scores within each mini-batch, we vary $B\in\{8,16,32,64,128\}$ and report test MSE/MAE; bold marks the lowest MSE in each setting. All runs within a setting share a fixed control configuration, so the comparison isolates the effect of $B$. The last two columns summarize the MSE sweep: relative range is $(\max_B \mathrm{MSE}-\min_B \mathrm{MSE})/\min_B \mathrm{MSE}$, and CV is the population standard deviation across the five batch sizes divided by their mean.}
    \label{tab:batch_size_sensitivity}
    \resizebox{\linewidth}{!}{%
    \begin{tabular}{ll ccccc cc}
        \toprule
        \rowcolor{TableHeader}
        \multirow{2}{*}{\textbf{Backbone / dataset}} & \multirow{2}{*}{\textbf{Metric}} &
        \multicolumn{5}{c}{\textbf{Batch size} $B$} &
        \multicolumn{2}{c}{\textbf{MSE variation}} \\
        \cmidrule(lr){3-7}\cmidrule(lr){8-9}
        \rowcolor{TableSubheader}
        & & 8 & 16 & 32 & 64 & 128 & Range (\%) & CV (\%) \\
        \midrule
        \multirow{2}{*}{\makecell[l]{\textbf{PatchTST}\\\textit{ETTh1}}} & MSE
        & 0.390715 & 0.391424 & \textbf{0.383258} & 0.392519 & 0.384708 & \multirow{2}{*}{2.42} & \multirow{2}{*}{0.97} \\
        & MAE & 0.392903 & 0.392274 & 0.390247 & 0.391016 & 0.388892 & & \\
        \midrule
        \multirow{2}{*}{\makecell[l]{\textbf{WPMixer}\\\textit{Synth-12} ($\sigma{=}0.5$)}} & MSE
        & 0.280752 & 0.285856 & 0.286771 & \textbf{0.278124} & 0.279373 & \multirow{2}{*}{3.11} & \multirow{2}{*}{1.24} \\
        & MAE & 0.390588 & 0.397123 & 0.394220 & 0.387984 & 0.388969 & & \\
        \bottomrule
    \end{tabular}%
    }
\end{table}

\section{Component-wise Spectral Diagnostics}
\label{appx:visualizations}

This section isolates canonical signal components and corruption types to probe the spectral scorer qualitatively. These diagnostic panels are separate from the layered-corruption \textit{Synth-12} benchmark and show what the scorer treats as concentrated structure versus residual under different non-stationarities.

\noindent\textbf{Scope of the diagnostic reconstructions.}
The visualizations use hard coefficient retention to expose where spectral energy is located. The diagnostic threshold is fixed for visualization, whereas \method learns channel-adaptive thresholds and uses the resulting residual to dynamically modulate active capacity via sample-wise retention probabilities. The model retains the original input in data space $\mathcal{X}$, and the panels stress-test spectral concentration qualitatively.

\subsection{Stationary Periodic Signals}
Figure~\ref{fig:spectral_analysis_periodic} shows stationary periodic signals with time-invariant frequency and amplitude. In a finite sampled window, their spectra contain narrow peaks near the fundamental frequencies (e.g., 0.5 and 2.0 Hz). Under the tested corruptions, these peaks remain distinguishable from much of the broadband residual. The hard-threshold reconstructions therefore recover the dominant cycles with limited phase distortion, while the residual absorbs most of the additive high-frequency energy. This setting is the easiest for spectral scoring: energy is already sparse, so residual magnitude provides an effective proxy for sample unreliability.

\begin{figure}[!ht]
    \centering
    \includegraphics[width=\linewidth]{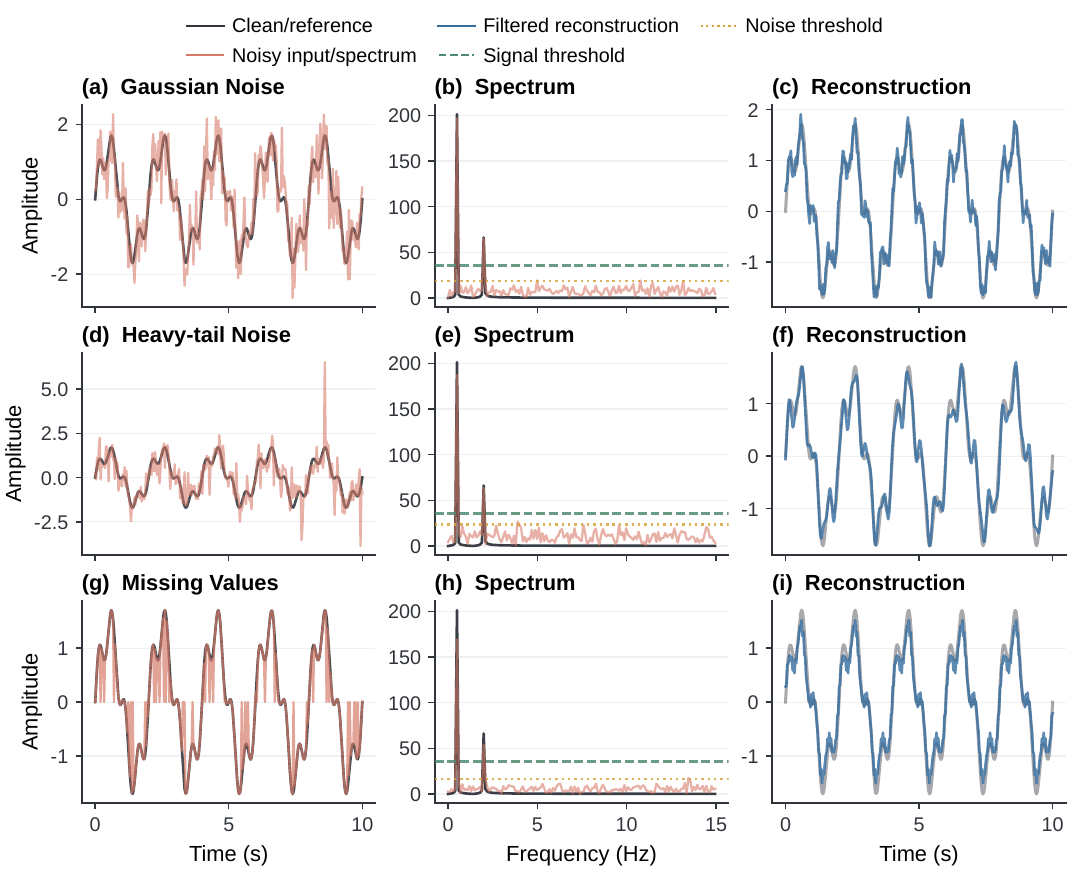}
    \caption{Stationary periodic signals under three isolated corruptions (Gaussian, heavy-tail, point-wise missing values). Columns: corrupted input, amplitude spectrum, and hard-threshold reconstruction. Narrow peaks remain separable from broadband residual energy, illustrating spectral separation in these examples.}
    \label{fig:spectral_analysis_periodic}
\end{figure}

\subsection{Non-stationary Mean (Trend \& Seasonality)}
Figure~\ref{fig:spectral_analysis_trend_seasonal} combines linear drift with seasonal oscillations. Before detrending, the drift concentrates energy near zero frequency and the seasonal component appears at harmonic peaks. Without this step, a hard retention threshold can either over-preserve the DC ramp or leak trend energy into neighboring bins after windowing. The scorer removes a global linear fit before spectral masking and restores it after reconstruction, reducing boundary leakage while retaining the main seasonal structure. The residual then better reflects corruption and residual non-trend fluctuation rather than the deterministic drift itself, which matches the role of OLS preprocessing in the main method.

\begin{figure}[!ht]
    \centering
    \includegraphics[width=\linewidth]{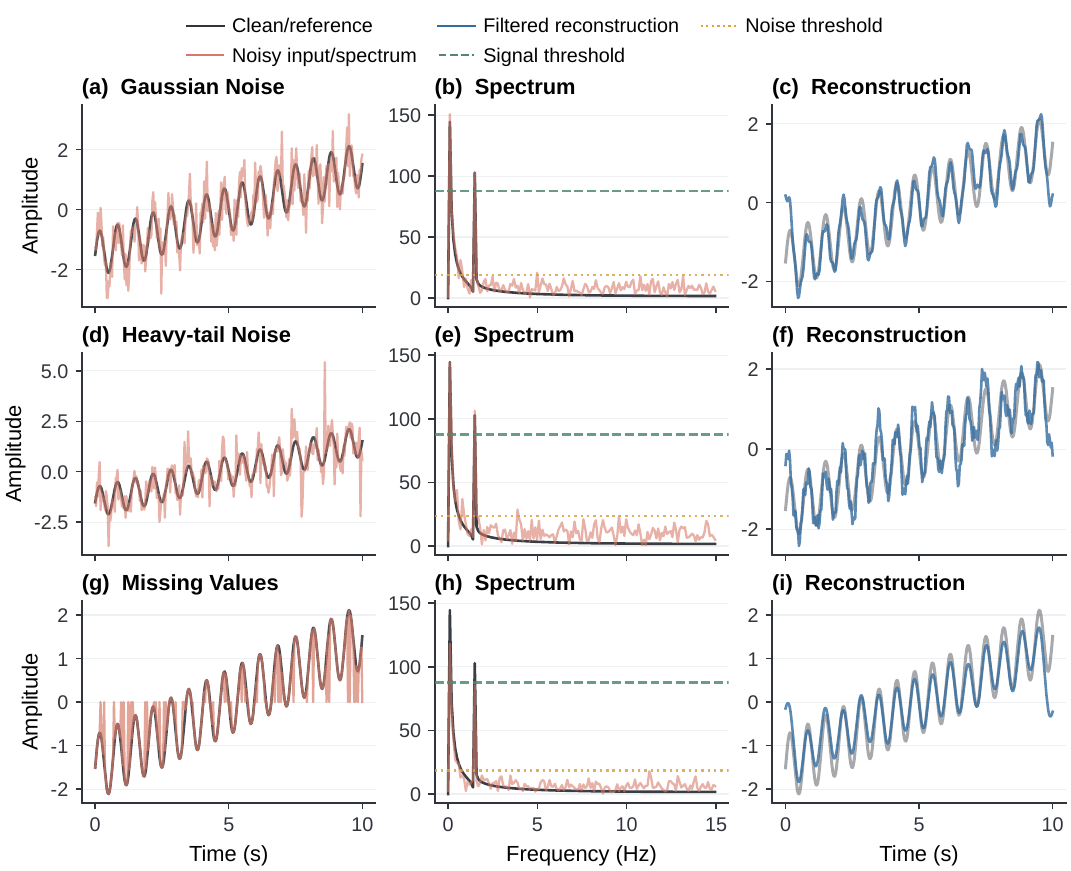}
    \caption{Trend-seasonal signals under three isolated corruptions. Columns: corrupted input, spectrum after global OLS detrending, and reconstruction with trend restoration. Detrending reduces DC leakage so the residual better reflects corruption rather than deterministic drift.}
    \label{fig:spectral_analysis_trend_seasonal}
\end{figure}

\subsection{Non-stationary Frequency (Chirp)}
Figure~\ref{fig:spectral_analysis_chirp} presents a linear chirp sweeping from 0.5 to 3.0 Hz. Its energy forms a structured band rather than isolated peaks, so a peak-only view of spectral sparsity is incomplete. In these examples, the applied threshold retains enough of the band to reconstruct the evolving oscillation and the instantaneous frequency trajectory remains visible after inversion. Corruptions raise the broadband floor around the band, which increases residual energy without fully erasing the chirp structure. This supports treating residual magnitude as a soft difficulty cue even when the latent signal is time-varying rather than purely tonal.

\begin{figure}[!ht]
    \centering
    \includegraphics[width=\linewidth]{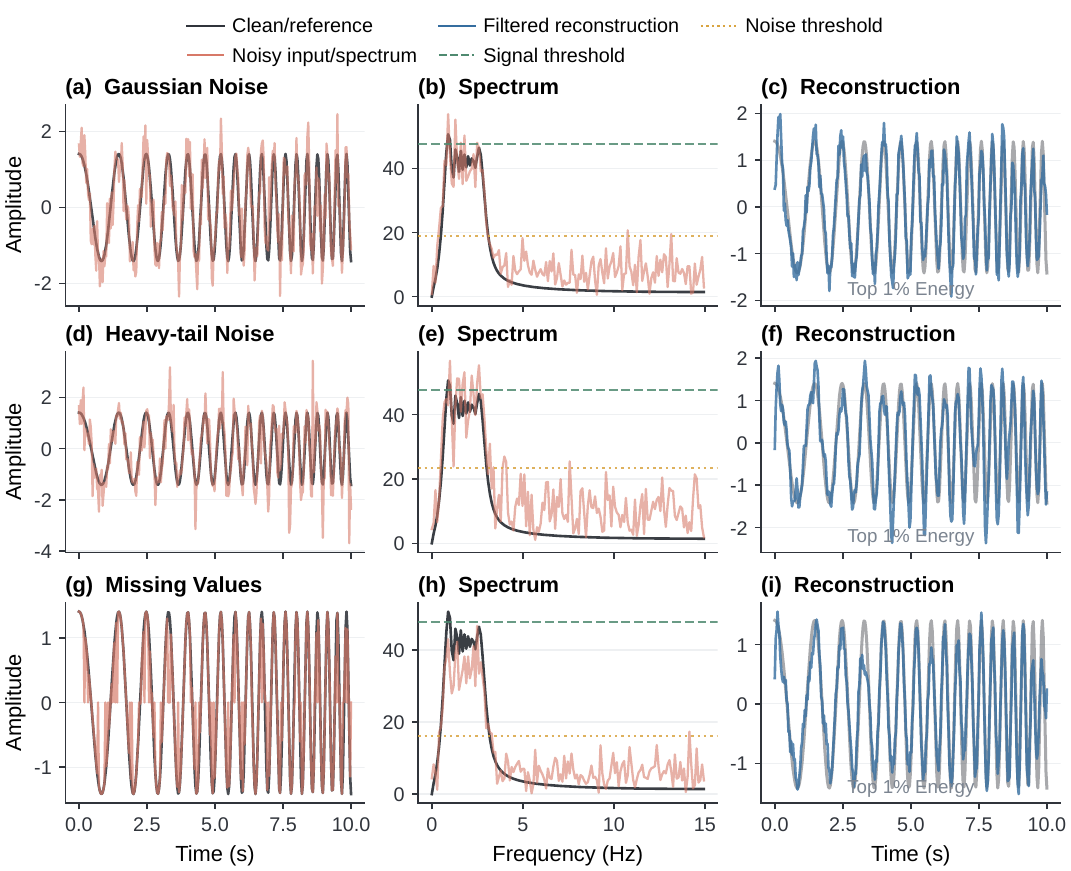}
    \caption{Linear chirps sweeping from 0.5 to 3.0 Hz under three isolated corruptions. Columns: corrupted input, amplitude spectrum, and hard-threshold reconstruction. Energy forms a structured band rather than isolated peaks; the retained band preserves the sweep while corruptions raise residual energy.}
    \label{fig:spectral_analysis_chirp}
\end{figure}

\subsection{Non-stationary Variance (Amplitude Modulation)}
Figure~\ref{fig:spectral_analysis_am} depicts amplitude-modulated signals, where a low-frequency envelope changes the carrier amplitude. The spectrum contains a carrier and modulation sidebands; the reconstructions preserve the principal amplitude variation in the tested cases. Because the envelope redistributes energy into nearby bins rather than destroying the carrier, hard retention can still recover the overarching modulation pattern while leaving fine-scale envelope noise in the residual. Residual sideband energy can include both corruption and low-amplitude modulation structure. This motivates retaining the original input for prediction while using the residual to guide regularization.

\begin{figure}[!ht]
    \centering
    \includegraphics[width=\linewidth]{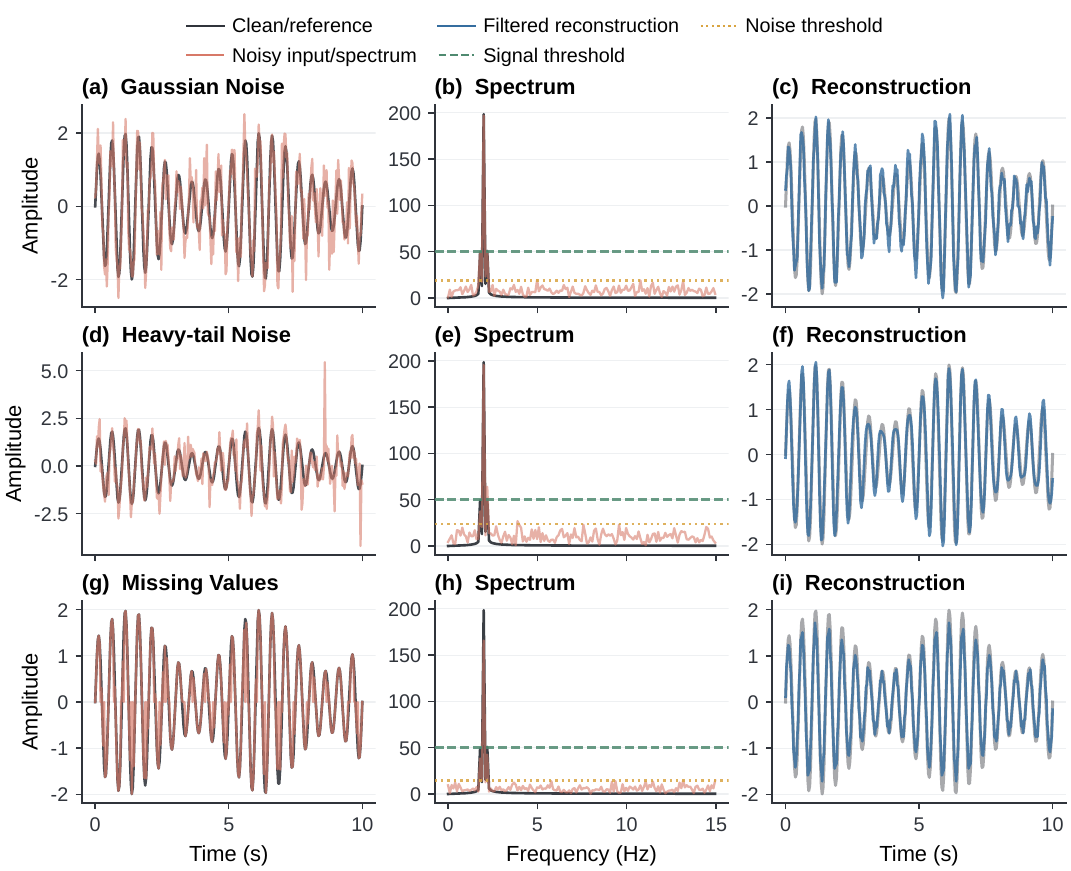}
    \caption{Amplitude-modulated signals (2.0 Hz carrier, 0.2 Hz envelope) under three corruptions. Columns: corrupted input, spectrum with carrier and sidebands, and hard-threshold reconstruction. The reconstruction retains the amplitude variation, while the residual can contain both corruption and modulation detail.}
    \label{fig:spectral_analysis_am}
\end{figure}

\subsection{Spectral Sparsity Analysis on Real-world Benchmarks}
\label{appx:real_world_sparsity}

Figure~\ref{fig:real_world_vis} retains the top 10\% of Fourier coefficients by amplitude on nine forecasting datasets and sets the rest to zero before inversion. Seven displayed correlations exceed 0.95, while \textit{Electricity} and \textit{ECG} have correlations of 0.868 and 0.769, respectively. Spectral concentration therefore varies across the examples: a small coefficient subset closely reconstructs some series but loses more structure on Electricity and ECG. The discarded coefficients can contain both nuisance fluctuations and useful signal, motivating the use of spectral residuals as a regularization cue while preserving the original input.

\begin{figure}[h]
    \centering
    \includegraphics[width=\linewidth]{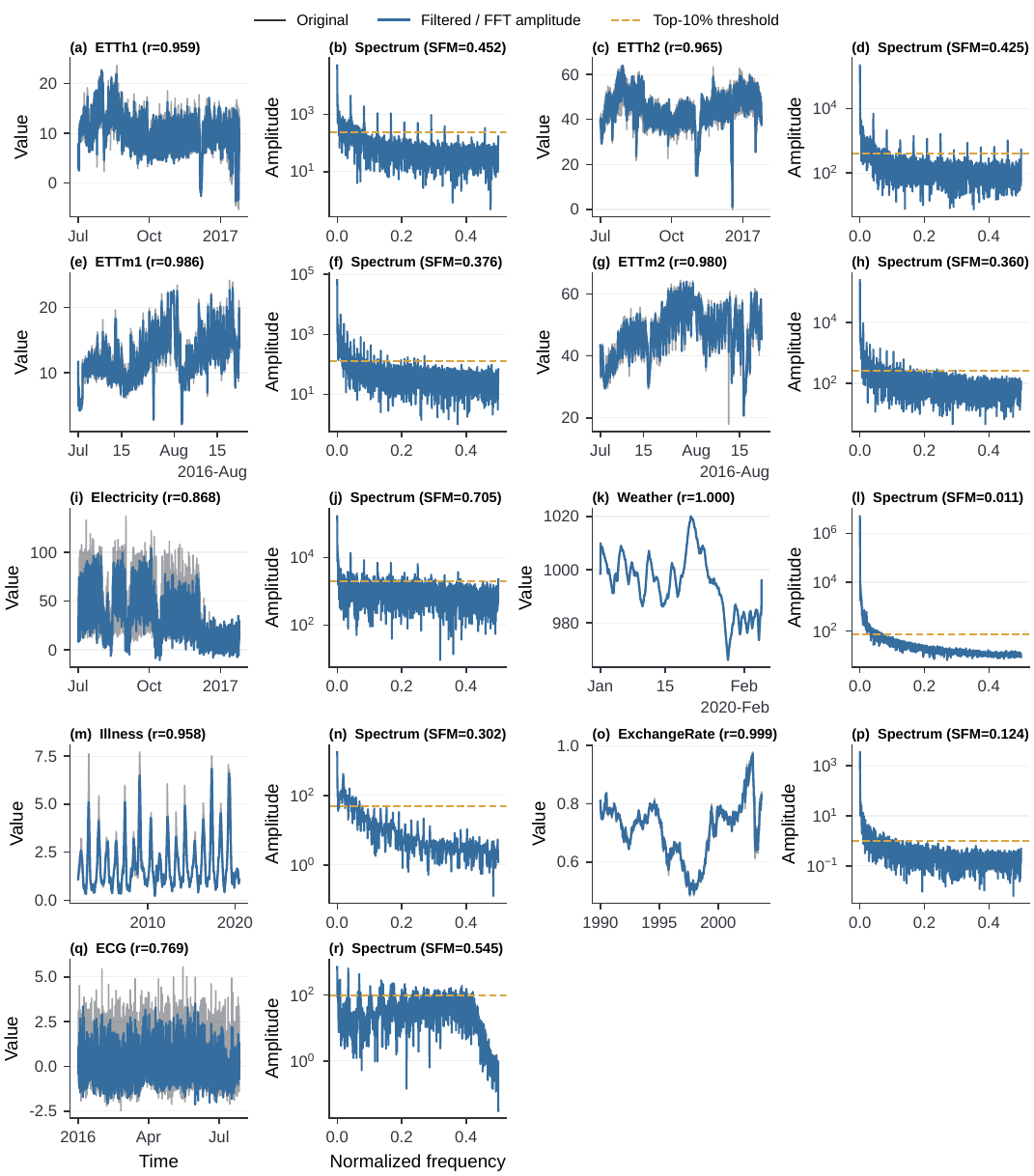}
    \caption{Top-10\% Fourier reconstruction on nine forecasting datasets. Each pair shows the original and reconstructed series with Pearson correlation, followed by the amplitude spectrum and retention threshold. Seven correlations exceed 0.95; Electricity and ECG yield 0.868 and 0.769. ``Illness'' denotes ILI. The hard mask is used for this diagnostic; \method uses a learned soft mask to compute a training-time regularization cue.}
    \label{fig:real_world_vis}
\end{figure}

\section{Impact of Spectral Leakage and Detrending}
\label{sec:detrend_ablation}

The discrete Fourier transform treats a finite observation window as one period of a periodically extended sequence. When a time series contains a trend, the join between the last time step $x_L$ and the first $x_1$ can therefore be discontinuous. This is a property of the finite-window DFT, not of the forecasting backbone itself, but it matters because \method scores residual energy in the spectral domain before modulating sample-wise active capacity.

This discontinuity can produce \textbf{spectral leakage}, spreading boundary-induced energy across the spectrum and introducing ringing artifacts in truncated spectral reconstruction. If left unaddressed, the spectral mask $\mathbf{M}$ may attenuate these leakage components, causing the reconstructed signal $\hat{\mathbf{x}}$ to exhibit large boundary errors. Edge-MAE can then reflect periodic-extension artifacts rather than stochastic corruption, which would inflate residual scores for well-structured samples and misallocate capacity. Global linear detrending reduces this confounder in the residual score, while residuals can still contain structure from non-stationary components beyond a linear trend. We compare three preprocessing strategies: (1) no detrending, (2) endpoint-based detrending, and (3) global linear OLS detrending across four stress tests (Figure~\ref{fig:detrend_analysis}). The panels isolate failure modes that appear when spectral scoring is applied to short windows with trends, outliers, or incomplete seasonal cycles.

\subsection{Theoretical Comparison}
\textbf{End-to-End Detrending} is a local approach that estimates the trend solely from the boundary values: $\hat{\mathbf{x}}_{trend}(t) = \mathbf{x}_1 + \frac{t-1}{L-1}(\mathbf{x}_L-\mathbf{x}_1)$ for $t\in\{1,\ldots,L\}$. While computationally trivial, it is highly sensitive to sensor noise at the endpoints ($t=1$ or $t=L$). Any spike, drop-out, or quantization error at either boundary fully determines the slope, so a single corrupted endpoint can dominate the residual everywhere.

\textbf{Global Linear Detrending (Ours)} estimates $\mathbf{w}^\ast,\mathbf{b}^\ast=\arg\min_{\mathbf{w},\mathbf{b}}\sum_{t=1}^L\|\mathbf{x}_t-(\mathbf{w}t+\mathbf{b})\|_2^2$ from all time points. Compared with endpoint interpolation, a single endpoint has less leverage, while ordinary least squares remains sensitive to outliers. In exchange, the fit is more stable under isolated boundary failures and better matches the role of detrending as a lightweight preprocessing step before soft spectral masking. The residual serves as a training-time difficulty cue while the backbone continues to use the original input.

\subsection{Analysis of Failure Modes}
We evaluate reconstruction fidelity near the sequence boundaries using \textbf{Edge-MAE}. Let $\mathcal{E}\subseteq\{1,\ldots,L\}$ denote the boundary indices highlighted in Figure~\ref{fig:detrend_analysis}. For $C$ channels,
\begin{equation}
    \operatorname{Edge\text{-}MAE}
    = \frac{1}{|\mathcal{E}|C}
    \sum_{t\in\mathcal{E}}\sum_{c=1}^{C}
    \left|x_{t,c}-\hat{x}_{t,c}\right|.
\end{equation}
Lower values indicate better boundary reconstruction. This diagnostic differs from the full-window residual score of \method: Edge-MAE stresses leakage at $t{=}1$ and $t{=}L$, whereas the training residual averages over the window after soft masking.

\textbf{Sensitivity to Outliers (Row 1).} 
In the ``Linear + Start Outlier'' scenario, a spike at $t=1$ directly determines the endpoint-interpolation slope. Global OLS distributes the fit over all points, mitigating outlier leverage without ignoring extreme observations. The practical consequence for capacity modulation is that endpoint detrending can produce an artificially inflated unreliability score from a single boundary outlier, whereas global OLS reduces the influence of that endpoint on the fitted trend.

\textbf{Phase Mismatch in Seasonality (Row 2).}
If the window does not contain an integer number of cycles, $x_1\neq x_L$ and the repeated-window interpretation introduces a boundary discontinuity. Both detrending variants reduce the resulting leakage in this example by removing the linear component of the join discontinuity before spectral truncation. Without detrending, residual energy near the edges can be dominated by periodic-extension artifacts even when the interior seasonal pattern is clean.

\textbf{Instability under Non-linearity (Row 3).}
For a quadratic trend, endpoint noise changes the interpolation slope. Global OLS gives the least-squares linear approximation, but no detrending has the lowest Edge-MAE in this row, showing that a linear trend model is not uniformly preferable under model mismatch. We retain global OLS as a default because it wins on the other three stress tests and is cheap to apply. Strongly curved trends may leave systematic residual structure that soft masking must absorb.

\textbf{Robustness to Sensor Failure (Row 4).}
A step-up regime shift is followed by a failed final observation. Endpoint interpolation connects the initial and failed final values and infers a nearly flat trend, erasing most of the high-state evidence. Global OLS uses the intervening high-state observations and produces a more suitable linear fit in this constructed case. The residual after global OLS therefore better reflects the failed endpoint rather than a systematically wrong slope across the whole window.

Taken together, the four rows favor global OLS as a default preprocessor for spectral residual scoring: it avoids endpoint leverage under outliers and failures (rows 1 and 4), reduces seasonal join discontinuities (row 2), and shows limitations under strong nonlinear trends where no linear model is optimal (row 3). Edge-MAE stresses boundary reconstruction rather than full-window forecasting error, so the figure diagnoses leakage-induced residual bias. In \method, the residual serves as a label-free unreliability proxy for capacity modulation; global OLS keeps this signal less confounded by boundary extension artifacts.

\begin{figure}[H]
    \centering
    \includegraphics[width=\linewidth]{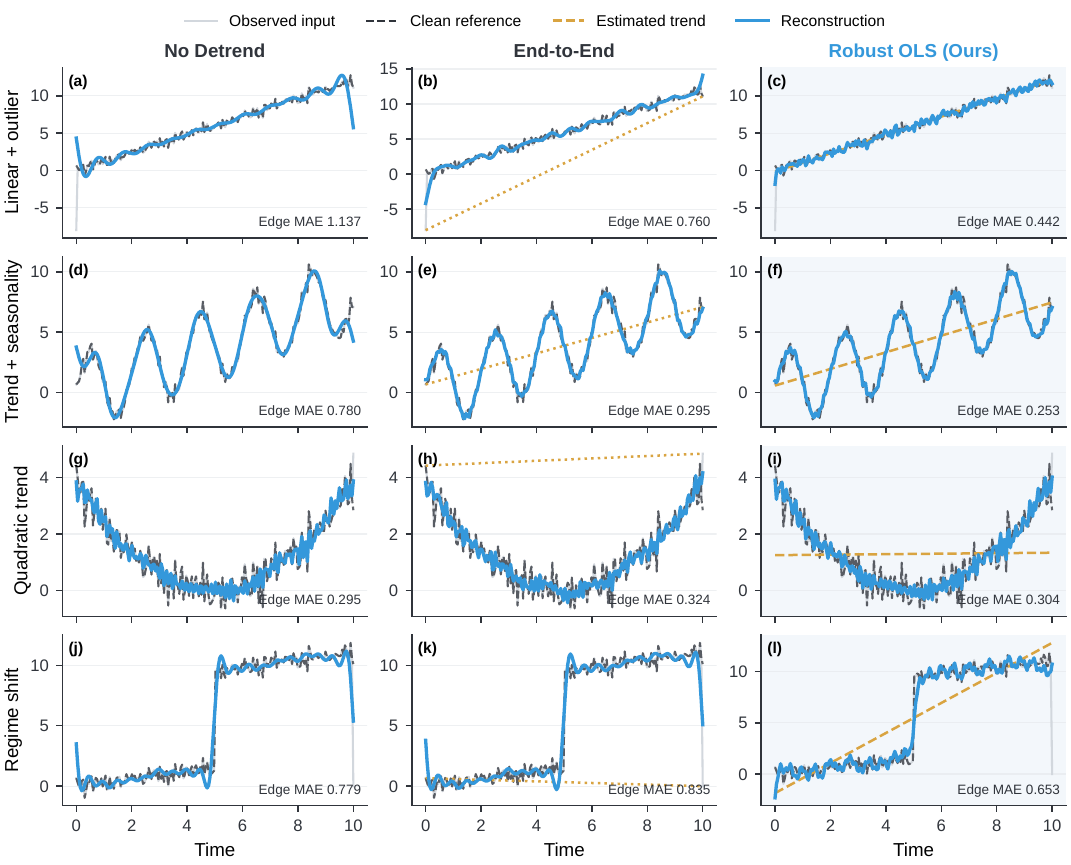}
    \caption{Comparison of no detrending, endpoint interpolation, and global OLS under four boundary stress tests, scored by reconstruction Edge-MAE near the window edges. Each row compares the three detrending strategies; panels show the observed input, clean reference, estimated trend, reconstruction, and Edge-MAE. \textbf{Row 1} (start-point outlier): endpoint interpolation inherits the spike as the full slope, while global OLS dilutes its leverage. \textbf{Row 2} (seasonal phase mismatch): $x_1\neq x_L$ induces a join discontinuity; both detrending variants reduce leakage relative to no detrending. \textbf{Row 3} (quadratic trend with endpoint noise): no detrending achieves the lowest Edge-MAE, showing that a linear trend model is not uniformly preferable under model mismatch. \textbf{Row 4} (regime shift then final-step sensor failure): endpoint interpolation flattens the high-state evidence, whereas global OLS recovers a more suitable linear fit. Global OLS wins rows 1, 2, and 4. The plot label ``Robust OLS refers to ordinary global OLS, not a robust-regression estimator.''}
    \label{fig:detrend_analysis}
\end{figure}

\section{Theory and Proofs}
\label{app:theory_proofs}

This appendix derives the local dropout approximation, characterizes the oracle allocation, and proves the excess-risk theorem for the stylized bias--variance surrogate introduced in the theory section.

\subsection{Local dropout approximation}
Condition on the input, mini-batch, and backbone parameters. Consider one hidden activation vector $\mathbf h$ and a mask $\mathbf B$ with independent Bernoulli entries of retention probability $q=1-p$. Set all other activation multipliers to one. In inverted dropout, $\widetilde{\mathbf h}=\mathbf B\odot\mathbf h/q$. Writing $\boldsymbol\delta=\widetilde{\mathbf h}-\mathbf h$, independence gives
\[
\mathbb E[\boldsymbol\delta]=\mathbf 0,
\qquad
\operatorname{Cov}(\boldsymbol\delta)=\frac{p}{q}\operatorname{diag}(h_1^2,\ldots,h_d^2).
\]
Let $\ell(\mathbf h)=\mathcal L(g_\theta(\mathbf h),\mathbf y)$, where $g_\theta$ is the downstream part of the backbone. Assume that $\ell$ is twice continuously differentiable on the line segments from $\mathbf h$ to every possible masked activation. A second-order Taylor expansion yields
\[
\mathbb E[\ell(\mathbf h+\boldsymbol\delta)]
 =\ell(\mathbf h)+\frac12\operatorname{tr}\!\left(\nabla^2_{\mathbf h}\ell(\mathbf h)\operatorname{Cov}(\boldsymbol\delta)\right)+R,
\]
where the first-order term vanishes and the exact remainder is
\[
R=\mathbb E\!\left[\int_0^1(1-t)\boldsymbol\delta^\top
\bigl(\nabla^2\ell(\mathbf h+t\boldsymbol\delta)-\nabla^2\ell(\mathbf h)\bigr)
\boldsymbol\delta\,\mathrm dt\right].
\]
If the Hessian is Lipschitz with constant $K_\ell$ along these segments, then
$|R|\le K_\ell\mathbb E[\|\boldsymbol\delta\|^3]/6$.
Thus a quadratic activation loss has $R=0$, while the approximation for a nonlinear loss depends on the size of this remainder. Defining
\[
\Omega(\mathbf x;\theta)=\sum_{j=1}^{d}\frac{\partial^2\ell}{\partial h_j^2}(\mathbf h)h_j^2
\]
and using $p/q=\lambda(p)$ gives
\[
\mathbb E_{\mathbf B}[\ell(\widetilde{\mathbf h})]
 =\ell(\mathbf h)+\frac12\lambda(p)\Omega(\mathbf x;\theta)+R,
\]
which is the dropout quadratic approximation stated in the main paper after collecting the higher-order terms in $R$. For a nonlinear loss, neglecting $R$ requires sufficiently small curvature variation over the mask perturbations. The quadratic term acts as a non-negative sensitivity penalty in regions where the relevant curvature term $\Omega(\mathbf x;\theta)$ is non-negative.

\subsection{Regularization mapping and oracle characterization}
For $p\in[0,1)$, $q=1-p$ and $\lambda=p/q$, so $q=(1+\lambda)^{-1}$ and
\[
\frac{\mathrm d\lambda}{\mathrm dp}=\frac{1}{(1-p)^2}>0.
\]
For the mapper in the adaptive-capacity subsection, letting $a=\operatorname{softplus}(\gamma)>0$ gives
\[
\frac{\mathrm dp}{\mathrm d\hat s}=(p_{\max}-p_{\min})a\,\operatorname{sech}^2(a\hat s)>0.
\]
For fixed mapper parameters, $p$ and $\lambda(p)$ are increasing in the normalized unreliability score within a mini-batch. The corresponding quadratic correction is a non-negative regularization term when $\Omega\ge0$. The bounds $[p_{\min},p_{\max}]$ define an allocation envelope; a mapper with finite $a$ attains a maximum rate $p_{\min}+(p_{\max}-p_{\min})\tanh(a)$ within that envelope. The expected number of retained activation coordinates is $\mathbb E[\sum_{j=1}^d B_j]=(1-p)d$ for a layer of width $d$. This quantity does not change the number of trainable parameters.

\subsection{Assumptions and oracle allocation}
For $C_1,C_2>0$ and unreliability level $\sigma\ge 0$, define
\[
\mathcal{E}(\lambda,\sigma)=C_1\lambda^2+C_2\sigma^2(1+\lambda)^{-2},\qquad \lambda\in\Lambda=[\lambda_{\min},\lambda_{\max}],
\]
where $0\le\lambda_{\min}<\lambda_{\max}<\infty$. The interval is induced by the dropout bounds,
\[
\lambda_{\min}=\frac{p_{\min}}{1-p_{\min}},\qquad
\lambda_{\max}=\frac{p_{\max}}{1-p_{\max}}.
\]
Assume $\mathbb{E}[\sigma(\mathbf X)^2]<\infty$ and set $g(\sigma)=\Pi_\Lambda(\lambda^\star(\sigma))$, where $\lambda^\star$ minimizes the surrogate over $[0,\infty)$. The surrogate coefficients $C_1$ and $C_2$ are fixed positive constants over the population; all sample heterogeneity enters through $\sigma(\mathbf X)$. More explicitly, assume a scalar prediction error $e_\lambda=b\lambda+\sqrt{C_2}\sigma\varepsilon/(1+\lambda)$ with $b^2=C_1$, $\mathbb E[\varepsilon\mid\sigma]=0$, and $\mathbb E[\varepsilon^2\mid\sigma]=1$. The conditional mean is $b\lambda$, and the conditional variance is $C_2\sigma^2/(1+\lambda)^2$. Their squared-mean-plus-variance decomposition gives the stated surrogate. The bias term assumes a linear response with zero baseline bias. For a smooth bias function $B$, the expansion $B(\lambda)=B(0)+B'(0)\lambda+o(\lambda)$ motivates this choice when $B(0)=0$ and $b=B'(0)\ne0$. The surrogate uses this first-order response throughout the comparison interval. The residual factor is obtained from minimizing $\frac12(v-\sigma\varepsilon)^2+\frac\lambda2v^2$, whose minimizer is $\sigma\varepsilon/(1+\lambda)$. This scalar penalty is a response assumption for the fitted predictor, rather than a consequence of the dropout Taylor expansion. The inverted-dropout multiplier itself has variance $\lambda$ and unit mean. The allocation theorem is conditional on these response assumptions and the common positive coefficients $C_1,C_2$.

\subsection{Proof of the main excess-risk theorem}
\begin{theorem}[Excess surrogate risk under reliability heterogeneity]
Let $\sigma(\mathbf X)\ge 0$ satisfy $\mathbb E[\sigma(\mathbf X)^2]<\infty$, let $\Lambda=[\lambda_{\min},\lambda_{\max}]$ be the feasible interval induced by $p\in[p_{\min},p_{\max}]$, and let $g(\sigma)=\Pi_\Lambda(\lambda^\star(\sigma))$ be the projected minimizer of $\mathcal E(\lambda,\sigma)$. If $\operatorname{Var}[g(\sigma(\mathbf X))]>0$, then for every fixed $\lambda_{\mathrm{fix}}\in\Lambda$,
\[
\mathbb E\!\left[\mathcal E(\lambda_{\mathrm{fix}},\sigma(\mathbf X))-\mathcal E(g(\sigma(\mathbf X)),\sigma(\mathbf X))\right]\ge C_1\operatorname{Var}[g(\sigma(\mathbf X))]>0.
\]
\end{theorem}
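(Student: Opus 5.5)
The plan is to prove a pointwise quadratic-growth inequality for each fixed $\sigma$, and then average over $\mathbf X$. Fix $\sigma\ge0$ and write $\phi_\sigma(\lambda)=\mathcal E(\lambda,\sigma)$. On $[0,\infty)$ we have $\phi_\sigma''(\lambda)=2C_1+6C_2\sigma^2(1+\lambda)^{-4}\ge 2C_1$. Hence $\phi_\sigma$ is $2C_1$-strongly convex on the convex set $\Lambda\subset[0,\infty)$.

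The key intermediate claim is that $g(\sigma)=\Pi_\Lambda(\lambda^\star(\sigma))$ minimizes $\phi_\sigma$ over $\Lambda$. I will argue this by cases.
\begin{itemize}
\item If $\lambda^\star(\sigma)\in\Lambda$, then $g=\lambda^\star$ and $\phi_\sigma'(g)=0$.
\item If $\lambda^\star(\sigma)<\lambda_{\min}$, strict convexity makes $\phi_\sigma$ increasing on $[\lambda^\star,\infty)$. So $g=\lambda_{\min}$ and $\phi_\sigma'(g)\ge0$.
\item If $\lambda^\star(\sigma)>\lambda_{\max}$, then symmetrically $g=\lambda_{\max}$ and $\phi_\sigma'(g)\le0$.
\end{itemize}
In every case the variational inequality $\phi_\sigma'(g)(\lambda-g)\ge0$ holds for all $\lambda\in\Lambda$.

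Next I apply Taylor's theorem with integral remainder along the segment from $g$ to $\lambda\in\Lambda$:
\[
\phi_\sigma(\lambda)-\phi_\sigma(g)=\phi_\sigma'(g)(\lambda-g)+\int_g^\lambda(\lambda-t)\phi_\sigma''(t)\,\mathrm dt .
\]
The first term is nonnegative by the variational inequality. Bounding $\phi_\sigma''\ge2C_1$ in the integral gives $\phi_\sigma(\lambda)-\phi_\sigma(g(\sigma))\ge C_1(\lambda-g(\sigma))^2$ for every $\lambda\in\Lambda$. This boundary handling is the only real obstacle, and the case split above is how I plan to resolve it.

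Now substitute $\lambda=\lambda_{\mathrm{fix}}$ and $\sigma=\sigma(\mathbf X)$, and take expectations. Integrability holds because $\mathcal E(\lambda,\sigma)\le C_1\lambda_{\max}^2+C_2\sigma^2$ and $\mathbb E[\sigma^2]<\infty$. Also $g$ takes values in the bounded set $\Lambda$, so it has finite moments, and $g\circ\sigma$ is measurable. Measurability follows because $\lambda^\star(\sigma)$ is the unique root of the strictly increasing map $\lambda\mapsto C_1\lambda(1+\lambda)^3$ at level $C_2\sigma^2$, hence continuous in $\sigma$, and $\Pi_\Lambda$ is continuous. Taking expectations therefore yields
\[
\mathbb E[\mathcal E(\lambda_{\mathrm{fix}},\sigma)-\mathcal E(g(\sigma),\sigma)]\ge C_1\,\mathbb E[(\lambda_{\mathrm{fix}}-g)^2].
\]

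Finally, the bias–variance identity gives
\[
\mathbb E[(\lambda_{\mathrm{fix}}-g)^2]=\operatorname{Var}(g)+(\lambda_{\mathrm{fix}}-\mathbb E g)^2\ge\operatorname{Var}(g).
\]
Combining this with $C_1=b^2>0$ and the hypothesis $\operatorname{Var}[g(\sigma(\mathbf X))]>0$ gives the strict positivity of the bound.
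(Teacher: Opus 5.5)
Your proposal is correct and follows the paper's proof: a pointwise quadratic-growth bound $\mathcal E(\lambda_{\mathrm{fix}},\sigma)-\mathcal E(g(\sigma),\sigma)\ge C_1(\lambda_{\mathrm{fix}}-g(\sigma))^2$, obtained from the first-order variational inequality at the projected minimizer together with $\partial_\lambda^2\mathcal E\ge 2C_1$. It then takes expectations and applies $\mathbb E[(\lambda_{\mathrm{fix}}-g)^2]=\operatorname{Var}(g)+(\lambda_{\mathrm{fix}}-\mathbb E g)^2$. Your case split, integral-remainder step, and measurability/integrability remarks make explicit what the paper only asserts, namely that $\Pi_\Lambda(\lambda^\star(\sigma))$ is the constrained minimizer and that the growth bound follows from strong convexity.
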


\begin{proof}
For fixed $\sigma$, differentiation gives
\[
\partial_\lambda\mathcal{E}=2C_1\lambda-2C_2\sigma^2(1+\lambda)^{-3},\qquad
\partial^2_{\lambda}\mathcal{E}=2C_1+6C_2\sigma^2(1+\lambda)^{-4}>0.
\]
Thus $\mathcal{E}(\cdot,\sigma)$ is strictly convex and has a unique minimizer on $\Lambda$, namely $g(\sigma)$. The stationarity equation for the minimizer over $[0,\infty)$ is $C_1\lambda^\star(1+\lambda^\star)^3=C_2\sigma^2$; its left-hand side is strictly increasing for $\lambda\ge0$, so $\lambda^\star(\sigma)$ and hence $g(\sigma)$ are nondecreasing in $\sigma$, with strict increase before projection clips the value at a boundary.

Let $m=\mathbb E[g(\sigma(\mathbf X))]$. Boundedness of $g$ and $\mathbb E[\sigma(\mathbf X)^2]<\infty$ ensure that the risks and squared allocation differences below have finite expectations. The first-order optimality condition on $\Lambda$ gives $\partial_\lambda\mathcal E(g(\sigma),\sigma)(\lambda-g(\sigma))\ge0$ for every $\lambda\in\Lambda$. Together with the uniform lower bound $\partial^2_{\lambda}\mathcal{E}\ge 2C_1$, this implies, for any fixed $\lambda_{\mathrm{fix}}\in\Lambda$,
\[
\mathcal{E}(\lambda_{\mathrm{fix}},\sigma)-\mathcal{E}(g(\sigma),\sigma)\ge C_1\bigl(\lambda_{\mathrm{fix}}-g(\sigma)\bigr)^2.
\]
Taking expectations and using $\mathbb E[(\lambda_{\mathrm{fix}}-g)^2]=\operatorname{Var}(g)+(\lambda_{\mathrm{fix}}-m)^2\ge\operatorname{Var}(g)>0$ proves the theorem inequality.
\end{proof}

The condition $\operatorname{Var}[g(\sigma(\mathbf X))]>0$ captures genuine allocation heterogeneity; when the projected oracle is constant, a uniform configuration coincides with the oracle. The theorem permits saturation at either endpoint of $\Lambda$.

\putbib[main]
\end{bibunit}

\end{document}